\documentclass[journal]{IEEEtran}

\usepackage[utf8]{inputenc}
\usepackage[english]{babel}
\usepackage{graphicx}
\graphicspath{{./figures/}}
\usepackage[caption=false,font=footnotesize]{subfig}

\usepackage{amsmath,amsthm,amssymb}
\usepackage{mathrsfs}
\usepackage{mathtools}
\mathtoolsset{showonlyrefs}
\usepackage{mathrsfs}
\usepackage{array}
\usepackage{xcolor}
\usepackage{algorithm}
\usepackage{algpseudocode}
\usepackage{csquotes}
\usepackage{comment}
\usepackage{bbm}

\newtheorem{theorem}{Theorem}
\newtheorem{definition}{Definition}

\newtheorem{remark}{Remark}

\newtheorem{proposition}{Proposition}

\newtheorem{example}{Example}

\newcommand{\R}{\mathbb{R}}

\newcommand{\tr}{^T}

\newcommand{\mc}{\mathcal}

\newcommand{\Deltat}{\Delta t}
\newcommand{\xk}{x^{(k)}}

\newcommand{\alphak}{\alpha^{(k)}}
\newcommand{\xkn}{x^{(k+n)}}
\newcommand{\ukn}{u^{(k+n)}}
\newcommand{\hatukn}{\hat u^{(k+n)}}

\newcommand{\alphakn}{\alpha^{(k+n)}}

\newcommand{\GammaMIQP}{\Gamma_\text{MIQP}}
\newcommand{\GammabarMIQP}{\bar\Gamma_\text{MIQP}}
\newcommand{\GammaQP}{\Gamma_\text{QP}}
\newcommand{\LMIQP}{L_\text{MIQP}}
\newcommand{\LQP}{L_\text{QP}}
\newcommand{\x}[1]{x^{(#1)}}

\DeclareMathOperator*{\minimize}{minimize~}
\DeclareMathOperator*{\subjto}{subject\,to~}

\newcolumntype{L}[1]{>{\raggedright\let\newline\\\arraybackslash\hspace{0pt}}m{#1}}
\newcolumntype{C}[1]{>{\centering\let\newline\\\arraybackslash\hspace{0pt}}m{#1}}
\newcolumntype{R}[1]{>{\raggedleft\let\newline\\\arraybackslash\hspace{0pt}}m{#1}}

\hyphenation{op-tical net-works semi-conduc-tor}

\begin{document}

\title{A Resilient and Energy-Aware Task Allocation Framework for Heterogeneous Multi-Robot Systems}

\author{Gennaro Notomista$^{1}$, Siddharth Mayya$^{2}$, Yousef Emam$^{1}$,\\Christopher Kroninger$^{3}$, Addison Bohannon$^{3}$, Seth Hutchinson$^{1}$, Magnus Egerstedt$^{1}$
\thanks{\textcopyright 2021 IEEE.  Personal use of this material is permitted.  Permission from IEEE must be obtained for all other uses, in any current or future media, including reprinting/republishing this material for advertising or promotional purposes, creating new collective works, for resale or redistribution to servers or lists, or reuse of any copyrighted component of this work in other works.}
\thanks{This research was sponsored by the Army Research Lab through ARL DCIST CRA W911NF-17-2-0181.}%
\thanks{$^{1}$G. Notomista, Y. Emam, S. Hutchinson, and M. Egerstedt are with the Institute for Robotics and Intelligent Machines, Georgia
Institute of Technology, Atlanta, GA 30332, USA,
{\tt\small\{g.notomista, emamy, seth, magnus\}@gatech.edu.}}%
\thanks{$^{2}$S. Mayya is with the GRASP Laboratory, University of Pennsylvania, Philadelphia, PA, USA {\tt\small mayya@seas.upenn.edu}}%
\thanks{$^{3}$C. Kroninger and A. Bohannon are with the Combat Capabilities Development Command, Army Research Laboratory (CCDC ARL) {\tt\small\{christopher.m.kroninger.civ,}\newline{\tt\small addison.w.bohannon.civ\}@mail.mil}}%
}

\maketitle

\begin{abstract}
In the context of heterogeneous multi-robot teams deployed for executing multiple tasks, this paper develops an energy-aware framework for allocating tasks to robots in an online fashion. With a primary focus on long-duration autonomy applications, we opt for a survivability-focused approach. Towards this end, the task prioritization and execution---through which the allocation of tasks to robots is effectively realized---are encoded as constraints within an optimization problem aimed at minimizing the energy consumed by the robots at each point in time. In this context, an allocation is interpreted as a prioritization of a task over all others by each of the robots. Furthermore, we present a novel framework to represent the heterogeneous capabilities of the robots, by distinguishing between the features available on the robots, and the capabilities enabled by these features. By embedding these descriptions within the optimization problem, we make the framework resilient to situations where environmental conditions make certain features unsuitable to support a capability and when component failures on the robots occur. We demonstrate the efficacy and resilience of the proposed approach in a variety of use-case scenarios, consisting of simulations and real robot experiments.    
\end{abstract}

\begin{IEEEkeywords}
Task Planning, Path Planning for Multiple Mobile Robots or Agents, Failure Detection and Recovery, Energy and Environment-Aware Automation, Multi-Robot Systems, Robust/Adaptive Control of Robotic Systems
\end{IEEEkeywords}

\IEEEpeerreviewmaketitle
\section{Introduction}
\label{sec:intro}

\IEEEPARstart{M}{ulti}-robot task allocation (MRTA) is an active research topic given the increasing deployment of multi-robot systems in dynamic and partially unknown out-of-laboratory environments (see, e.g., \cite{taxonomy,korsah2013comprehensive} and references therein). Often, the design of MRTA algorithms is tailored around particular challenges that the multi-robot team is expected to face in the environment. For instance, many envisioned applications require robots with limited energy resources to operate effectively for long periods of time, necessitating the development of \emph{survivability}-focused energy-aware algorithms for task execution as well as allocation \cite{egerstedt2018robot,notomista2019optimal}. Similarly, robot \emph{heterogeneity} has received explicit focus within the MRTA literature, as teams equipped with different types of sensors, actuators, and communication devices can enable the execution of a wider range of tasks~\cite{parker1994heterogeneous,iocchi2003distributed,prorok2017impact}. \par 
Heterogeneity can also contribute favorably to another desirable property of a MRTA framework: \emph{resilience}, typically interpreted as the ability of the allocation algorithm to react to component failures on the robots, varying environmental conditions, and other non-idealities in the operating conditions~\cite{ramachandran2019resilience}. Consider an example scenario where a heterogeneous multi-robot system consisting of ground and aerial mobile robots, is tasked with carrying objects to specified locations in the environment. Unexpected weather conditions might lead to high speed winds in the environment, which might prevent aerial robots from making further progress towards the accomplishment of their goal. In such a case, the heterogeneity in the capabilities of the robots could be leveraged through a dynamic re-allocation of the transport task to ground robots.\par

It should be noted that, in light of such a scenario, the multi-robot task \emph{allocation} problem can be considered as being inextricably linked to the \emph{execution} of the tasks by the robots. This is especially true when considering the deployment of survivability-focused multi-robot systems over long time horizons, where evolving or newly detected environmental phenomena can affect the task allocation. \par 

This paper presents a dynamic task allocation and execution framework for multi-robot systems which explicitly accounts for the aforementioned survivability and heterogeneity considerations while being demonstrably resilient to robot failures and changes in environmental conditions. To encode heterogeneity, we propose a novel framework for representing the suitability that robots have for different tasks. This is done by explicitly considering the \emph{capabilities} required to perform the tasks (e.g., flight or high speed) as well as the \emph{features} available on the robots (e.g., a specific type of sensors, actuators, or communication equipment) which support these capabilities. We leverage this representation in a constraint-based optimization framework whose solution at each point in time yields (i) a dynamic allocation of tasks to robots through a prioritization scheme, and (ii) control inputs to each robot which ensure the execution of the tasks in accordance with the optimized priorities \cite{notomista2019optimal}.

Existing task allocation techniques typically define both robots and tasks in terms of the capabilities available on the robots and required to perform the tasks \cite{prorok2017impact,GerkeyB.P.2003MtaA}. In contrast, our approach distinguishes between the features available on the robots and the capabilities that these features enable. We demonstrate that this explicit representation contributes to the resilience of the proposed dynamic task allocation method, leveraging the fact that multiple bundles of robot features can satisfy the same capability. Consequently, dynamic readjustments of Robot-to-Feature and Feature-to-Capability mappings can enhance the resilience of the system by capturing scenarios in which (i) environmental conditions make a certain feature more suitable to support a capability, and (ii) component failures on robots occur, affecting the available features. The pertinent question then becomes: \emph{how can we design a survivability-focused dynamic allocation paradigm based on these descriptions of heterogeneity (at the feature level as well as the capability level) with demonstrably resilient operations?} \par 
Following preliminary work in \cite{notomista2019constraint,notomista2019optimal,emam2020adaptive} on adaptive and minimum-energy task execution and allocation, in this paper, we opt for a \emph{constraint-based} approach, where the execution and prioritization of tasks are encoded as constraints within an optimization problem. Such a formulation has demonstrated both the ability of accounting for the energy limitation that robots have while executing tasks \cite{notomista2018persistification,notomista2020persistification,fouad2020energy}, and a higher flexibility and robustness in scenarios where the operating conditions of the robots are only partially known or may change \cite{emam2020adaptive}---events which are especially likely when considering long-duration autonomy applications. Since energy considerations are of paramount importance in our framework, the execution of $n_t$ different tasks by a robot can be posed as the following energy-minimization problem:
\begin{align} \label{eqn:te_1}
\minimize_{u} & \|u\|^2 \\
\subjto & c_{\text{task}_j}(x,u) \geq 0, \quad\forall j \in \{1,\ldots,n_t\},
\end{align}
where $x$ is the current state of the robot, $u$ is the control effort expended by it, and $c_{\text{task}_j}(x,u) \geq 0$ denotes a constraint function which enforces the execution of task $j$. The feasibility of such an optimization problem is ensured by the introduction of a slack variable $\delta\in\R^{n_t}$:
\begin{align}\label{eqn:te_2}
\minimize_{u,\delta} & \|u\|^2 + \|\delta\|^2 \\
\subjto & c_{\text{task}_j}(x,u) \geq -\delta_{j}, \quad\forall j \in \{1,\ldots,n_t\},
\end{align}
where $\delta = [\delta_{1},\delta_{2},\ldots,\delta_{n_t}]^T$ is a vector with positive components representing the extent to which the robot can violate the constraints corresponding to each of the tasks.

The applicability of this framework for dynamic task allocation via prioritization is enabled by the observation that relative constraints among the components of $\delta$ can allow a robot to perform one task more effectively than others. For instance, if $\delta_{m} \ll \delta_{n}, \forall n \neq m$, then the robot will execute task $m$ with priority higher than all the other tasks: this represents an \emph{allocation via prioritization}. Such a prioritization can be then encoded via an additional constraint $K\delta \leq 0$ in \eqref{eqn:te_2}, where the matrix $K$ encodes the relative inequalities among the slack variables. \par 
Within the above described formulation, the allocation problem then consists of designing the matrix $K$ for each robot such that the heterogeneity of the robots---intended as their different ability of performing different tasks---are appropriately accounted for. To this end, we propose a modification of the minimum energy optimization problem presented in \eqref{eqn:te_2} where priority matrices $K$ are automatically generated. Moreover, by means of an additional constraint, the optimization problem ensures that the minimum amount of capabilities required for the successful execution of each task is met by the allocation. \par 
This formulation yields a mixed-integer quadratic program (MIQP) which not only generates the task allocation of the team (encoded via the prioritization matrices $K$) but also the control inputs $u$ which the robots can use to execute the tasks. Since MIQPs can be computationally intensive to solve, we further present a mixed centralized/decentralized computational architecture which allows a central coordinator to transmit the task priorities to each robot. The robots can then solve the simpler convex quadratic program described in \eqref{eqn:te_2} (with the additional constraint $K\delta \leq 0$) to generate their control inputs in real time. \par 
We demonstrate that non-idealities such as environmental disturbances and/or component failures on the robots can be effectively accounted for in our framework, enabling, this way, resilient task allocation behaviors. It is informative to know how, also in nature, such behaviors emerge from the concepts of survivability and heterogeneity. In fact, these concepts play a central role in ecological studies as well, as highlighted by Bridle and van Rensburg in \cite{bridle2020discovering}:
\begin{displayquote}
\itshape
For some groups of organisms, we can now integrate genomic data with environmental and demographic data to test the extent to which ecological resilience depends on evolutionary adaptation. Such data will allow researchers to estimate when and where biodiversity within a species has the power to rescue ecological communities from collapse due to climate change and habitat loss.
\end{displayquote}
Drawing an analogy with the task allocation framework we present in this paper, the features of the robots (\emph{genomic data}) and the resulting heterogeneity (\emph{biodiversity}) are leveraged to introduce a degree of resilience (\emph{ecological resilience}) into the framework, which results in a natural adaptation of the multi-robot system to failures (\emph{collapse}) due to the dynamic environments in which it operates (\emph{climate change and habitat loss}).

The remainder of the paper is organized as follows. Section \ref{sec:lit_review_prob_form} introduces the problem formulation and places it within the context of existing literature. Section \ref{sec:encRobotHeter} develops a novel framework for encoding robot heterogeneity. In Section \ref{sec:te}, we present the main constraint-based minimum energy task allocation paradigm, and demonstrate its resilient capabilities in two distinct failure scenarios. Section \ref{sec:mixedc-dec} touches upon the performance guarantees of the developed task allocation paradigm and highlights a mixed centralized/decentralized framework to enable the task allocation and execution. In Section \ref{sec:exp}, we present example use-case scenarios highlighting the resilient allocation and execution of multiple tasks. Section \ref{sec:conc} concludes the paper.

\section{Problem Formulation and Related Work}
\label{sec:lit_review_prob_form}

\subsection{Problem Formulation} 
\label{subsec:problem_formulation}
Consider a team of $n_r$ heterogeneous robots which are deployed in an environment and required to execute $n_t$ tasks. Each robot is endowed with a subset of $n_f$ available features (such as camera, LIDAR, and wheels). These features allow the robots to exhibit  a subset of $n_c$ capabilities (such as flight and navigation). Certain capabilities can be achieved by multiple combinations of feature bundles, whereas tasks require a given set of capabilities in order to be executed. The successful execution of each task is conditioned upon a minimum number of robots with specific capabilities being allocated to it. In this paper, we consider \emph{extended set-based tasks} \cite{notomista2020set}, which include tasks whose execution can be encoded as the minimization of a cost function. \par 
Given the above problem setup, the paper then concerns itself with (i) allocating tasks among the robots such that the minimum requirements for each task are met, and (ii) executing the tasks by synthesizing an appropriate control input for the robots. Both these objectives are met while minimizing the control effort expended by the robots. Additionally, we show how the resulting task allocation and execution framework exhibits resilience properties against varying environment conditions and failures on the robots. 

\subsection{Related Work}
\label{subsec:lit_review}
In this section, we briefly review the relevant literature on MRTA, focusing specifically on the scenarios where tasks might require coordination among multiple robots. For a comprehensive survey on a broader class of task allocation problems, see \cite{taxonomy,korsah2013comprehensive,nunes2017taxonomy} and references within. \par 
In~\cite{parker1994heterogeneous}, the authors developed a framework for assigning heterogeneous robots to a set of tasks by switching between different predefined behavioral routines. To tackle the challenges of computational complexity associated with such discrete assignment-based approaches, market-based methods~\cite{lin2005combinatorial,parker2007allocation,otte2020auctions,irfan2016auction} have been proposed, where robots can split tasks among them via bidding and auctioning protocols. In scenarios where a large number of robots with limited capabilities are present, decentralized stochastic approaches have been developed where allocations are described in terms of population distributions and are achieved by modifying transition rates among tasks \cite{mather2011macroscopic,berman2009optimized,mayya2019closed}.\par 

As multi-robot tasks get more complex and diverse, robots have been envisioned to take up specialized roles within the team, necessitating the characterization of resource diversity and access within the multi-robot system \cite{abbas2014characterizing,balch2000hierarchic}. In~\cite{prorok2017impact,ravichandar2020strata}, the authors define a community of robot species (robot type), each endowed with specific capabilities, and develop an optimization-based framework to allocate sufficient capabilities to each task. This is realized using transition rates, which the robots use to switch between the different tasks. In comparison, our approach explicitly models how different feature bundles available to the robots might endow them with capabilities required to execute a task. This has the benefit of enduing the allocation framework with a degree of resilience, as will be demonstrated in Section~\ref{subsec:resilience}.\par 
Indeed, adaptivity and resilience are commonly studied aspects of task allocation in multi-robot systems (see, e.g.,~\cite{iocchi2003distributed,lerman2006analysis,palmieri2018self,FatimaS2001Atra}). Typically, adaptivity is incorporated by defining a time-varying propensity of robots to participate in different tasks. These measures of utility are based on predefined objective functions and aim to capture the effectiveness of the robots at performing tasks in real-time~\cite{iijima2017adaptive}. Such frameworks, however, do not account for drastic unexpected failures in the capabilities of the robots, adversarial attacks, or varying environmental conditions that might affect the operations of the robots. Such considerations point to the question of resilience in multi-robot systems, which has been explored in the context of coordinated control tasks~\cite{saulnier2017resilient}, as well as resource-availability in heterogeneous systems~\cite{ramachandran2019resilience}.

Building up on our previous work, presented in \cite{notomista2019constraint,notomista2019optimal,emam2020adaptive}, in this paper we present three distinct novel developments towards the achievement of a resilient task allocation and execution framework. These are: (i) the explicit feature- and capability-based models of robot heterogeneity, which allows for greater flexibility in allocating tasks; (ii) an optimization-based task execution framework which allows robots to execute prioritized tasks accounting for the different features and capabilities they possess; and (iii) a minimum-energy task allocation framework---geared towards long-duration autonomy applications---which leverages the real-time performance of robots in executing the tasks to effectively realize a resilient task allocation framework.

\begin{table*}[t]
 \caption{Notation}
 \label{tab:notation}
 \centering
 \begin{tabular}{|C{4cm}|C{6cm}|C{2cm}|}
 	\hline
    \multicolumn{1}{|c|}{Symbol} & \multicolumn{1}{c|}{Description} & \multicolumn{1}{c|}{Section}\\
    \hline
	$n_r$ & Number of robots & \ref{subsec:problem_formulation}\\
	$n_t$ & Number of tasks & \ref{subsec:problem_formulation}\\
	$n_c$ & Number of capabilities of all robots & \ref{subsec:problem_formulation}\\
	$n_f$ & Number of features of all robots & \ref{subsec:problem_formulation}\\
	$T \in \{0,1\}^{n_t \times n_c}$ & Capability-to-Task mapping& \ref{subsec:tasktocapability}\\
	$A \in \{0,1\}^{n_f \times n_r}$ & Robot-to-Feature mapping& \ref{subsec:robot-to-feature}\\
	$H_k \in [0,1]^{n_{c_k} \times n_f}$ & Feature-to-Capability mapping & \ref{cap2feat}\\
	$F \in \mathbb{R}^{n_c \times n_r}$ & Robot-to-Capability mapping & \ref{subsec:cap2robot}\\
	$S_i \in\R^{n_t\times n_t}$ & Specialization matrix of robot $i$ & \ref{subsec:specializationMatrix}\\
	$\alpha\in\{0,1\}^{n_t\times n_r}$ & Matrix of task priorities & \ref{subsec:task_prioritization}\\
	$\delta_i\in\R^{n_t}$ & Task relaxation for robot $i$ & \ref{subsec:constraint-task-execution}\\
	$\delta\in\R^{n_t n_r}$ & Vector of task relaxation parameters & \ref{subsec:task_prioritization}\\
	$x_i\in\R^{n_x}$ & State of robot $i$ & \ref{sec:te}\\
	$u_i\in\R^{n_u}$ & Control input of robot $i$ & \ref{sec:te}\\
	$x\in\R^{n_x n_r}$ & Ensemble state of $n_r$ robots & \ref{subsec:constraint-task-execution}\\
	$u\in\R^{n_u n_r}$ & Ensemble control input of $n_r$ robots & \ref{sec:te}\\
    \hline
 \end{tabular}
\end{table*}

\section{Encoding Robot Heterogeneity}
\label{sec:encRobotHeter}
The objective of this section is to develop a framework which generates a feasible mapping between robots and their assigned tasks, while explicitly accounting for the heterogeneity in the robots and the different capability requirements of the tasks. We define a novel notion of feasibility based on a newly added feature layer in the description of the robots. Intuitively, a feasible assignment needs to take into account the capabilities needed for the tasks along with the features possessed by the robots. For example, assigning a ground vehicle to an aerial-surveillance task would be considered an unfeasible assignment. Shown in Fig.~\ref{fig:quad} is an example of the three mappings to be introduced in the next subsections. Starting from the left, we begin by introducing the Capability-to-Task mapping ($T$) which contains the task specifications. In turn, each of those capabilities requires any one of various feature-bundles to be exhibited. This is captured in the Feature-to-Capability mapping ($H_k$) through the use of hypergraphs. In subsection~\ref{subsec:robot-to-feature}, we define the Robot-to-Feature mapping ($A$) which maps each robot to the set of features it possesses. Finally, we introduce a way to obtain the Robot-to-Capability mapping ($F$) which is directly used in the task allocation and execution framework. In Table~\ref{tab:notation}, we summarized this notation, together with the one used throughout the paper.

In the context of the presented model of robot heterogeneity based on capabilities and features, we will considered tasks as uniquely defined by a single set of capabilities it requires to be executed. Moreover, we will assume that capabilities (such as flying) are determined by features that the robot possess (such as fixed wings or propellers). The following example is aimed at stressing the difference between tasks, capabilities, and features, which will be used throughout this paper.

\begin{example}[Tasks, capabilities, features]
	Consider aerial and amphibious vehicle robots deployed in an environment where there is a river. In this scenario, crossing the river is not a task as it is not determined by a single set of capabilities. On the other hand, examples of tasks are flying over the river (Task 1) and swimming from one side of the river to the opposite one (Task 2). Aerial robots in the form of fixed-wings aircrafts and quadrotors possess the features (fixed wings and propellers) which endow them with the capabilities to perform Task 1; amphibious vehicles, instead, are able to execute Task 2 thanks to the features (waterproof body and water propellers) which determine the capability of swimming.
\end{example}

\subsection{Capability-to-Task Mapping}
\label{subsec:tasktocapability}

In many applications such as search and rescue and flexible manufacturing, it is necessary for a heterogeneous team of robots to simultaneously accomplish various tasks each requiring a different set of capabilities. For example, a task requiring the delivery of a product from one point to another may require two capabilities: packaging and transportation. We therefore define the mapping from the set of tasks at hand to their respective capabilities as 
\begin{equation} \label{task2cap}
T \in \{0,1\}^{n_t \times n_c},
\end{equation}
where $T_{tk} = 1$ if and only if task $t$ requires capability $k$, and $n_t$ and $n_c$ denote the numbers of tasks and capabilities respectively. Note that the values of $T$ need not necessarily be binary: in Section~\ref{subsec:weightsExt} we present an extension including weights, i.e. with $T_{tk} \in \R_{\ge0}$. In Fig.~\ref{fig:quad} we present an example setup of an assignment problem consisting of two tasks and three capabilities denoted by $t_t$ and $c_k$, respectively. The mapping from tasks to capabilities is a graphical representation of the matrix $T$ in the form of a bipartite graph: a graph whose nodes are split into two disjoint sets and whose edges contain a single node from each of those two sets. On the left-hand side of Fig.~\ref{fig:quad}, those two sets are the tasks and the capabilities, and the information contained in the graph yields the following mapping $T$:
\begin{equation}
T =
\begin{bmatrix}
     1 & 1 & 0 \\
     0 & 0 & 1 \\
\end{bmatrix} 
\end{equation}
Therefore, by looking at the edges incident to $t_1$ in Fig.~\ref{fig:quad} or at the first row of $T$, one can deduce that capabilities $c_1$ and $c_2$ are required by task $t_1$. 

\begin{figure}[t]
\centering
 \includegraphics[width=0.48\textwidth]{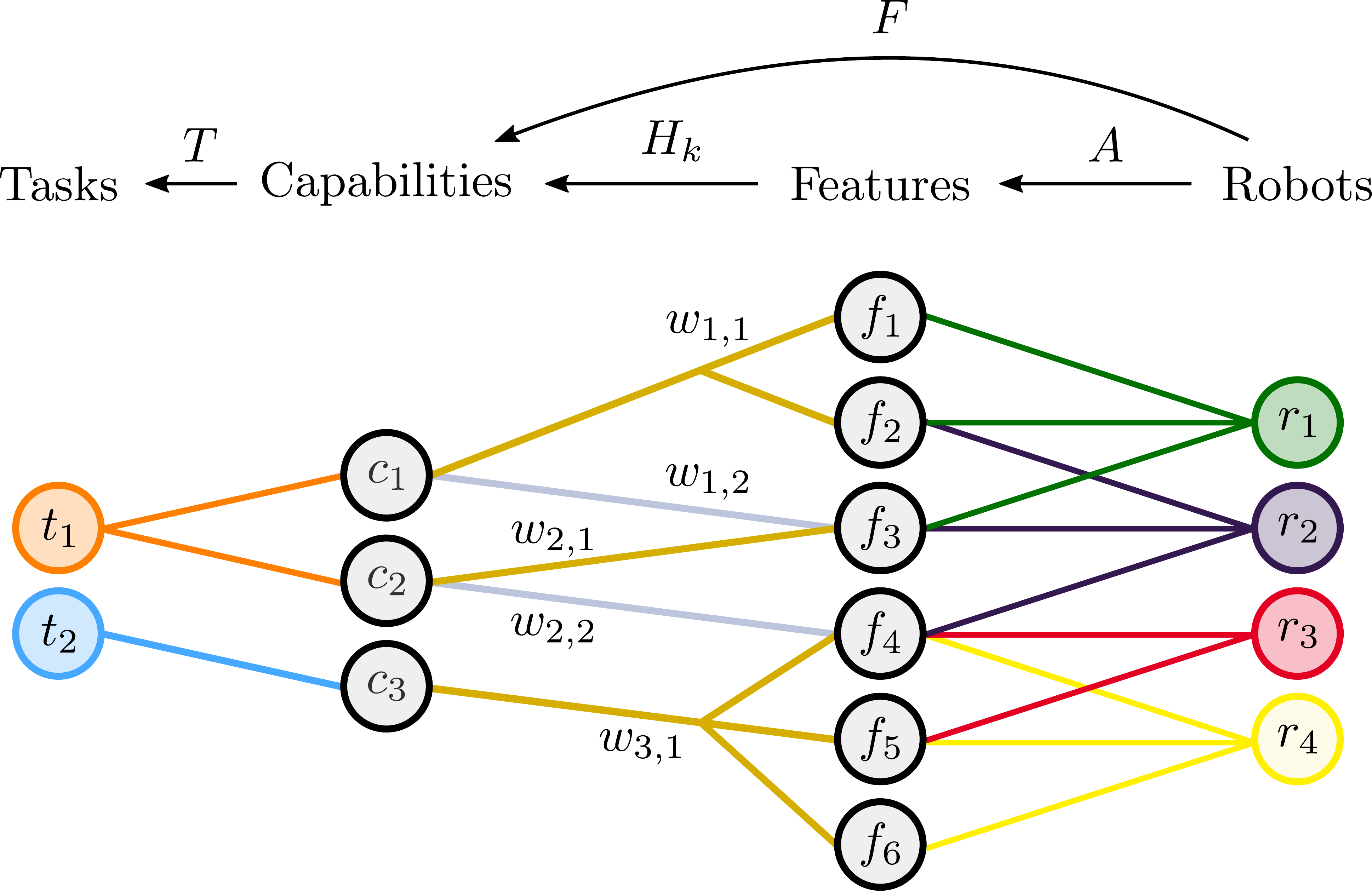}
 \caption{Example of scenario including 2 tasks, 3 capabilities, 6 features and 4 robots shown from left to right. The capabilities to features mapping is shown through the gold and silver hyperedges. Note that not all of the hyperedges need to have the same cardinality.}
 \label{fig:quad}
\end{figure}

\subsection{Robot-to-Feature Mapping} \label{subsec:robot-to-feature}
As mentioned in Section~\ref{sec:intro}, each robot available for assignment possesses a variety of features. For example, an e-puck's features include an IMU and a CMOS camera \cite{epuck}. Therefore, we define the following binary mapping from robots to their respective features:
\begin{equation} \label{feat2agent}
A \in \{0,1\}^{n_f \times n_r},
\end{equation}
where $A_{ij} = 1$ if and only if robot $j$ possesses feature $i$, and $n_r$ and $n_f$ denote the number of robots and features, respectively. Continuing with the example from Fig.~\ref{fig:quad}, the right-most bipartite graph yields the following Robot-to-Feature mapping:
\begin{equation} \label{eq:a_matrix}
A =
\begin{bmatrix}
     1 & 0 & 0 & 0 \\
     1 & 1 & 0 & 0 \\
     1 & 1 & 0 & 0 \\
     0 & 1 & 1 & 1 \\
     0 & 0 & 1 & 1 \\
     0 & 0 & 0 & 1 \\
\end{bmatrix}. 
\end{equation}
By looking at the first column of matrix $A$ above, one can deduce that robot $r_1$ possesses features $f_1$, $f_2$ and $f_3$. Now that we have defined both the Capability-to-Task and Robot-to-Feature mappings, we are ready to introduce the Feature-to-Capability mapping in the following subsection. 

\subsection{Feature-to-Capability Mapping} \label{subsec:cap2feat}
When considering heterogeneous multi-robot systems, it is important to note that two non-identical robots may be able to support the same capability.  In other words, certain robots possessing different sensors and actuators can be interchangeable when it comes to supporting a specific capability. This gives the rise to the need of associating multiple bundles of features to the same capability in a distinguishable manner. To meet this need, we use the notion of a bipartite hypergraph to define the Feature-to-Capability mapping. 

A hypergraph is a graph whose edges are not restricted to a cardinality of two. Hence, we can use one hyperedge to associate a capability with one of the feature bundles that can support it. The mapping between capabilities and features in the middle of Fig.~\ref{fig:quad} is an example of a bipartite hypergraph. The top edge (colored golden) mapping $c_1$ to $f_1$ and $f_2$ indicates that, together, these two features can support capability $c_1$. Note that feature bundles vary in size and consequently, so does the cardinality of the different hyperedges. Therefore, one cannot define a single matrix carrying the information of the entire Feature-to-Capability mapping. This is due to the fact that there is no consistency in terms of edge cardinality nor in the number of edges incident to each capability. Each capability requires its own mapping from its respective hyperedges to the feature space.

Therefore, we define the following row-stochastic matrix, i.e. a matrix where each row sums to $1$:
\begin{equation} \label{cap2feat}
H_k \in [0,1]^{n_{c_k} \times n_f},
\end{equation}
where $k$ denotes the capability index, $H_{k,ij} \neq 0$ if and only if feature $j$ belongs to the feature bundle denoted by hyperedge $i$. $n_{c_k}$ and $n_f$ denote the number of hyperedges incident to capability $k$ and the number of features, respectively.
Revisiting the example setup in Fig.~\ref{fig:quad}, the mappings from capabilities $c_1$, $c_2$ and $c_3$ to the feature space respectively yield:
\begin{equation}
H_1 =
\begin{bmatrix}
     1/2 & 1/2 & 0 & 0 & 0 & 0 \\
     0 & 0 & 1 & 0 & 0 & 0 \\
\end{bmatrix},
\end{equation}
\begin{equation}
H_2 =
\begin{bmatrix}
    0 & 0 & 1 & 0 & 0 & 0 \\ 
    0 & 0 & 0 & 1 & 0 & 0
\end{bmatrix},
\end{equation}
\begin{equation}
H_3 =
\begin{bmatrix}
     0 & 0 & 0 & 1/3 & 1/3 & 1/3 
\end{bmatrix}.
\end{equation}
As explained in the following subsection, normalizing the rows of $H_k$ allows us to verify if the requirements for each capability are met, regardless of the varying feature-capability edge cardinalities. In the next subsection, we utilize the above developed framework to create a mapping from robots to capabilities, which enables task allocation in Section~\ref{sec:te}.

\subsection{Mapping Robots to Capabilities Directly} \label{subsec:cap2robot}

The MRTA algorithm presented in this paper can be referred to as ST-MR-IA (Single-Task robots, Multi-Robot tasks, Instantaneous Assignment) \cite{taxonomy}: in fact, (i) through prioritization, each robot is assigned to a single task, (ii) the tasks can be executed by multiple robots, in a coordinated or independent fashion, and (iii) the allocation of tasks to robots is carried out at each time instant, without planning for future allocations. As discussed in Section~\ref{subsec:lit_review}, previous approaches to solving ST-MR-IA MRTA problems assume knowledge of the direct mapping from capabilities to robots encoded by a matrix
\begin{equation}
    F \in \mathbb{R}^{n_c \times n_r}
\end{equation}
where $F_{kj} \neq 0$ if and only if robot $j$ can support capability $k$. Therefore, in this subsection, we state the required condition under which robot $j$ can indeed support capability $k$, and derive the matrix $F$ required by such algorithms based on this condition. Notice that the framework only accounts for a finite number of capabilities and features relevant to the required tasks, so the computation of $F$ remains tractable.

As mentioned above, a capability $k$ can be supported by a number of feature bundles. Consequently, a robot must possess all the features in at least one of the bundles associated with a capability in order to support it. Hence, we say robot $i$ supports capability $k$ if and only if it possesses all the features within a hyperedge associated with capability $k$. For example, in Fig.~\ref{fig:quad}, robot $r_1$ can support capability $c_1$ since it possesses features $f_1$ and $f_2$ included in the top hyperedge. On the other hand, robot $r_3$ cannot support capability $c_3$ since it only possesses features $f_4$ and $f_5$ but not $f_6$. We define the feasibility vector $F_k$ capturing which robots can satisfy capability $k$ as:
\begin{equation} \label{eq:cap2agent}
F_k = \max(\mathrm{kron}_1({H_{k}A})),
\end{equation}
where $\mathrm{kron}_n$ denotes the shifted Kronecker delta function
\begin{equation}
\label{eq:kron}
\mathrm{kron}_n(x) = \begin{cases} 
      1 & \text{if } x = n \\
      0 & \text{otherwise}
   \end{cases}
\end{equation}
applied element-wise. The function $\mathrm{kron}_1$ is introduced to eliminate cases where robots have an incomplete portion of the features in a hyperedge. Moreover, the $\max$ operator is intended column-wise, and serves to check whether a robot possesses all the features from at least one bundle. Note that using the $\max$ operator in the case of a robot satisfying a capability through multiple hyperedges selects only one of those edges, which will become relevant when we introduce weights in the next section.

Shifting our attention to the example from Fig.~\ref{fig:quad},  we can compute the feasibility vector $F_3$ corresponding to $c_3$:
\begin{equation}
H_{3}A =
\begin{bmatrix}
     0 & 1/3 & 2/3 & 1
\end{bmatrix}
\end{equation}
whose ${ij}$-th component is the proportion of features that robot $j$ possesses belonging to hyperedge $i$ incident to capability $k$. Therefore, in this case, robot $r_3$ possesses only $2/3$ of the features in the only bundle associated with $c_3$, and therefore cannot support that capability. We thus obtain the following feasibility vector for $c_3$:
\begin{equation}
F_3 = \max(f(H_{3}A)) =
\begin{bmatrix}
     0 & 0 & 0 & 1  \\
\end{bmatrix}.
\end{equation}
As illustrated above, if $F_{k,j} = 1$, then robot $j$ can support capability $k$. Therefore, by concatenating all the vectors $F_k$, we obtain the desired linear mapping from capabilities to robots:
\begin{equation}
F = \begin{bmatrix}
F_{1}\tr & F_{2}\tr & \ldots & F_{n_c}\tr
\end{bmatrix}\tr.
\end{equation}

As such, we can define a feasible assignment as one where all the capabilities required by each task $t$ are at least supported by a given number robots assigned to task $t$, i.e.
\begin{equation}
\label{eq:FT}
\sum_{i \in \mc R_t} F_{-,i} \geq T_{t,-},
\end{equation}
where the notation $T_{t,-}$ and $F_{-,i}$ is used to denote the $t$-th row of $T$ and the $i$-th column of $F$. The inequality in \eqref{eq:FT} holds element-wise, and $\mc R_t$ denotes the set of robots assigned to task $t$. $T_{t,k} = n$ indicates that at least $n$ of the robots assigned to task $t$ need to exhibit capability $k$. Notice that, in order to ensure the satisfaction of inequality~\eqref{eq:FT} for some set of robots $\mc R_t$, it is necessary that there are enough features available among the robots as are required for the execution of each task.
 
\subsection{Weights Extension} \label{subsec:weightsExt}

In Subsection~\ref{subsec:cap2feat}, a binary mapping from features to capabilities leveraging the notion of hyper-edges was presented. In other words, depending on its features, a robot either fully exhibited a capability or did not at all. However, in many real-world applications this mapping is not necessarily binary. For example, continuous tracks perform better than ordinary wheels in navigating uneven terrains and therefore the hyper-edge containing the continuous track feature should be assigned a higher weight.
This notion can be captured by introducing a weight associated to each hyper-edge, leading to the following more general form of \eqref{eq:cap2agent}:
\begin{equation}
\label{eq:featuresmapping}
F_k = \max(W_{k}\mathrm{kron}_1(H_{_k}A)),
\end{equation}
where $W_{k}$ is diagonal matrix whose diagonal elements $w_{k,1},\ldots,w_{k,n_{c_k}}$ specify the quality of each hyper-edge at exhibiting capability $k$, as depicted in Fig.~\ref{fig:quad}. Given this refined definition of $F_k$, the inequality in \eqref{eq:FT} ensures that each capability required by task $t$ is exhibited by at least one robot assigned to task $t$. In the next subsection, we introduce a \emph{specialization matrix} which encodes information about which tasks a given robot is a valid candidate for assignment.   

\subsection{Specialization Matrix} \label{subsec:specializationMatrix}
To conclude the model of robot heterogeneity used within the task allocation framework proposed in this paper, we now define the requirements for a robot to be considered as a potential candidate for a task. As opposed to our previous work \cite{notomista2019optimal}, where the specialization matrices were assumed to be given, we leverage the above developed feature and capability models to compute the specialization matrix of robot $i$ as follows:
\begin{equation}
\label{eq:specialization}
S_i = \mathrm{diag}(\mathbbm{1}_{n_t}-\mathrm{kron}_0(T F_{-,i}))\in\R^{n_t\times n_t},
\end{equation}
where, for $m\in\R^n$, $\mathrm{diag}(m) = M\in\R^{n\times n}$ such that
\begin{equation}
M_{ij} = \begin{cases}
m_i & \text{if }i=j\\
0 & \text{otherwise},
\end{cases}
\end{equation}
$\mathbbm{1}_{n_t}$ is a vector of dimension $n_t$ whose entries are all equal to 1, and $\mathrm{kron}_0$ denotes the Kronecker delta function defined in \eqref{eq:kron} applied element-wise. As a result, the specialization of robot $i$ towards task $j$, $s_{ij}$, is given by
\begin{equation}
s_{ij} = \begin{cases} 
      1 & \text{if } T_{j,-} F_{-,i}  > 0 \\
      0 & \text{otherwise},
   \end{cases}
\end{equation}
i.e. $s_{ij} = 1$ if robot $i$ exhibits at least one capability required by task $j$. The motivation behind this choice is two-fold: robots are allowed to combine their capabilities to satisfy a task, and there is no notion of priority between capabilities (i.e. exhibiting capability 1 is more or less crucial than exhibiting capability 2 and 3). The former indicates that if a robot exhibits even a single capability relevant to the task, it may still be able to contribute, whereas the latter indicates that there is no possible ordering of the candidates in terms of specialization.

Finally, as will be shown in Section~\ref{sec:exp}, the specialization matrix can be adapted on-the-fly. For example, in the case a robot loses a feature (e.g. its camera is malfunctioning), by removing the edges between the robot and the feature, we can re-compute which capabilities the malfunctioning robot can still exhibit.

\section{Task Execution and Prioritization}
\label{sec:te}

This section develops a minimum energy task allocation framework, through prioritization and execution, that explicitly accounts for the heterogeneity of the robots expressed in terms of their capabilities, as well as specifications on the capabilities required to execute each task. Moreover, we demonstrate how the proposed task allocation framework introduces a degree of resilience, allowing the robots to react, for instance, to component failures and, more generally, to unmodeled or unexpected environmental conditions.

As stated in Section~\ref{subsec:problem_formulation}, we consider a team of $n_r$ robots tasked with executing $n_t$ different tasks in the environment. We model the dynamics of each robot $i \in \{1,\ldots,n_r\}$ with a control-affine dynamical system:
\begin{equation}
\label{eq:controlaffine}
    \dot x_i = f(x_i) + g(x_i) u_i
\end{equation}
where $f$ and $g$ are locally Lipschitz continuous vector fields,  $x_i \in \mc X \subseteq \mathbb{R}^{n_x}$ is the state of the robot, and $u_i \in \mc U \subseteq \mathbb{R}^{n_u}$ is the input. Note that, in this paper, we assume that all robots obey the same dynamics given in~\eqref{eq:controlaffine}, however, the entire formulation can be extended in a straightforward fashion to the case where individual robots have different dynamics. As done in \cite{notomista2020set}, we use Control Barrier Functions (CBFs) (see~\cite{ames2019control} for a review on the subject) to encode the set-based tasks that the robots are required to execute. To this end, in the following we briefly recall the definition and the main properties of CBFs, which will be used in the rest of the paper to formulate the task prioritization and execution framework.

\begin{definition}[\cite{ames2019control}]
\label{def:cbf}
Let $\mc C \subset \mc D \subset \R^n$ be the zero superlevel set of a continuously differentiable function $h\colon\mc D \to \R$. Then $h$ is a control barrier function (CBF) if there exists an extended class $\mc K_\infty$ function $\gamma$\footnote{An extended class $\mc K_\infty$ function is a continuous function $\gamma : \R \to \R$ that is strictly increasing and with $\gamma(0) = 0$.} such that, for the control affine system $\dot x = f(x) + g(x) u$, $x\in\R^{n_x}$, $u\in\R^{n_u}$, one has
\begin{align}
\label{eqn:cbf:definition}
\sup_{u \in \mc U}  \left\{ L_f h(x) + L_g h(x) u \right\} \geq - \gamma(h(x)).
\end{align}
for all $ x \in \mc D$.
\end{definition}
The notation $L_f h(x)$ and $L_g h(x)$ are used to denote the Lie derivative of $h$ along the vector fields $f$ and $g$, respectively. Given this definition of CBFs, the following theorem highlights how they can be used to ensure both set forward invariance and stability~\cite{xu2015robustness}.

\begin{theorem}[\cite{ames2019control}]
\label{thm:cbf}
Let $\mathcal{C} \subset \R^n$ be a set defined as the zero superlevel set of a continuously differentiable function $h: \mc D \subset \R^n \to \R$.  If $h$ is a control barrier function on $\mc D$ and $\frac{\partial h}{\partial x}(x)\neq 0$ for all $x\in\partial \mc C$, then any Lipschitz continuous controller $u(x) \in \{ u \in \mc U \colon L_f h(x) + L_g h(x) u + \gamma(h(x)) \geq 0\}$ for the system $\dot x = f(x) + g(x) u$, $x\in\R^{n_x}$, $u\in\R^{n_u}$, renders the set $\mc C$ forward invariant.  Additionally, the set $\mc C$ is asymptotically stable in $\mc D$.
\end{theorem}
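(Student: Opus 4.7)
The plan is to establish forward invariance by bounding $h$ along closed-loop trajectories via the scalar comparison lemma, and then to derive asymptotic stability from the same bound combined with the monotonic convergence of the scalar comparison ODE governed by the extended class $\mc K_\infty$ function $\gamma$.

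First I would note that since $u(x)$ is Lipschitz continuous and $f,g$ are locally Lipschitz, the closed-loop vector field $f(x)+g(x)u(x)$ is locally Lipschitz, so for every $x(0)\in\mc D$ a unique solution $t\mapsto x(t)$ exists on a maximal interval. Along any such solution, the chain rule together with the admissibility constraint on $u$ yields
\begin{equation}
\dot h(x(t)) \;=\; L_f h(x(t)) + L_g h(x(t))\,u(x(t)) \;\geq\; -\gamma(h(x(t))).
\end{equation}
I would then invoke the scalar comparison lemma (e.g., Khalil, Lemma 3.4): if $y(t)=h(x(t))$ and $\phi(t;y_0)$ denotes the unique solution of $\dot z=-\gamma(z)$ with $z(0)=y_0$, then $y(t)\geq \phi(t;y(0))$. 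Since $\gamma$ is extended class $\mc K_\infty$ (continuous, strictly increasing on $\R$, vanishing at $0$), the ODE $\dot z=-\gamma(z)$ has $z\equiv 0$ as its unique equilibrium, and every solution converges to $0$ monotonically.

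Forward invariance then follows immediately: if $x(0)\in\mc C$, i.e.\ $h(x(0))\geq 0$, then $\phi(t;h(x(0)))\geq 0$ for all $t\geq 0$, hence $h(x(t))\geq 0$, so $x(t)\in\mc C$. For asymptotic stability in $\mc D$, I would use the bound in the other regime: if $x(0)\in\mc D\setminus\mc C$ so that $h(x(0))<0$, then $\gamma(h(x(0)))<0$ and the comparison trajectory $\phi(t;h(x(0)))$ is strictly increasing toward $0$, whence $h(x(t))\to 0$ from below. To translate convergence of $h(x(t))$ to $0$ into convergence of $x(t)$ to $\mc C$ in the state metric, I would invoke the hypothesis $\tfrac{\partial h}{\partial x}(x)\neq 0$ for $x\in\partial\mc C$: by continuity $\|\nabla h\|$ is bounded away from $0$ on a neighborhood of $\partial\mc C$, so $\partial\mc C$ is a regular embedded hypersurface and $|h(x)|$ locally upper-bounds $\text{dist}(x,\mc C)$ via a standard regular-value argument.

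The main obstacle will be that last step: ensuring that vanishing of $h(x(t))$ really implies $\text{dist}(x(t),\mc C)\to 0$ in the metric sense. The nonvanishing-gradient hypothesis is used precisely to give the local equivalence between $|h|$ and distance to $\partial\mc C$, and one must additionally verify that the trajectory remains in $\mc D$ long enough for this local equivalence to apply — handled cleanly by assuming forward completeness in $\mc D$ or restricting attention to a compact sublevel set of $|h|$ contained in $\mc D$. Everything else reduces to standard ODE comparison arguments.
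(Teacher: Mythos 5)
The paper does not actually prove this theorem---it is imported verbatim from the cited CBF survey---and your argument reproduces the standard proof used there: the comparison lemma applied to $\dot h(x(t)) \ge -\gamma(h(x(t)))$ gives forward invariance, while the same lower bound together with the nonvanishing-gradient hypothesis (which makes $|h|$ locally comparable to the distance to $\mc C$ via a regular-value argument) gives asymptotic stability, so the proposal is correct and takes essentially the same route. The only points to tighten are routine: when $\gamma$ is merely continuous the comparison should be phrased with the maximal solution of $\dot z=-\gamma(z)$ rather than ``the unique solution,'' and the attractivity step should be split into the two cases (the trajectory enters $\mc C$ in finite time and stays there by forward invariance, or it never does and $h(x(t))$ is squeezed to $0$ from below)---both of which your sketch already anticipates.
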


The results of this theorem will be used in the remainder of this section to design a control framework that allows a heterogeneous multi-robot system to prioritize and perform a set of tasks that need to be executed.

\subsection{Constraint-Driven Task Execution}
\label{subsec:constraint-task-execution}
The formulation adopted in this paper in terms of extended set-based tasks \cite{notomista2020set} allows us to encode a large variety of tasks: these are tasks characterized by a set, which is to be rendered either forward invariant (usually referred to as \emph{safety} in dynamical system theory \cite{ames2019control}), or asymptotically stable, or both. The results recalled above suggest the use of CBFs to encode these kinds of tasks. Indeed, CBFs have been successfully used to encode a variety of such tasks for different robotic platforms, ranging from coordinated control of multi-robot systems \cite{notomista2019optimal} to multi-task prioritization for robotic manipulators \cite{notomista2020set}. In particular, in \cite{notomista2020set} the definition of extended set-based tasks, i.e. tasks which consist in the state $x$ approaching a set (stability) or remaining within a set (safety), is formalized.

As shown in \cite{notomista2019constraint}, among the extended set-based tasks, there is a class of coordinated multi-robot tasks which are executed through the minimization of a cost function, realized, for instance, by gradient-flow-like control laws \cite{cortes2017coordinated}. These types of tasks can be recognized to be extended set-based tasks where the set of stationary points of the cost function has to be rendered asymptotically stable. In \cite{notomista2019constraint}, it is shown how the execution of these tasks can be turned into a constrained optimization problem---a formulation amenable for long-duration robot autonomy \cite{egerstedt2018robot}.

To make matters more concrete, consider the continuously differentiable positive definite (energy-like) cost $J: \mathbb{R}^{n_x} \rightarrow \mathbb{R}$, which is a function of the robot state $x_i$, whose dynamics are assumed to be control affine, as in \eqref{eq:controlaffine}. Then, it is shown in \cite{notomista2019constraint} how the execution of the task characterized by the minimization of the cost function $J$ can be realized by solving the following constrained optimization problem:
\begin{align}
\label{eq:minJ}
    \minimize_{u_i,\delta_i} & \|u_i\|^2 + \delta_i^2 \\
    \subjto & L_fh(x_i) + L_gh(x_i)u_i \geq -\gamma(h(x_i))-\delta_i,
\end{align}
where the task is encoded by the constraint in which $h(x_i) = -J(x_i)$ is a CBF that renders the safe set
\begin{align}
\mathcal C &= \{ x_i\in\R^{n_x} \colon h(x_i)\ge0 \}\\ &= \{ x_i\in\R^{n_x} \colon J(x_i)\le0 \}\\
&= \{ x_i\in\R^{n_x} \colon J(x_i)=0 \}
\end{align}
asymptotically stable. In \eqref{eq:minJ}, $\gamma$ is an extended class $\mathcal K_\infty$ function and $\delta_i$ is a slack variable which quantifies the extent to which the constraint can be relaxed. In cases where the completion of a task (a stationary point of $J$) is characterized by a strictly positive value of the cost $J$, $\delta_i$ ensures that the optimization program \eqref{eq:minJ} remains feasible (see \cite{notomista2019constraint}).

In multi-task multi-robot settings, this framework naturally allows robots to combine multiple constraints, each representing a task, into a single framework. For tasks encoded via CBFs $h_m, m \in \{1,\ldots,n_t\}$, the constraint-based optimization problem for robot $i$ can be written as,
\begin{align}
    \label{eq:minJ_m}
    \minimize_{u_i,\delta_i} & \|u_i\|^2 + \|\delta_i\|^2 \\
    \subjto & L_fh_m(x) + L_gh_m(x)u \geq -\gamma(h_m(x))-\delta_{im}\\
&\hspace{4.4cm}\forall m \in \{1\ldots n_t\}, 
\end{align}
where $\delta_i = [\delta_{i1},\ldots,\delta_{in_t}]^T$ represents the slack variables corresponding to each task being executed by robot~$i$. The tasks encoded by the CBFs $h_m(x)$ are not restricted to be dependent only on the state of robot $i$, but rather on the ensemble state of the robots $x=[x_1\tr,\ldots,x_{n_r}\tr]\tr$, thus allowing the framework to encompass coordinated multi-robot tasks.

With this framework in place, the slack variables $\delta_i$ present a natural way of encoding task priorities for the individual robots. This will be the subject of the next section, where the main task allocation framework is presented.

\subsection{Task Prioritization and Execution Algorithm}
\label{subsec:task_prioritization}
In section \ref{sec:encRobotHeter}, we presented a framework to model robot heterogeneity---exhibited in the different suitability that each robot has for different tasks---starting from the lower level concepts of robot features and capabilities. In this section, we leverage the expressiveness of this model in order to render the task prioritization framework, presented in \cite{notomista2019optimal} and improved in \cite{emam2020adaptive}, resilient.

The optimization-based formulation extends the one in \eqref{eq:minJ_m} as follows:
\begin{subequations}\label{eq:allocationalgorithm}
\begin{flalign}
\text{\bf Task allocation optimization problem (MIQP)} \tag{\ref{eq:allocationalgorithm}}&&\label{eq:allocationalgorithmactual}
\end{flalign}
\vspace{-0.75cm}
\begin{align}
\minimize_{u,\delta,\alpha} & \sum_{i = 1}^{n_r} \left( C \| \Pi_i \alpha_{-,i}\|^2 +  \|u_i\|^2 + l \|\delta_i \|_{S_i}^2 \right) \label{eq:miqp:a}\\
\subjto & L_f h_{m}(x) + L_g h_{m}(x) u_i\\
& \qquad \geq -\gamma(h_{m}(x)) - \delta_{im} \label{eq:miqp:b}\\
&\Theta\delta_i + \Phi\alpha_{-,i}  \le \Psi \label{eq:miqp:c}\\
&\mathbbm{1}_{n_t}\tr\alpha_{-,i} \le 1 \label{eq:miqp:d}\\ 
& F \alpha_{m,-}\tr \geq T_{m,-}\tr \label{eq:miqp:e}\\
& n_{r,m,\text{min}} \le \mathbbm{1}\tr \alpha_{m,-}\tr \leq n_{r,m,\text{max}} \label{eq:miqp:f}\\
&\|\delta_i\|_\infty \leq \delta_\text{max} \label{eq:miqp:g}\\
&\alpha \in \{0,1\}^{n_t\times n_r} \label{eq:miqp:h}\\
&\hspace{2cm}\forall i \in \{1\ldots n_r\},~\forall m \in \{1\ldots n_t\}, 
\end{align}
\noeqref{eq:miqp:a}\noeqref{eq:miqp:b}\noeqref{eq:miqp:c}\noeqref{eq:miqp:d}\noeqref{eq:miqp:e}\noeqref{eq:miqp:f}\noeqref{eq:miqp:g}\noeqref{eq:miqp:h}
\end{subequations}
\hspace{-0.14cm}where $C,l\in\R_{\ge0}$ are parameters of the optimization, $\delta_\text{max}$ signifies the maximum extent to which each task constraint can be relaxed, and $\gamma$ is a continuously differentiable class $\mathcal{K}_\infty$ function. The matrix $\Pi_i$ is a projection matrix defined in \eqref{eq:pi} to account for the heterogeneous capabilities of the multi-robot system, as explained in detail later.

First of all, as done in Section~\ref{sec:encRobotHeter}, the symbols $X_{i,-}$ and $X_{-,j}$ denote the $i$-th row and the $j$-th column of the matrix $X$, respectively. The introduction of the matrix of task priorities $\alpha\in\{0,1\}^{n_t\times n_r}$ in the optimization problem is what determines the prioritization (and, therefore, the allocation) of the tasks for each robot. This is realized through the constraint \eqref{eq:miqp:c}, where the matrices $\Theta\in\R^{\frac{n_t(n_t-1)}{2}\times n_t}$ and $\Phi\in\R^{\frac{n_t(n_t-1)}{2}\times n_t}$, and the vector $\Psi\in\R^{\frac{n_t(n_t-1)}{2}}$, enforce constraints among different components of the vectors of task relaxation parameters $\delta_i$. As extensively discussed in \cite{notomista2019optimal}, the constraint
\begin{equation}
\label{eq:deltaalphaconstraint}
\delta_{in} \geq \kappa\big (\delta_{im} - \delta_\text{max}(1 - \alpha_{mi}) \big ),~~ n\neq m,
\end{equation}
that can be written as \eqref{eq:miqp:c}, realizes the following two implications:
\begin{equation}
\alpha_{im} = 1 \implies \delta_{im} \leq \frac{1}{\kappa}\delta_{in}\quad \forall n\in \{1,\ldots,n_t\}\setminus\{m\},
\end{equation}
which implies that task $m$ has highest priority for robot $i$, and
\begin{equation}
\alpha_{im} = 0 \implies \delta_{im}\le\delta_\text{max}+\frac{1}{\kappa}\delta_{in}\quad \forall n\in \{1,\ldots,n_t\}\setminus\{m\},
\end{equation}
which implies that task $m$ does not have the highest priority for robot $i$. In fact, in light of constraint \eqref{eq:miqp:g}, no further constraints are enforced on $\delta_{im}$, since $\delta_\text{max}$ is the maximum value $|\delta_{im}|$ is allowed to achieve\footnote{Note that constraint \eqref{eq:miqp:g} might cause \eqref{eq:allocationalgorithm} to become infeasible. However, when the state of the robots evolve in a compact set $\mc X$, as the functions encoding the tasks are continuously differentiable, choosing $\max_{m\in\{1,\ldots,n_t\}} \max_{x\in\mc X}\{h_{m}(x)\}\le\delta_\text{max}<\infty$ guarantees that the task allocation optimization problem \eqref{eq:allocationalgorithm} is always feasible.}. Notice further that, for the way it is used in \eqref{eqn:te_2}, the optimal value of $\delta$ will always be non-negative (see also analyses in \cite{notomista2019constraint, notomista2019optimal}).

The constraint \eqref{eq:miqp:d} is used to ensure that each robot has at most one task to be executed with highest priority, making the task prioritization formulation effectively a task allocation. Notice that, compared to our previous work \cite{notomista2019optimal}, \eqref{eq:miqp:d} is here turned from an equality into an inequality constraint. In \cite{notomista2019optimal}, this constraint was used to ensure that no feasible solution consisted in robots trading off task execution for energy saving. In the presented, enhanced, formulation, this is not necessary anymore thanks to constraint \eqref{eq:miqp:e}---whose meaning will be described in the following. Consequently, we can now account for situations where no tasks are allocated to some of the robots, implementing, as a matter of fact, the concept of \emph{autonomy-on-demand} in the context of task allocation.

The constraint \eqref{eq:miqp:e} is what allows us to specify the minimum capabilities required for each task, expressed by the matrices $T$ and $F$ defined in Sections~\ref{subsec:tasktocapability} and \ref{subsec:cap2robot}, respectively. Moreover, the constraint \eqref{eq:miqp:f} allows us to enforce the minimum and maximum number of robots required for each task, thus giving a lot of flexibility and versatility to be utilized in many different application scenarios. In Section~\ref{sec:exp}, experiments performed on a real multi-robot system will showcase the use of these constraints.

As in our previous works \cite{notomista2019optimal} and \cite{emam2020adaptive}, the cost of the optimization problem \eqref{eq:allocationalgorithmactual} is composed of 3 terms. The last two terms in \eqref{eq:miqp:a} correspond to the control effort spent by the robots and the magnitude of the relaxation parameters, respectively. The former enables our framework to be compatible with long-duration autonomy applications. More specifically, robots can remain operational over sustained periods of time by minimizing the energy spent to perform a task---which is proportional to control effort---together with enforcing energy constraints as, e.g., in \cite{notomista2020persistification,fouad2020energy}. The latter, instead, ensures that the tasks to which the robots have been assigned get indeed executed, thanks to constraint \eqref{eq:miqp:b}. The norm of $\delta_i$ corresponding to robot $i$ is weighted by the specialization matrix of robot $i$, $S_i$. This way, the relaxation variables corresponding to tasks that robot $i$ is not capable of performing (i.e. with a low value of the entry of the specialization matrix) are weighted accordingly less.

Finally, the first term in \eqref{eq:miqp:a} is introduced to penalize \emph{bad allocations} of tasks to robots, as explained in the following. The matrix $\Pi_i$ is defined as follows:
\begin{equation}
\label{eq:pi}
\Pi_i = I_{n_t} - S_i S_i^\dagger,
\end{equation}
where $I_{n_t}$ is the $n_t\times n_t$ identity matrix, and $S_i^\dagger$ is the right Moore-Penrose inverse~\cite{penrose1955generalized} of the specialization matrix $S_i$ of robot $i$. It is easy to see that $\Pi_i$ is the projector onto the orthogonal complement of the subspace of specializations possessed by robot $i$. Assume, for example, that robot $i$ has no specialization at all at performing task $k$ (i.e. $s_{ik}=0$) and has a non-zero specialization $s_{ij}$ of performing task $j$, $j\neq k$. Then, its specialization matrix $S_i$ will be given by:
\begin{equation}
S_i = \mathrm{diag}([s_{i1},\ldots,s_{i\,k-1},0,s_{i\,k+1},\ldots,s_{in_t}]),
\end{equation}
and 
\begin{equation}
\Pi_i = \mathrm{diag}([\underbrace{0,\ldots,0}_{k-1},1,\underbrace{0,\ldots,0}_{n_t-k}]).
\end{equation}
Then, the projector $\Pi_i$ in the cost \eqref{eq:miqp:a} will contribute to a non-zero cost when the components of $\alpha_i$ corresponding to tasks that robot $i$ has no specialization to perform are not zero, i.e. when robot $i$ has been assigned to a task that it is not able to perform---referred above as a \emph{bad allocation}.

\begin{remark}[Centralized Mixed-Integer Quadratic Program]
Notice that in \eqref{eq:allocationalgorithmactual} there is a coupling between the robots through the cost as well as the constraints. This means that the task allocation framework has to be solved in a centralized fashion. Moreover, the matrix of task priorities $\alpha$ is integer. This renders \eqref{eq:allocationalgorithmactual} a mixed-integer quadratic program (MIQP). A QP-relaxation approach, as well as ways of solving this framework in a decentralized way, are discussed in \cite{notomista2019optimal}. In Section~\ref{sec:mixedc-dec} of this paper, we show how the proposed MIQP can be solved in a mixed centralized/decentralized fashion, and we analyze the performances compared to the centralized approach.
\end{remark}

\begin{remark}[Time-varying and sequential tasks]
\label{rmk:timevarying}
Expressing tasks by means of control barrier functions, besides rendering the task execution and allocation particularly amenable for online-optimization-based controllers, allows us to account for time-varying and sequential tasks, comprised by a sequence of sub-tasks, as well. In fact, the time-varying extension of control barrier functions (see, e.g., \cite{notomista2020persistification}) can be leveraged to consider tasks which have an explicit dependence on time. In the experimental section, we show how this extension of the proposed task allocation and execution framework can be used to implement state-trajectory-tracking tasks.

Moreover, thanks to the pointwise-in-time nature of the developed optimization program, tasks can be removed and inserted in a continuous fashion, as demonstrated in \cite{notomista2020set}. This allows for a flexible implementation of sequential tasks which require the completion of a sub-task before another one can be started, as done in \cite{li2018formally}. In the same way, the features and the specialization of the robots towards different tasks can be modified during the execution of the task. In the next subsection, we present an approach to leverage time-varying specialization in order to adapt to disturbances or modeled phenomena in the environment.
\end{remark}

The following algorithm summarizes the application of the optimization-based allocation and execution framework to a multi-robot system with heterogeneous capabilities.

\begin{algorithm}
	\caption{Task allocation and execution}
	\label{alg:1}
 	\begin{algorithmic}[1]
	\Require
		\Statex Tasks $h_m,~m\in\{1,\ldots,n_t\}$
		\Statex Mappings $F$, $T$
		\Statex Parameters $n_{r,m,\text{min}}$, $n_{r,m,\text{max}}$, $\delta_\text{max}$, $C$, $l$
	\State Evaluate $S_i,~\forall i\in\{1,\ldots,n_r\}$ \Comment \eqref{eq:specialization}
	\While{true}
		\State Get robot state $x_i,\forall i\in\{1,\ldots,n_r\}$
		\State Compute robot input $u_i, \forall i\in\{1,\ldots,n_r\}$ \Comment \eqref{eq:allocationalgorithmactual}
		\State Send input $u_i,\forall i\in\{1,\ldots,n_r\}$ to robots and execute
	\EndWhile
 	\end{algorithmic}
\end{algorithm}

We conclude this subsection by showcasing the execution of Algorithm~\ref{alg:1} in an explanatory example featuring the use of the allocation constraints described so far.

\begin{example}
\label{ex:sim1}
Consider 4 mobile robots moving in a 2-dimensional space, tasked with performing 2 tasks. For clarity of exposition, in this example, we modeled each robot $i$ as single integrator $\dot x_i = u_i$---so $f$ and $g$ in \eqref{eq:controlaffine} are the zero and identity map, respectively---and each task consists in going to a point of the state space.
\begin{figure}
    \centering
    \subfloat[]{\label{subfig:ex1:traj1}\includegraphics[width=0.2\textwidth]{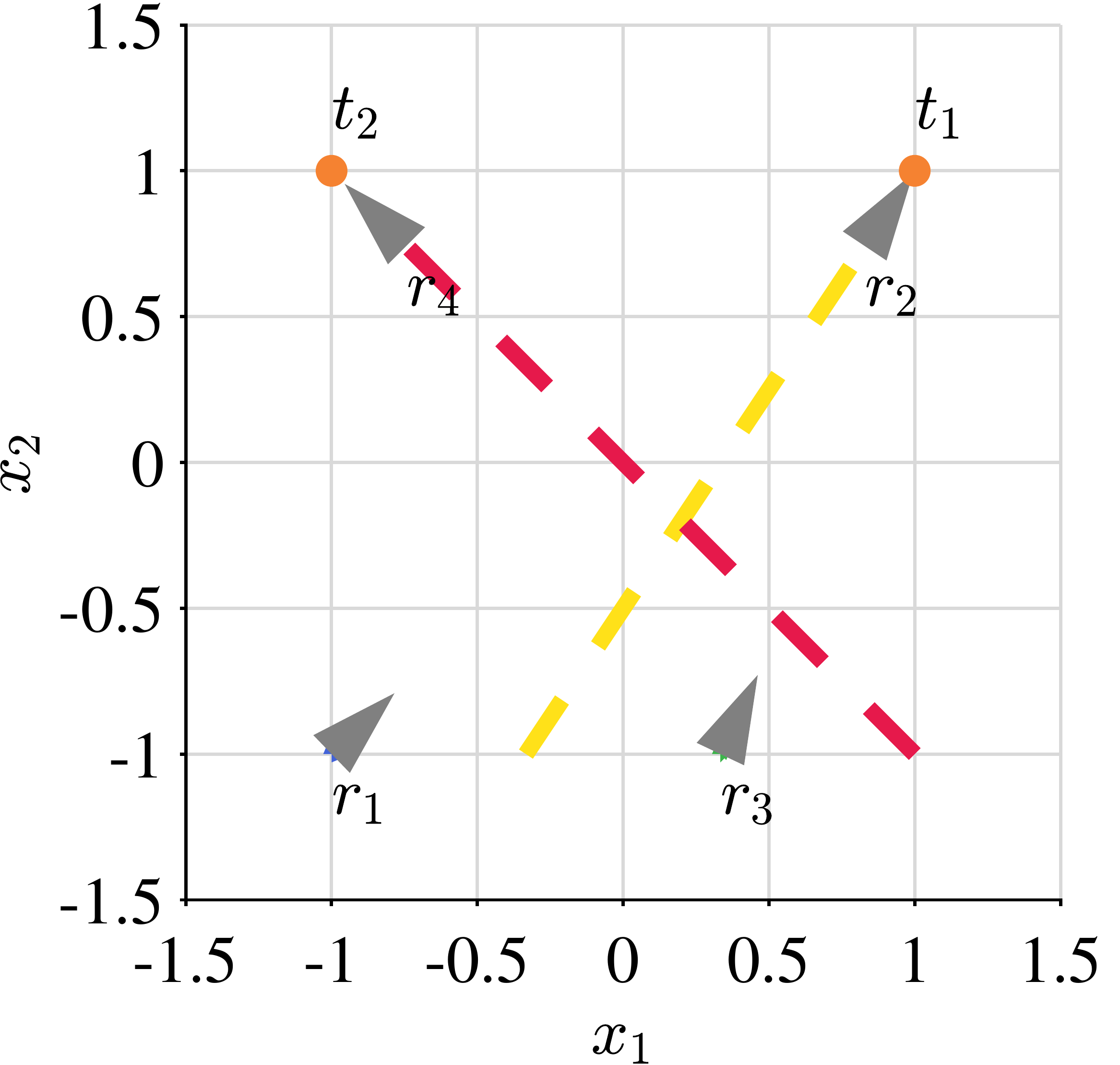}}%
    \subfloat[]{\label{subfig:ex1:traj2}\includegraphics[width=0.2\textwidth]{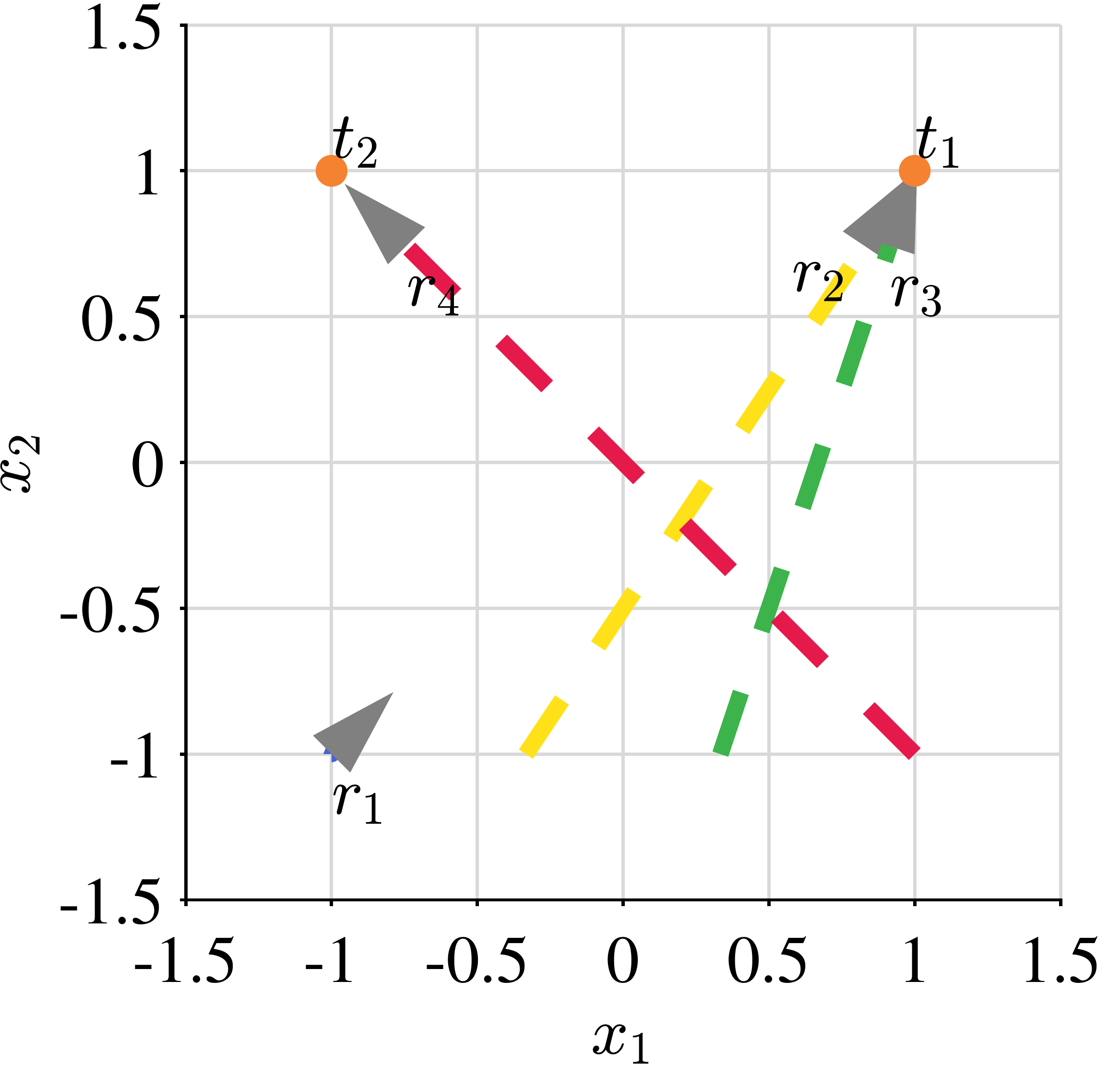}}
    \caption{Task allocation and execution (Example~\ref{ex:sim1}). In Fig.~\protect\ref{subfig:ex1:traj1}, 4 robots (gray triangles) have to be allocated to 2 tasks and, as a result of the execution of \eqref{eq:allocationalgorithmactual}, robots $r_2$ and $r_4$ are assigned to task $t_1$ and $t_2$, respectively, based on their specialization introduced in Fig.~\ref{fig:quad}. In Fig.~\protect\ref{subfig:ex1:traj2}, the additional constraint that at least 2 robots are required to execute task $t_1$ is introduced ($n_{r,1,\text{min}}=2$), and robot $r_3$ is picked together with $r_2$ to perform $t_1$. The resulting trajectories of the robots are depicted as dashed lines.}
    \label{fig:ex1}
\end{figure}
In Fig.~\ref{fig:ex1}, the robots are depicted as gray triangles and labeled $r_1$ to $r_4$, whereas the locations corresponding to the tasks are labeled $t_1$ and $t_2$. The features, capabilities, and task mappings have been set as in Fig.~\ref{fig:quad}, where the numerical quantities are given in Section~\ref{sec:encRobotHeter}. So, per \eqref{eq:specialization}, robots $r_1$, $r_2$ and $r_3$ are only specialized to perform task $t_1$, while robot $r_4$ is specialized to perform both tasks.

For the scenario depicted in Fig.~\ref{subfig:ex1:traj1}, tasks $t_1$ and $t_2$ need to be executed and there is no further constraint on the amount of capability or the number of robots required for a task. As a result of the execution of Algorithm~\ref{alg:1}, the trajectories (red and yellow) show two of the robots performing the two tasks. In particular, robot $r_4$ is assigned to task $t_2$, while robot $r_2$ has been allocated to task $t_1$ (the only one it can perform). Robots $r_1$ and $r_3$ have remained to their initial positions with no task assigned to them, as $r_2$ and $r_4$ were already satisfying the task constraints in \eqref{eq:allocationalgorithmactual}.

In the scenario depicted in Fig.~\ref{subfig:ex1:traj2}, instead, $n_{r,1,\text{min}}=2$, i.e. at least 2 robots are required for the execution of task $t_1$. Driven by the control inputs $u_2$ and $u_3$ calculated according to Algorithm~\ref{alg:1}, robots $r_2$ and $r_3$ are assigned to task $t_1$, while $r_4$ is assigned to $t_2$ (as it is the only robot possessing the specialization for it), as can be seen in Fig.~\ref{subfig:ex1:traj2}.
\end{example}

\subsection{Resilience of the Task Allocation Algorithm}
\label{subsec:resilience}

\begin{figure}
\centering
\includegraphics[width=0.28\textwidth]{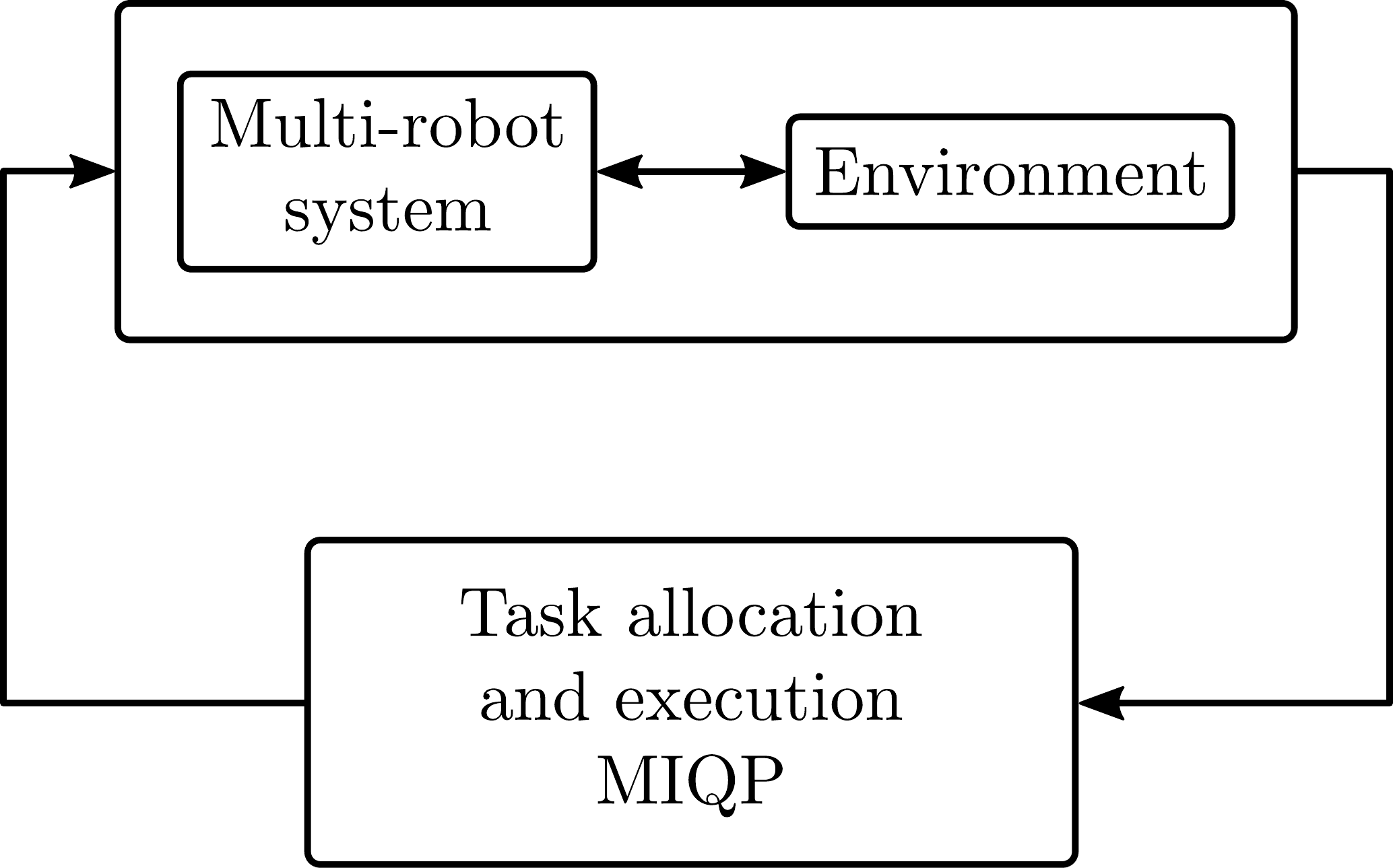}
\caption{The multi-robot system interacting with the environment controlled in feedback by the task allocation and execution optimization program \eqref{eq:allocationalgorithmactual}.}
\label{fig:lure}
\end{figure}

In this subsection, we introduce two distinct methods that render the task allocation and execution framework presented above resilient to environmental disturbances and robot feature failures. To achieve this, we leverage the fact that the optimization problem presented in~\eqref{eq:allocationalgorithmactual} is solved point-wise in time, and thus can be integrated along with online updates to the specializations and capabilities of the robots to construct a feedback loop as depicted in Fig.~\ref{fig:lure}.

We begin by introducing an update law aimed at changing the specialization values of the robots based on their measured versus expected progress at completing the tasks they are allocated to. The latter allows the framework to account for exogenous disturbances, i.e., disturbances that are not detectable, or not explicitly modeled, or just unknown \cite{sira2018active}. In the cases when the disturbances are endogenous (e.g., detectable sensor malfunction), we can directly account for them by modifying the specific values in the mappings introduced in Section~\ref{sec:encRobotHeter}.

\subsubsection{Exogenous disturbances} 
In certain deployment scenarios, the specializations of robots towards the tasks might be unknown prior to deployment of the team, or might vary due to changes in the  environmental  conditions. For the remainder of this paper, we refer to all such disturbances that cannot be modeled (i.e. cannot be accounted for in $F$) as \emph{exogenous} disturbances. These also include undetectable failures of robot components which are only reflected, and therefore be detected, in the way the robot executes the assigned task. In these cases, we would like to update the specialization parameters $s_{ij}$ on-the-fly to account for such changes. As described in \cite{emam2020adaptive}, this is achieved through updating the parameters $s_{ij}$ at each time step $k$ based on the difference between the expected and actual effectiveness of the task allocation and execution framework, where we assume that this difference manifests itself in terms of variations in the dynamical model of the robot. At discretized time intervals $k\Delta t, k \in \mathbb{N}$, let $x^{(k)}_{\text{act}}$ denote the actual ensemble state of the multi-robot system and $^{(i)}x^{(k)}_{\text{sim}}$ the ensemble state simulated by robot $i$ assuming it itself obeyed its nominal dynamics with all the other robots being stationary. The simulated states can be then evaluated as follows:
\begin{equation}
\label{eq:x_sim}
^{(i)}x_{\text{sim},j}^{(k)} = \begin{cases}
x_{\text{act},i}^{(k-1)} + \Delta x^{(k-1)}_i \Deltat & \text{if } j = i\\
x_{\text{act},j}^{(k-1)} & \text{if } j \neq i,
\end{cases}
\end{equation}
where $^{(i)}x_{\text{sim},j}^{(k)}$ denotes the $j$-th component of $^{(i)}x_{\text{sim}}^{(k)}$, and $\Delta x^{(k-1)}_i$ is defined as
\begin{equation}
\Delta x^{(k-1)}_i = f\left(x_{\text{act},i}^{(k-1)}\right) + g\left(x_{\text{act},i}^{(k-1)}\right)u_i^{(k-1)},
\end{equation}
$u_i^{(k-1)}$ being the input evaluated by solving \eqref{eq:allocationalgorithmactual} at time $(k-1)\Delta t$.
Using $^{(i)}x^{(k)}_{\text{sim}}$, robot $i$ can measure its contribution towards the difference between the simulated and the actual progress in the completion of task $m$ at time step $k$ as follows:
\begin{equation}
\label{eq:deltah}
\Delta h_{im}^{(k)} = \min\left\{0, h_{im}\left(x_{\text{act}}^{(k)}\right) - h_{im}\left(^{(i)}x_{\text{sim}}^{(k)}\right)\right\},
\end{equation}
where $h_{im}\left(^{(i)}x_{\text{sim}}^{(k)}\right)$ and $h_{im}\left(x_{\text{act}}^{(k)}\right)$ are the simulated and actual values of the CBF corresponding to robot $i$ and task $m$ at time step $k$, respectively. Note that the $\min$ operator in \eqref{eq:deltah} is used to prevent $\Delta h_{im}^{(k)}$ from being positive. This situation may occur due to the coordinated nature of multi-robot tasks, where robot $i$ need not know the actions of its neighbors, which could result in an unpredictable positive variations of $h_{im}$.

We assume that the CBF corresponding to each task, $h_m$, is decomposable into the respective contributions of each robot $i$, $h_{im}$. This assumption holds for a large number of coordinated control tasks such as multi-robot coverage control and formation control~\cite{cortes2017coordinated}, and allows each robot to assess its own effectiveness at executing a task by measuring $\Delta h_{im}^{(k)}$. In fact, if $\Delta h_{im}^{(k)}<0$, robot $i$'s actual effectiveness at accomplishing task $m$ is lower than anticipated. Consequently, one can model the evolution of the specialization of robot $i$ at task $m$ according to the following update law:
\begin{equation}
    \label{eq:spUpdate1}
    s_{im}^{(k+1)} = s_{im}^{(k)} + \beta \alpha_{im}^{(k)} \Delta h_{im}^{(k)},
\end{equation}
where $\beta \in \R_{>0}$ is a constant controlling the update rate. Note that the update only occurs for tasks to which the robots are assigned since $\alpha_{im}^{(k)} = 1$ if and only if robot $i$ is assigned to task $m$ at time step $k$. This update law renders the framework resilient to unknown environmental disturbances by allowing the framework to account for the dynamical variations in the environmental conditions through the updates of the specialization matrix according to the performance of the robots. Although it is not in the scope of this paper, in \cite{emam2020adaptive} we also show conditions under which the robot specialization lost because of the update law in \eqref{eq:spUpdate1} can be recovered over time. Algorithm~\ref{alg:2} extends Algorithm~\ref{alg:1} developed in the previous section to account for exogenous disturbances.

\begin{algorithm}
	\caption{Task allocation and execution resilient to exogenous disturbance}
	\label{alg:2}
 	\begin{algorithmic}[1]
	\Require
		\Statex Tasks $h_m,~m\in\{1,\ldots,n_t\}$
		\Statex Mappings $F$, $T$
		\Statex Parameters $n_{r,m,\text{min}}$, $n_{r,m,\text{max}}$, $\delta_\text{max}$, $C$, $l$
	\State Evaluate $S_i,~\forall i\in\{1,\ldots,n_r\}$ \Comment \eqref{eq:specialization}
	\While{true}
		\State Get robot state $x_i,\forall i\in\{1,\ldots,n_r\}$
		\State Compute robot input $u_i,\forall i\in\{1,\ldots,n_r\}$ \Comment \eqref{eq:allocationalgorithmactual}
		\State Send input $u_i,~\forall i\in\{1,\ldots,n_r\}$ to robots to execute
		\ForAll{$i\in\{1,\ldots,n_r\}$}
			\State Evaluate simulated robot state $^{(i)}x_{\text{sim}}^{(k)}$ \Comment \eqref{eq:x_sim}
			\ForAll{$m\in\{1,\ldots,n_t\}$}
				\State Evaluate $\Delta h_{im}^{(k)}$ \Comment \eqref{eq:deltah}
				\State Evaluate $s_{im}^{(k+1)}$ \Comment \eqref{eq:spUpdate1}
			\EndFor
		\EndFor
	\EndWhile
 	\end{algorithmic}
\end{algorithm}

The following example showcases the use of Algorithm~\ref{alg:2} in a simplified scenario with 2 robots, 2 tasks, and an unmodeled, exogenous environmental disturbance.

\begin{example}
\label{ex:sim2}
\begin{figure}
    \centering
	\subfloat[]{\label{subfig:ex2:map}\includegraphics[width=0.22\textwidth]{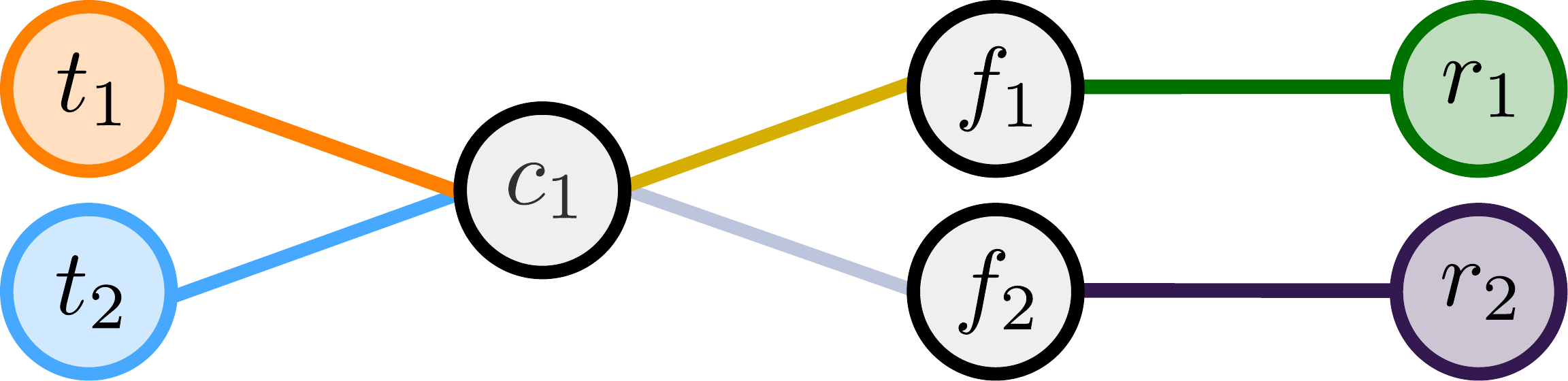}}\quad
    \subfloat[]{\label{subfig:ex2:traj}\includegraphics[width=0.2\textwidth]{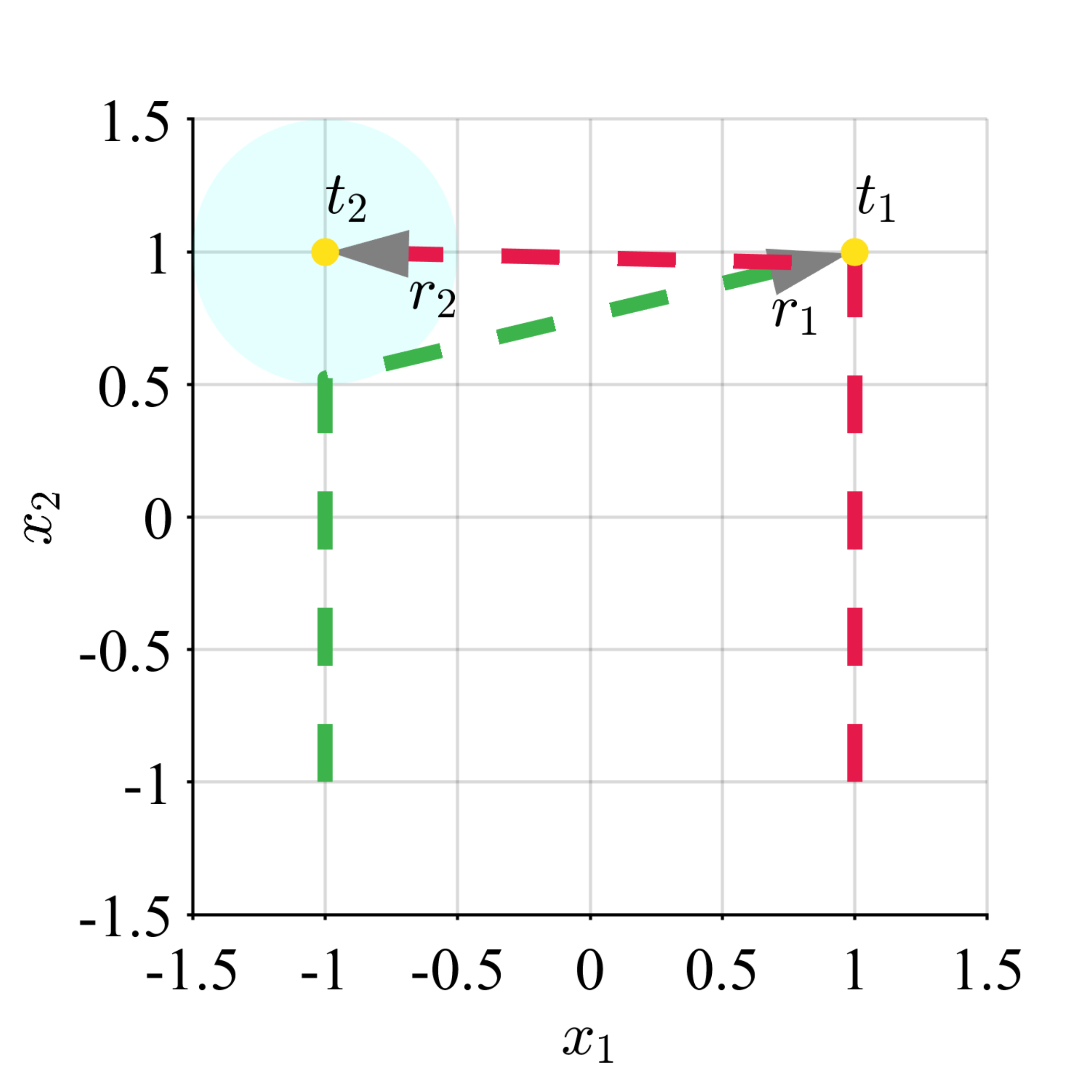}}\\
    \subfloat[]{\label{subfig:ex2:S}\includegraphics[width=0.2\textwidth]{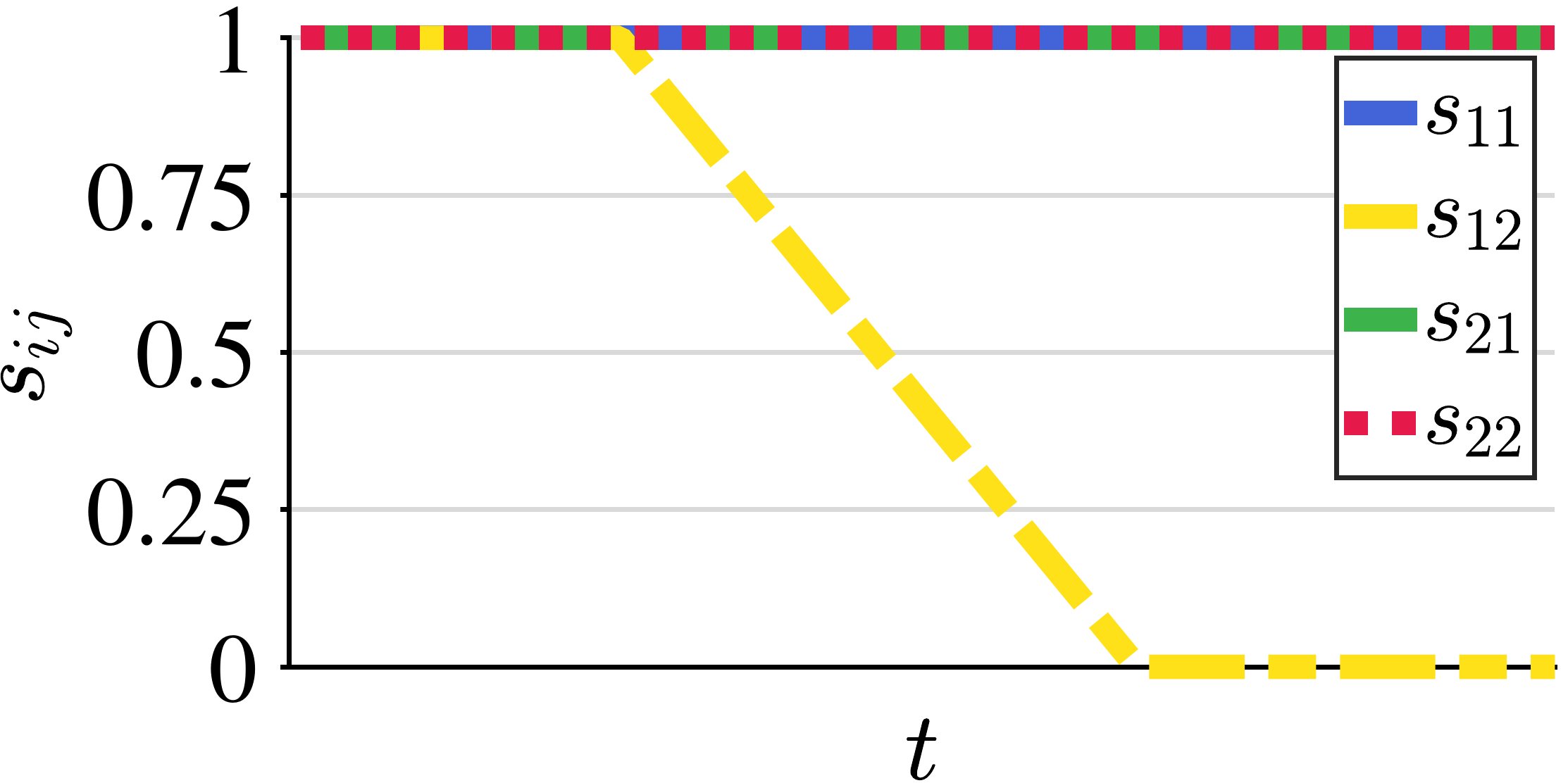}}
    \caption{Resilience of the task allocation algorithm to exogenous disturbances (Example~\ref{ex:sim2}). Robots $r_1$ and $r_2$ (gray triangles) have to execute tasks $t_1$ and $t_2$. They both possess the capabilities to perform both tasks (as pictorially shown in Fig.~\ref{subfig:ex2:map}), but $r_1$ is not capable of traversing the circular cyan-shaded region (representing the exogenous disturbance), rendering practically impossible for it to execute task $t_2$. Based on \eqref{eq:allocationalgorithmactual}, $r_1$ is initially assigned to $t_2$ and $r_2$ to $t_1$. As $r_1$ reaches the cyan zone, it is not able to proceed forward. According to Algorithm~\ref{alg:2}, per \eqref{eq:spUpdate1}, $s_{12}$---the specialization of $r_1$ to execute $t_2$, depicted in \protect\ref{subfig:ex2:traj} as a function of time $t$---starts decreasing until it reaches 0. At this point, the allocation evaluated by \eqref{eq:allocationalgorithmactual} automatically changes and robots $r_1$ and $r_2$ are assigned to tasks $t_1$ and $t_2$, respectively, fulfilling, this way, the requirement that both tasks need to be executed. The trajectories of the robots resulting by the allocation algorithm are depicted as dashed lines in Fig.~\protect\ref{subfig:ex2:traj}.}
    \label{fig:ex2}
\end{figure}
Consider the example depicted in Fig.~\ref{fig:ex2}. The 2 robots, $r_1$ and $r_2$ (shown as gray triangles, and modeled as 2-dimensional single integrators as in Example~\ref{ex:sim1}) are asked to execute 2 tasks $t_1$ and $t_2$. Their features and capabilities are depicted in Fig.~\ref{subfig:ex2:map}: both robots are capable of performing both tasks. Nevertheless, robot $r_1$ cannot traverse the region of the state space shaded in cyan (unmodeled disturbance), making the execution of task $t_2$ impossible for it. By implementing the control obtained by solving \eqref{eq:allocationalgorithmactual}, robot $r_1$ is initially assigned to task $t_2$, while $r_2$ is assigned to $t_1$, as confirmed by the initial vertical segments of the dashed green and red trajectories of the robots. As $r_1$ reaches the circular cyan region, it is not able to advance anymore and the execution of Algorithm~\ref{alg:2} makes its specialization towards task $t_2$---represented by $s_{12}$---decrease according to \eqref{eq:spUpdate1}, as depicted in Fig.~\ref{subfig:ex2:traj}. When $s_{12}=0$, the allocation algorithm \eqref{eq:allocationalgorithmactual} swaps the allocation of tasks as robot $r_1$ is not able to execute task $t_2$ to any extent anymore. The final allocation satisfies the requirements that both tasks are executed.
\end{example}

\subsubsection{Endogenous disturbances}
We now shift our focus to cases where the disturbances to the model are known to the robots---a condition happening in case of, e.g., detectable sensor malfunctions. We refer to this class of disturbances as \emph{endogenous} disturbances for which we account by directly altering the mappings introduced in Section~\ref{sec:encRobotHeter}. Specifically, by leveraging the feature representation, we directly alter the intermediate mappings (i.e. Robot-to-Feature and Feature-to-Capability mappings) on the fly to reflect such changes. The latter is achieved through modifying the corresponding values in the mappings defined in Section~\ref{sec:encRobotHeter} and re-computing the Robot-to-Capability matrix $F$ and the specialization matrices $S_i$. Note that $F$ is incorporated in the task allocation framework through the constraint \eqref{eq:miqp:f}, which ensures that the task allocation among the robots reflects the change in $F$. For example, in case of a feature failure, the Robot-to-Feature mapping matrix $A$ is altered to account for the failure. Moreover, in case of known environmental disturbances, the feature bundle weights $W_k$ is altered for each capability.  Following the example from Fig.~\ref{fig:quad}, if feature $f_4$ of robot $r_2$ malfunctions, we reflect that by altering the original $A$ matrix from \eqref{eq:a_matrix} to 
\begin{equation} \label{eq:a_matrix_altered}
A =
\begin{bmatrix}
     1 & 0 & 0 & 0 \\
     1 & 0 & 0 & 0 \\
     1 & 1 & 0 & 0 \\
     0 & 1 & 1 & 1 \\
     0 & 0 & 1 & 1 \\
     0 & 0 & 0 & 1 \\
\end{bmatrix},
\end{equation}
which is equivalent to removing the edge from robot $r_2$ to feature $f_4$ in the hypergraph from Fig.~\ref{fig:quad}. Similarly, known external environmental disturbances such as weather or terrain conditions are modeled by altering the weight vectors $w_k$ introduced in subsection~\ref{subsec:weightsExt}.

The approach described in this section to cope with endogenous disturbances is summarized in Algorithm~\ref{alg:3}. To conclude the section, we present a final example to showcase the behavior resulting from the application of Algorithm~\ref{alg:3}.

\begin{algorithm}
	\caption{Task allocation and execution resilient to endogenous disturbance}
	\label{alg:3}
 	\begin{algorithmic}[1]
	\Require
		\Statex Tasks $h_m,~m\in\{1,\ldots,n_t\}$
		\Statex Mappings $F$, $T$
		\Statex Parameters $n_{r,m,\text{min}}$, $n_{r,m,\text{max}}$, $\delta_\text{max}$, $C$, $l$, $\beta$
	\State Evaluate $S_i,~\forall i\in\{1,\ldots,n_r\}$ \Comment \eqref{eq:specialization}
	\While{true}
		\State Get robot state $x_i,~\forall i\in\{1,\ldots,n_r\}$
		\State Calculate robot input $u_i,~\forall i\in\{1,\ldots,n_r\}$ \Comment \eqref{eq:allocationalgorithmactual}
		\State Send input $u_i,~\forall i\in\{1,\ldots,n_r\}$ to robots to execute
		\State Update matrix $A$
		\State Re-evaluate matrices $F$ and $S$ \Comment \eqref{eq:featuresmapping}, \eqref{eq:specialization}
	\EndWhile
 	\end{algorithmic}
\end{algorithm}

\begin{example}
\label{ex:sim3}
\begin{figure}
    \centering
	\subfloat[]{\label{subfig:ex3:map}\includegraphics[width=0.22\textwidth]{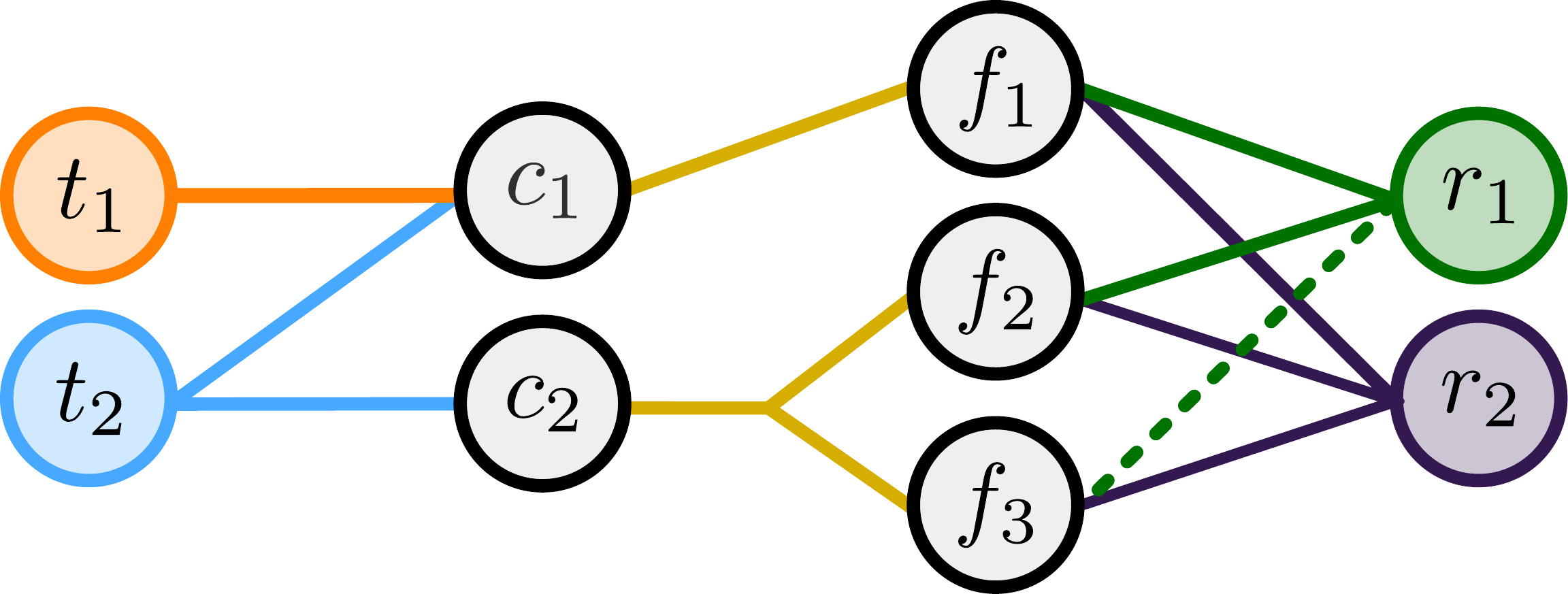}}\quad
	\subfloat[]{\label{subfig:ex3:traj}	\includegraphics[width=0.2\textwidth]{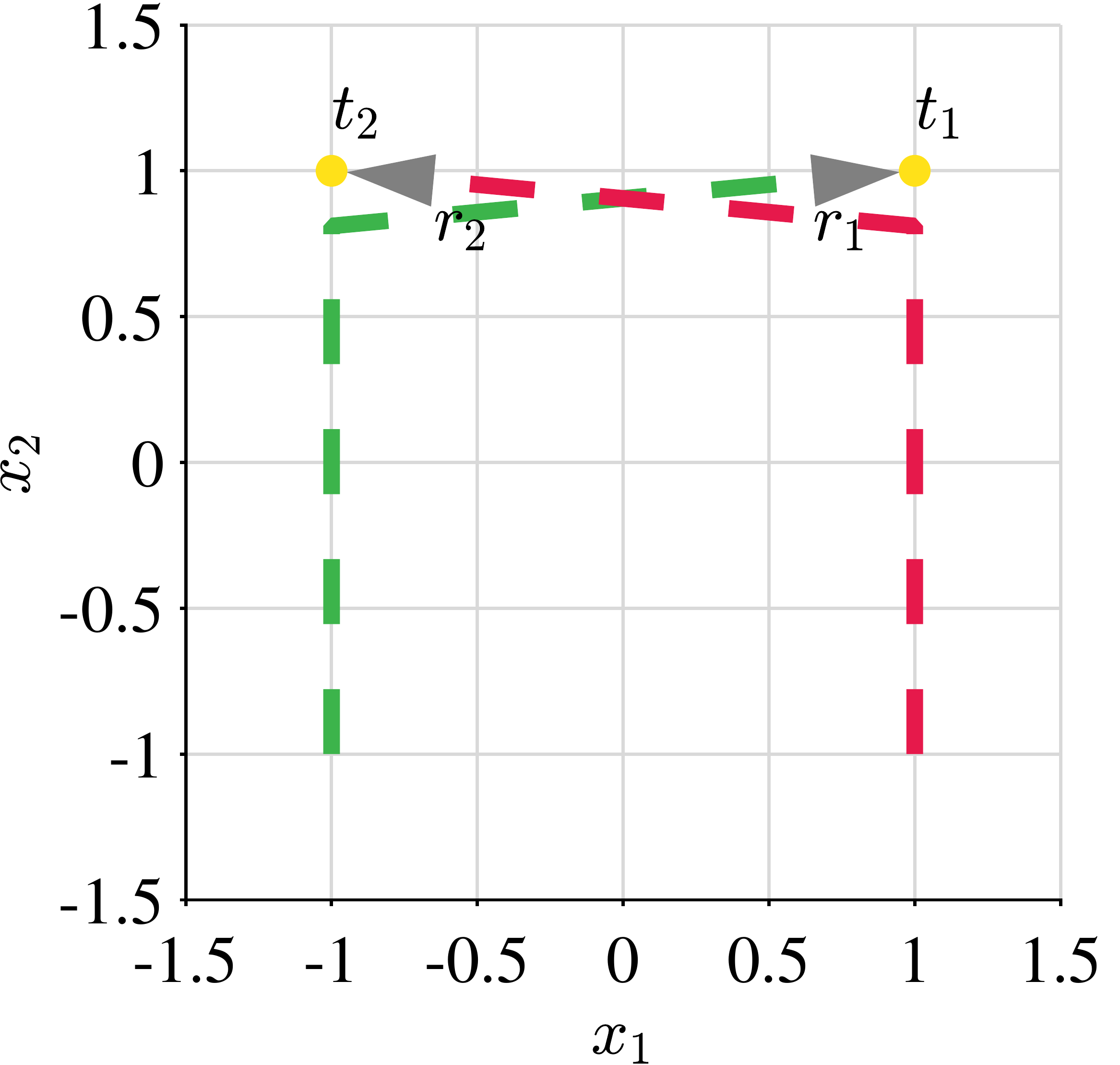}}
    \caption{Resilience of the task allocation algorithm to endogenous disturbances (Example~\ref{ex:sim3}). The 2 robots $r_1$ and $r_2$ (gray triangles) are asked to perform tasks $t_1$ and $t_2$. Initially, both robots are able to perform both tasks based on their possessed features shown in Fig.~\protect\ref{subfig:ex3:map}. Solving \eqref{eq:allocationalgorithmactual} initially assigns robot $r_1$ to task $t_2$ and $r_2$ to $t_1$. At a certain time instant, $A_{31}$ transitions from 1 to 0, corresponding to the condition that robot $r_1$ loses feature $f_3$ (the dashed edge in Fig.~\protect\ref{subfig:ex3:map} is lost). At this point, the constraint \eqref{eq:miqp:e} forces the task allocation to swap so that $r_2$, the only robots capable of providing capability $c_2$ for executing $t_2$, is assigned to it. This way, the requirement that both tasks are executed by at least one robot are satisfied. The trajectories of the robots are depicted as dashed lines in Fig.~\protect\ref{subfig:ex3:traj}.}
    \label{fig:ex3}
\end{figure}
In this last examples, 2 robots, $r_1$ and $r_2$ are considered, which possess the features depicted in Fig.~\ref{subfig:ex3:map} to execute 2 tasks, $t_1$ and $t_2$, thanks to the capabilities $c_1$ and $c_2$. The mappings from features to capabilities to tasks may represent the following scenario. Two robots are endowed with wheels (feature $f_1$) for mobility (capability $c_1$), as well as a camera (feature $f_2$) and a communication module (feature $f_3$) serving the ability of live streaming (capability $c_2$). Task $t_1$ consists in visiting a location of the state space of the robots, while task $t_2$ consists in visiting a location and live streaming a video feed. The endogenous disturbance consists in robot $r_1$ losing the communication functionality at a certain time instant, compromising its ability of performing task $t_2$, as it cannot live stream video feed anymore. The dashed edge in Fig.~\ref{subfig:ex3:map} is lost, and the robot-to-feature mapping matrix $A$ is modified by setting $A_{31}=0$.

Fig.~\ref{subfig:ex3:traj} depicts the trajectories of the robots under the initial allocation of robot $r_1$ to task $t_2$ and $r_2$ to $t_1$. As $A_{31}=0$, the matrix $F$ changes according to \eqref{eq:cap2agent}. Consequently, the constraint \eqref{eq:miqp:e} in the optimization problem \eqref{eq:allocationalgorithmactual} prevents robot $r_1$ from being allocated to task $t_2$. Thus, the task allocation swaps in order to be able to perform both tasks, as required.
\end{example}

\section{Analysis and Implementation of the Task Prioritization and Execution Algorithm} \label{sec:mixedc-dec}

The definition of the optimization problem as in \eqref{eq:allocationalgorithmactual} gives rise to two main questions: (i) whether, despite its pointwise-in-time nature, the allocation algorithm generates a stable allocation of tasks among robots, and (ii) whether it can be solved in real time to allocate tasks to robots and synthesize control inputs which allow robots to execute them. As far as (i) is concerned, a stable allocation is the amenable condition under which, with time-invariant parameters of the problem and no exogenous or endogenous disturbances, each robot does not continuously switch between the tasks it executes, but rather is able to accomplish one of them. Regarding (ii), \eqref{eq:allocationalgorithmactual} is a mixed-integer quadratic program and, as such, solving it in real time might too computationally intensive.

To address these two issues, in Section~\ref{subsec:convergence} we present results on the convergence of the task prioritization and execution algorithm introduced in the previous section. These results guarantee that the allocation of tasks to a heterogeneous multi-robot system obtained by executing Algorithm~\ref{alg:1} will converge, allowing the robots to complete the tasks that have been assigned to them. Moreover, in Section~\ref{subsec:mixed}, we present a mixed centralized/decentralized implementation of the developed task allocation algorithm which enables its application in online settings.

\subsection{Analysis of Convergence of the Task Prioritization and Execution Algorithm}
\label{subsec:convergence}

In cases where the tasks that the robots are asked to execute are neither coordinated nor time-varying (namely the CBF associated with them does not explicitly depend on the time variable), the following Proposition shows that the application of the task allocation algorithm \eqref{eq:allocationalgorithmactual} leads to a convergent behavior of the robots, whose states converge to a stable equilibrium point.

\begin{proposition}
\label{prop:particular}
Consider $n_r$ robots modeled by the driftless dynamical system
\begin{equation}
\label{eq:driftless}
\dot x_i = g(x_i) u_i,
\end{equation}
executing the control input $u_i^{(k)}$ at time $k$, where $u^{(k)}$ is obtained by solving the task allocation algorithm \eqref{eq:allocationalgorithmactual} at time $k$ in order to perform $n_t$ tasks. Assume that the tasks are characterized by the functions $h_1,\ldots,h_{n_t}$ which do not have an explicit dependence on time. Assume further that the tasks are not coordinated, i.e.
\begin{equation}
h_m(x) = \sum_{i=1}^{n_r} h_{m,i}(x_i)\quad\forall i\in\{1,\ldots,n_t\},
\end{equation}
where
\begin{equation}
\label{eq:gradientclassk}
\left\|\frac{\partial h_{m,i}(x_i)}{\partial x_i}\tr(x_i) \right\| =  \lambda\left(h_{m,i}(x_i)\right),
\end{equation}
$\lambda$ being a class $\mathcal{K}$ function, and there exist a unique $x_{m,i}^\star$---corresponding to the state at which the task characterized by the function $h_{m,i}$ is completed---such that $h_{m,i}(x_{m,i}^\star)=0$. If all robots are capable of performing all tasks to a certain extent, i.e. the matrices $S_i$, $i\in\{1,\ldots,n_r\}$ are positive definite, then the sequences $\{u^{(k)}\}_{k\in\mathbb N}$, $\{\delta^{(k)}\}_{k\in\mathbb N}$, and $\{\alpha^{(k)}\}_{k\in\mathbb N}$, solutions of \eqref{eq:allocationalgorithmactual}, converge as $k\to\infty$. In particular, $u^{(k)}\to0$, $\delta^{(k)}\to0$.
\end{proposition}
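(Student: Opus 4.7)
The plan is to reduce the multi-robot, mixed-integer question to a per-robot, continuous CBF-QP convergence question by exploiting the structural simplifications that the hypotheses of the proposition induce on the MIQP \eqref{eq:allocationalgorithmactual}.

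First, I would observe that since each $S_i$ is positive definite (hence invertible), the right Moore--Penrose inverse coincides with the ordinary inverse, so the projector $\Pi_i = I_{n_t} - S_i S_i^{-1}$ in \eqref{eq:pi} reduces to the zero matrix. The bad-allocation penalty $C\|\Pi_i \alpha_{-,i}\|^2$ therefore vanishes from \eqref{eq:miqp:a}, and constraints \eqref{eq:miqp:e}--\eqref{eq:miqp:f} can be satisfied by assigning any distinct task per robot, so feasibility is guaranteed at every $k$. Combined with the non-coordinated tasks hypothesis $h_m(x) = \sum_i h_{m,i}(x_i)$ and the driftless dynamics $\dot x_i = g(x_i) u_i$, the CBF constraint \eqref{eq:miqp:b} separates into $n_r$ scalar constraints $\frac{\partial h_{m,i}}{\partial x_i}(x_i)\,g(x_i)\,u_i \ge -\gamma(h_{m,i}(x_i)) - \delta_{im}$, each involving only robot $i$. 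Hence, once $\alpha^{(k)}$ is fixed, the MIQP decomposes into $n_r$ independent per-robot QPs of the form already analyzed in \cite{notomista2019constraint}.

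The second step is the per-robot convergence argument. For robot $i$ assigned to task $m$, the priority constraints \eqref{eq:miqp:c} force $\delta_{im}$ to be the smallest relaxation variable, so the effective per-robot QP is the minimum-norm relaxed CBF controller. By Theorem~\ref{thm:cbf}, this controller asymptotically stabilizes $\{x_i : h_{m,i}(x_i) \ge 0\}$, and by the gradient condition \eqref{eq:gradientclassk} the gradient $\partial h_{m,i}/\partial x_i$ vanishes precisely where $h_{m,i}$ does, so this set collapses to the singleton $\{x_{m,i}^\star\}$. As $x_i^{(k)} \to x_{m,i}^\star$, the class-$\mathcal{K}$ property makes both $\gamma(h_{m,i}(x_i))$ and $\|\partial h_{m,i}/\partial x_i\|$ tend to zero, so the CBF constraint admits the pair $(u_i,\delta_{im})=(0,0)$. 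Since the strictly convex per-robot cost $\|u_i\|^2 + l\|\delta_i\|_{S_i}^2$ is minimized uniquely at zero, I conclude $u_i^{(k)} \to 0$ and $\delta_i^{(k)} \to 0$.

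The hardest step, in my view, is closing the loop on the discrete assignment variable $\alpha^{(k)}$: since the target $x_{m,i}^\star$ depends on the current allocation, the closed-loop system is effectively switched, and per-robot convergence alone is insufficient if $\alpha^{(k)}$ switches infinitely often. Because $\alpha^{(k)}$ lies in the finite set $\{0,1\}^{n_t\times n_r}$ filtered by the feasibility constraints, convergence amounts to eventual constancy. The argument I would use is that once the robots are close enough to the equilibria associated with the current $\alpha$, the cost \eqref{eq:miqp:a} attained by that allocation is arbitrarily small, whereas any alternative allocation that sends some robot $i$ from task $m$ to a task $m' \ne m$ forces $\delta_{im'}$ to be bounded below by a strictly positive constant (because $h_{m',i}(x_i^{(k)})$ is bounded away from zero when $x_i^{(k)}$ is near $x_{m,i}^\star \ne x_{m',i}^\star$). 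This strict cost gap, together with the finite cardinality of feasible allocations and a consistent MIQP tie-breaking rule, forces $\alpha^{(k)}$ to settle; coupling this with the per-robot CBF convergence of the previous step then delivers the claim.
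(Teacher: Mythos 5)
Your route is genuinely different from the paper's, and it has two concrete gaps. First, the per-robot convergence step cannot be carried by Theorem~\ref{thm:cbf}: that theorem requires $\frac{\partial h}{\partial x}(x)\neq 0$ on $\partial\mathcal C$, whereas the hypothesis \eqref{eq:gradientclassk} forces $\left\|\frac{\partial h_{m,i}}{\partial x_i}(x_{m,i}^\star)\right\|=\lambda(0)=0$ exactly on the boundary $\partial\mathcal C=\{x_{m,i}^\star\}$; moreover the theorem addresses the unrelaxed admissible set, not the $\delta$-relaxed controller actually produced by \eqref{eq:allocationalgorithmactual}. Second, your closing argument for the eventual constancy of $\alpha^{(k)}$ rests on the claim that the cost attained under the current allocation becomes arbitrarily small near its equilibria. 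This is not true: constraint \eqref{eq:miqp:b} is imposed for \emph{every} pair $(i,m)$, not only for the assigned task, so when robot $i$ sits at $x_{m,i}^\star$ the slack $\delta_{im'}$ for an unassigned task $m'$ must remain bounded below by (roughly) $-\gamma(h_{m'}(x))>0$ unless $u_i$ is made large — either way the cost does not vanish, and the strict gap you need between the incumbent allocation and its competitors is no longer self-evident. The appeal to ``a consistent MIQP tie-breaking rule'' is also an assumption absent from the statement.

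The paper avoids both difficulties by a shorter argument: it takes the MIQP cost itself, $V(\alpha,u,\delta)=\sum_i\left(C\|\Pi_i\alpha_{-,i}\|^2+\|u_i\|^2+l\|\delta_i\|_{S_i}^2\right)$, as a Lyapunov function for the closed loop of Fig.~\ref{fig:lure}, and chains $V^{(k+1)}\le V(\alpha^{(k)},u^{(k+1)},\delta^{(k)})<V^{(k)}$ — the first inequality by optimality of the solution at step $k+1$ against the feasible candidate that freezes $(\alpha^{(k)},\delta^{(k)})$, the second by Proposition~3 of \cite{notomista2019optimal} (which is where the gradient condition \eqref{eq:gradientclassk} is actually used, in place of your appeal to Theorem~\ref{thm:cbf}). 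This makes $V$ decrease regardless of whether $\alpha$ switches, so no separate eventual-constancy argument is needed. Your observation that $S_i\succ0$ implies $\Pi_i=0$ is correct and consistent with the paper (positive definiteness of $S_i$ is what makes $V$ positive definite in $(u,\delta)$), but if you want to keep your decomposition-plus-switching strategy you must replace the ``arbitrarily small cost'' claim with a genuine comparison of the optimal values under competing allocations, accounting for the slacks of unassigned tasks.
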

\begin{proof}[Proof]
Solving \eqref{eq:allocationalgorithmactual} at time $k$ yields $u^{(k)}$, $\delta^{(k)}$, and $\alpha^{(k)}$.
At time $k+1$, by Proposition~3 in \cite{notomista2019optimal} where $\alpha=\alpha^{(k)}$ and $J_m(x)=-h_{m}(x)$, if $\alpha^{(k+1)}=\alpha^{(k)}$ and $\delta^{(k+1)}=\delta^{(k)}$, then $\|u^{(k+1)}\|<\|u^{(k)}\|$ is obtained using \eqref{eq:gradientclassk}.
Let
\begin{equation}
V(\alpha,u,\delta) = \sum_{i = 1}^{n_r} \left( C \| \Pi_i \alpha_{-,i}\|^2 +  \|u_i\|^2 + l \|\delta_i \|_{S_i}^2 \right),
\end{equation}
be a candidate Lyapunov function for the multi-robot system controlled via the solutions of the optimization problem~\eqref{eq:allocationalgorithmactual} (see scheme in Fig.~\ref{fig:lure}). Notice that $V$ is equal to the cost \eqref{eq:miqp:a} and it is positive definite since $S_i$ is positive definite for all $i$ by assumption. Then, one has:
\begin{align}
V^{(k+1)} &= V(\alpha^{(k+1)},u^{(k+1)},\delta^{(k+1)})\\
&\le V(\alpha^{(k)},u^{(k+1)},\delta^{(k)})\\
&< V(\alpha^{(k)},u^{(k)},\delta^{(k)})=V^{(k)}.
\end{align}
Therefore, $V^{(k)}\to0$ as $k\to\infty$. Thus, $u^{(k)}\to0$ as $k\to\infty$, and $x_i^{(k)}\to x_{m,i}^\star$ for some $m$, by the driftless assumption on the robot model \eqref{eq:driftless}.
\end{proof}

The application of the previous results is however restricted to a specific, but nevertheless quite rich, class of tasks. However, in situations where the assumptions of Proposition~\ref{prop:particular} are not satisfied, the following proposition provides us with sufficient conditions to ensure the convergence of the flow of the dynamical system comprised of the multi-robot system, characterized by its nonlinear dynamics, in feedback with the optimization problem embodying the task allocation algorithm (depicted in Fig.~\ref{fig:lure}). The similarity between the system in Fig.~\ref{fig:lure} and the Lure's problem \cite{khalil2002nonlinear} suggests us to resort to techniques that have been widely adopted in the stability analysis of such systems, with the aim of studying the convergence of the task allocation algorithm we propose in this paper. Indeed, in the following proposition a quadratic Lyapunov function is proposed, and conditions to establish the convergence of the task allocation algorithm are given in the form of a linear matrix inequality (LMI) using the S-procedure \cite{boyd1994linear,sprocedure}.

\begin{proposition}
\label{prop:general}
If, for all integers $k$, there exist positive scalars $\tau_1$, $\tau_2$, $\tau_3$ such that
\begin{equation}
\label{eq:lmi}
B_0^{(k)} \le \tau_1 B_1^{(k)} + \tau_2 B_2^{(k)} + \tau_3 B_3^{(k)},
\end{equation}
where $B_0^{(k)}$, $B_1^{(k)}$, $B_2^{(k)}$, $B_3^{(k)}$ are given by \eqref{eq:F} in Appendix~\ref{app:eq:F}, in which $c \in \R_{>0}$, then the sequences $\{u^{(k)}\}_{k\in\mathbb N}$, $\{\delta^{(k)}\}_{k\in\mathbb N}$, and $\{\alpha^{(k)}\}_{k\in\mathbb N}$, solutions of \eqref{eq:allocationalgorithmactual} at time step $k$, converge as $k\to\infty$.
\end{proposition}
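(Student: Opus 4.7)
The plan is to generalize the Lyapunov analysis used in Proposition~\ref{prop:particular} by invoking the S-procedure, exactly as is done for Lure-type systems. I would adopt the same quadratic Lyapunov candidate
\[
V(\alpha,u,\delta) = \sum_{i=1}^{n_r}\Bigl( C\|\Pi_i \alpha_{-,i}\|^2 + \|u_i\|^2 + l\|\delta_i\|_{S_i}^2 \Bigr),
\]
that is, the cost of the MIQP itself, and evaluate it along the optimal sequence $\{(\alpha^{(k)},u^{(k)},\delta^{(k)})\}_{k\in\mathbb N}$ produced by \eqref{eq:allocationalgorithmactual}. Under positive-definiteness of the $S_i$, $V$ is positive definite and bounded below by $0$, so proving convergence reduces to establishing the Lyapunov decrement $V^{(k+1)} \le V^{(k)}$ for every $k$.

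The next step is to stack the consecutive decision variables and the current state into a single augmented vector $z^{(k)}$ and rewrite, on the one hand, a strict decrement of the form $V^{(k+1)} - V^{(k)} + c\,\|(u^{(k+1)},\delta^{(k+1)})\|^2$ as a quadratic form $z^{(k)\tr} B_0^{(k)} z^{(k)}$, and, on the other hand, the three natural groups of constraints defining \eqref{eq:allocationalgorithmactual}---the CBF constraints \eqref{eq:miqp:b}, the prioritization constraints \eqref{eq:miqp:c}--\eqref{eq:miqp:d}, and the allocation/capability constraints \eqref{eq:miqp:e}--\eqref{eq:miqp:h}---as quadratic inequalities $z^{(k)\tr} B_j^{(k)} z^{(k)} \le 0$ for $j=1,2,3$. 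These are precisely the matrices $B_0^{(k)},B_1^{(k)},B_2^{(k)},B_3^{(k)}$ referenced in the statement; the positive scalar $c$ produces the margin of strict decrease.

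I would then invoke the S-procedure: if there exist nonnegative scalars $\tau_1,\tau_2,\tau_3$ such that the matrix inequality \eqref{eq:lmi} holds, then every $z^{(k)}$ for which $z^{(k)\tr} B_j^{(k)} z^{(k)} \le 0$ for $j=1,2,3$ automatically satisfies $z^{(k)\tr} B_0^{(k)} z^{(k)} \le 0$. Since the optimizer at time $k+1$ is by construction feasible for those three groups of constraints, this delivers $V^{(k+1)} \le V^{(k)} - c\,\|(u^{(k+1)},\delta^{(k+1)})\|^2$. Combined with $V^{(k)} \ge 0$, monotone convergence gives $V^{(k)} \to V^\star$ and $(u^{(k)},\delta^{(k)}) \to 0$; since $\alpha^{(k)}$ lives in the finite set $\{0,1\}^{n_t\times n_r}$, the integer sequence $\{\alpha^{(k)}\}$ is eventually constant as well, concluding convergence of all three sequences.

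The main obstacle will be the rigorous lifting of the integer decision variable $\alpha$ into a purely quadratic framework: the S-procedure, as classically formulated, deals with continuous quadratic constraints, so the binary condition $\alpha_{mi}(\alpha_{mi}-1)=0$ has to be encoded as an additional quadratic constraint (or absorbed into one of the $B_j^{(k)}$, most naturally $B_3^{(k)}$). A secondary subtlety is that the S-procedure with three constraints is in general only sufficient and therefore possibly lossy, so the LMI \eqref{eq:lmi} furnishes a sufficient---though possibly conservative---certificate of convergence, which is exactly what Proposition~\ref{prop:general} asserts.
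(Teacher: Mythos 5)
Your high-level strategy---treat the closed loop as a Lure-type system, express a Lyapunov decrease condition and the MIQP constraints as quadratic forms in a stacked vector, and invoke the S-procedure so that feasibility of the constraints implies the decrease---is exactly the strategy the paper announces and follows. However, the concrete Lyapunov function you choose does not match the one the proposition is actually about, and this is not a cosmetic difference because the hypothesis of Proposition~\ref{prop:general} is an LMI in the \emph{specific} matrices $B_0^{(k)},\dots,B_3^{(k)}$ of \eqref{eq:F}. You reuse the MIQP cost $V(\alpha,u,\delta)=\sum_i\bigl(C\|\Pi_i\alpha_{-,i}\|^2+\|u_i\|^2+l\|\delta_i\|_{S_i}^2\bigr)$ from Proposition~\ref{prop:particular} and aim at a discrete decrement $V^{(k+1)}\le V^{(k)}-c\|(u^{(k+1)},\delta^{(k+1)})\|^2$. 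The paper instead uses the state-dependent function $V(x)=\gamma(h(x))\tr\gamma(h(x))$, which measures distance from task completion, and asks for the continuous-time condition $\dot V(x^{(k)},u^{(k)})\le -cV(x^{(k)})$ at each step; one can check that the top-left block $cI$ and the off-diagonal blocks $\tfrac{\mathrm{d}\gamma}{\mathrm{d}h}\tfrac{\mathrm{d}h}{\mathrm{d}x}g(x^{(k)})$ and $\tfrac{\mathrm{d}\gamma}{\mathrm{d}h}\tfrac{\mathrm{d}h}{\mathrm{d}x}f(x^{(k)})$ of $B_0^{(k)}$ in \eqref{eq:F} encode precisely $cV+\dot V\le0$ on the vector $\varphi^{(k)}=[\gamma(h(x^{(k)})),u^{(k)},\delta^{(k)},\bar\alpha^{(k)},1]\tr$, not a cost decrement across two consecutive optimizations. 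Consequently the quadratic form you would build for $B_0^{(k)}$ (which must involve decision variables at both steps $k$ and $k{+}1$) is a different object, and assuming the LMI \eqref{eq:lmi} for the matrices of \eqref{eq:F} gives you no information about your decrement. The conclusion then also runs through a different channel: the paper gets $V(x^{(k)})\to0$, hence convergence of $x^{(k)}$, and deduces convergence of $u^{(k)},\delta^{(k)},\alpha^{(k)}$ as functions parameterized by $x^{(k)}$.

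Two secondary issues would remain even on your own terms. First, to obtain $V^{(k+1)}\le V^{(k)}$ for the MIQP cost you would still need the previous optimizer to remain feasible (or some comparison argument) at step $k{+}1$; that is exactly what requires the extra structural hypotheses of Proposition~\ref{prop:particular} (driftless dynamics, non-coordinated tasks, the gradient condition \eqref{eq:gradientclassk}), none of which are assumed here---this is the very reason the paper switches Lyapunov functions for the general case. Second, your final claim that $\{\alpha^{(k)}\}$ is eventually constant because it lives in a finite set does not follow: a sequence in a finite set need not converge, and convergence of the $\alpha$-dependent part of the cost only pins down $\|\Pi_i\alpha^{(k)}_{-,i}\|^2$, which distinct allocations can share.
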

\begin{proof}[Proof]
For notational convenience, we let the $\bar\alpha = \begin{bmatrix}
\alpha_{-,1}\tr&\ldots&\alpha_{-,n_r}\tr
\end{bmatrix}\tr\in\{0,1\}^{n_t n_r}$ be the vector composed of the stacked columns of $\alpha$, and
\begin{equation}
\bar\Phi = \mathbbm{1}_{n_r} \otimes \Phi, \quad
\bar\Theta = \mathbbm{1}_{n_r} \otimes \Theta, \quad
\bar\Psi = \mathbbm{1}_{n_r} \otimes \Psi,
\end{equation}
$\otimes$ denoting the Kronecker product. From \eqref{eq:deltaalphaconstraint} and with the notation introduced above, one has that 
\begin{equation}
\bar\Phi \bar\alpha \ge 0,
\end{equation}
where the symbol $\ge$ is always intended component-wise. Then, as $\delta\in\R_{\ge0}$ (see discussions in \cite{notomista2019constraint} and \cite{notomista2019optimal}), the constraints \eqref{eq:miqp:b} and \eqref{eq:miqp:c} in \eqref{eq:allocationalgorithmactual} can be re-written as follows:
\begin{align}
&\begin{aligned}
&\delta^{(k)T} L_f h(x^{(k)}) + \delta\tr L_g h(x^{(k)}) u^{(k)}\\ \geq &-\delta^{(k)T} \gamma(h(x^{(k)})) - \delta^{(k)T} \delta^{(k)}
\end{aligned}\label{eq:mixedconstraints1}\\[0.2cm]
&\bar\alpha\tr\bar\Phi\tr \bar\Theta\bar\delta^{(k)} + \bar\alpha\tr\bar\Phi\tr \bar\Phi\bar\alpha^{(k)} \le \bar\alpha\tr\bar\Phi\tr \bar\Psi \label{eq:mixedconstraints2}.
\end{align}
Similarly, the constraints \eqref{eq:miqp:d} to \eqref{eq:miqp:g}, can be re-written as
\begin{align}
&\bar\alpha^{(k)T} A_\alpha\tr A_\alpha \bar\alpha^{(k)} \le \bar\alpha^{(k)T} A_\alpha\tr b_\alpha \label{eq:boundsconstraintsalpha}\\
&\delta^{(k)T} A_\delta\tr A_\delta \delta^{(k)} \le \delta^{(k)T} A_\delta\tr b_\delta \label{eq:boundsconstraintsdelta}.
\end{align}

Then, define the following candidate Lyapunov function:
\begin{equation}
\label{eq:lyapunov}
V(x) = \gamma(h(x))\tr \gamma(h(x)),
\end{equation}
where $h(x)=[h_1(x),\ldots,h_{n_t}(x)]\tr$ and $\gamma(h(x))$ is intended as a component-wise application of the extended class $\mc K_\infty$ function to the vector $h(x)$. We want the following condition on its time derivative to be satisfied at every time step $k$
\begin{align}
\label{eq:lyapunovfunctioncondition}
\dot V(x^{(k)},u^{(k)}) &= 2 \gamma(h(x^{(k)}))\tr\frac{\mathrm{d}\gamma}{\mathrm{d}h}\frac{\mathrm{d}h}{\mathrm{d}x}f(x^{(k)}) \\
&+ 2 \gamma(h(x^{(k)}))\tr\frac{\mathrm{d}\gamma}{\mathrm{d}h}\frac{\mathrm{d}h}{\mathrm{d}x}g(x^{(k)}) u^{(k)} \\
&\le -c V(x^{(k)}),
\end{align}
with $c \in \R_{>0}$.

Defining $\varphi^{(k)}=[\gamma(h(x^{(k)})), u^{(k)}, \delta^{(k)}, \bar\alpha^{(k)}, 1]\tr$, the inequalities \eqref{eq:lyapunovfunctioncondition},  \eqref{eq:mixedconstraints1}, \eqref{eq:mixedconstraints2}, \eqref{eq:boundsconstraintsalpha}, \eqref{eq:boundsconstraintsdelta} can be compactly written as follows:
\begin{gather}
\varphi^{(k)T} B_0^{(k)} \varphi^{(k)} \le 0\\
\varphi^{(k)T} B_1^{(k)} \varphi^{(k)} \le 0\\
\varphi^{(k)T} B_2^{(k)} \varphi^{(k)} \le 0\\
\varphi^{(k)T} B_3^{(k)} \varphi^{(k)} \le 0,
\end{gather}
where $B_0$, $B_1$, $B_2$, and $B_3$ are defined in \eqref{eq:F}.

Thus, applying the S-procedure \cite{boyd1994linear}, the linear matrix inequality \eqref{eq:lmi} in the variables $\tau_1$, $\tau_2$, $\tau_3$ is obtained. If a solution to \eqref{eq:lmi} exists for all $k$, then \eqref{eq:lyapunovfunctioncondition} is satisfied for all $k$, and therefore $V(x^{(k)})\to0$. Consequently $x^{(k)}$ converges as $k\to\infty$, and so do the sequences $\{u^{(k)}\}_{k\in\mathbb N}$, $\{\delta^{(k)}\}_{k\in\mathbb N}$, and $\{\alpha^{(k)}\}_{k\in\mathbb N}$, solution of \eqref{eq:allocationalgorithmactual}, parameterized by $x^{(k)}$.
\end{proof}

\begin{remark}[Certificate of feasibility of \eqref{eq:lmi}]
The convergence of the multi-robot system executing the allocated tasks as shown in Proposition~\ref{prop:general} hinges on the existence of solution of \eqref{eq:lmi}. In \cite{boyd1994linear}, necessary and sufficient conditions for the existence of solutions are provided. For instance, for robots modeled with linear systems and tasks modeled with quadratic functions $h_m$, stability of the task allocation can be certified by solving an algebraic Riccati equation.
\end{remark}

Despite the flexibility determined by the variety of scenarios encompassed by the optimization-based task allocation formulation presented in this section, its mixed-integer nature does not allow, in most cases, to scale its applicability to a large number of robots \cite{lee2011mixed}. Therefore, it is not always possible to solve the proposed task allocation optimization program \eqref{eq:allocationalgorithmactual} in an online fashion under real-time constraints. Thus, in the following section, we propose a mixed centralized/decentralized execution strategy which allows the computation of the task prioritization as well as the control inputs required by the robots to execute the tasks to take place in online settings.

\subsection{Mixed Centralized/Decentralized Implementation of the Task Prioritization and Execution Algorithm}
\label{subsec:mixed}

\begin{figure}
    \centering
    \includegraphics[width=0.32\textwidth]{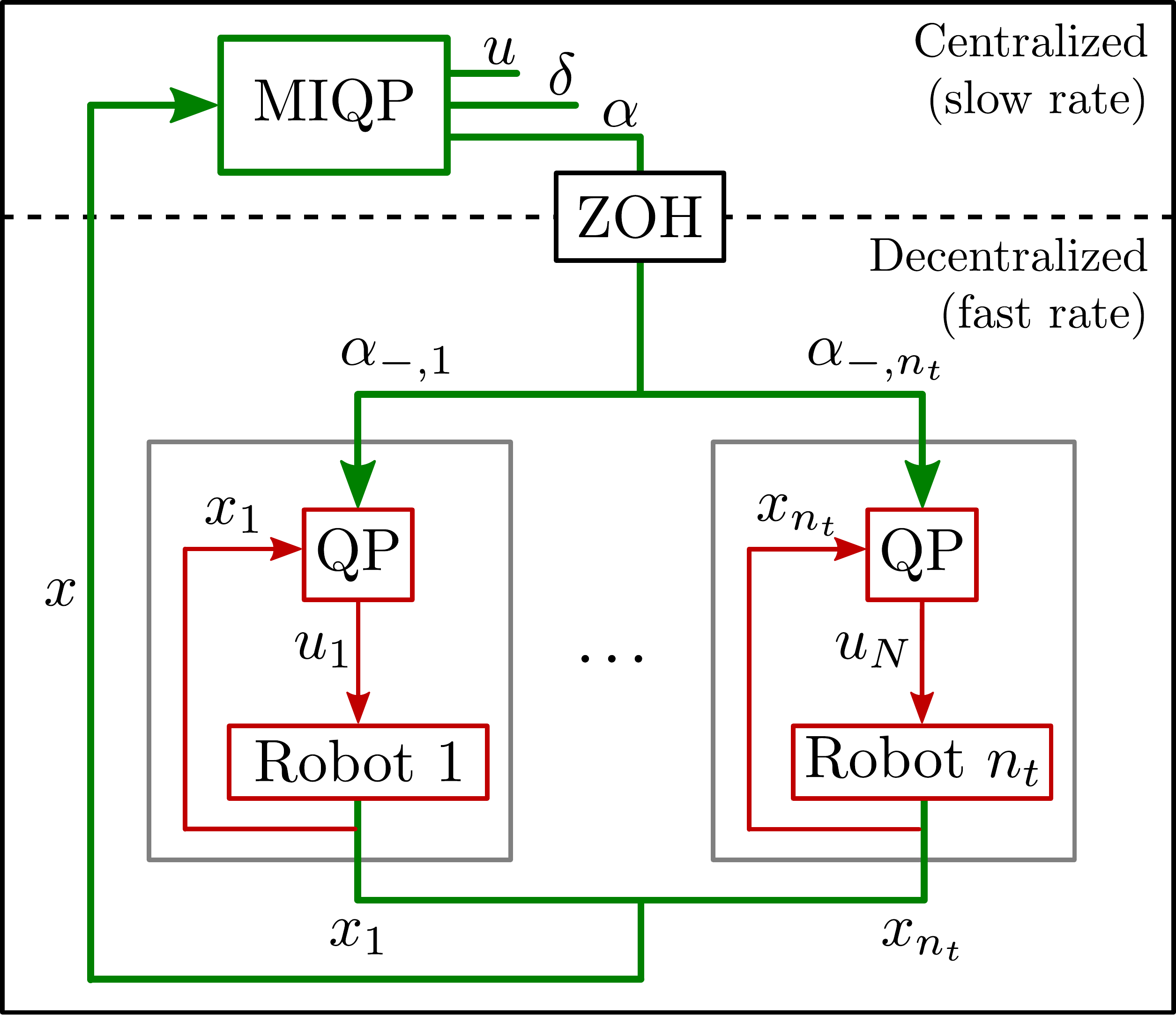}
    \caption{A mixed centralized/decentralized architecture to implement the task allocation and execution algorithm. Unlike the MIQP centralized formulation in \eqref{eq:allocationalgorithmactual}, the allocation is solved separately from the execution. The former is evaluated in a centralized fashion based on the states collected from all the robots, and it typically happens at a slower rate due to the computational complexity of mixed-integer programs. The latter is solved by each robot in a decentralized way once the allocation (in terms of $\alpha_{-,i}$) is received by the robots from the central computational unit. At the interface between slow and fast rate a zero-order hold block signifies that each robot $i$ receives a new allocation vector $\alpha_{-,i}$ each time this is obtained by solving the MIQP.}
    \label{fig:mixedcentralizeddecentralized}
\end{figure}

In order to allow the applicability of the proposed task prioritization and execution algorithm to scenarios where a large number of robots have to execute a large number of tasks, in the following we propose an alternative mixed centralized/decentralized formulation. We then analyze the performance in terms of task allocation and execution compared to the MIQP developed in the previous section.

To this end, the optimization problem \eqref{eq:allocationalgorithmactual} is solved by a central computational unit for $u$, $\delta$, and $\alpha$. The central computational unit then communicates to each robot $i$ only its allocation vector $\alpha_{-,i}$. At this point, each robot can solve the following convex quadratic program (QP) in order to compute the control input it requires to execute the task prioritized based on its prioritization vector $\alpha_{-,i}$ received by the central computational unit:

\begin{subequations} \label{eq:executionalgorithm}
\begin{flalign}
\text{\bf Task execution optimization problem (QP)} \tag{\ref{eq:executionalgorithm}}&&\label{eq:executionalgorithmactual}
\end{flalign}
\vspace{-0.75cm}
\begin{align}
\minimize_{u_i,\delta_i} & \|u_i\|^2 + l \|\delta_i \|_{S_i}^2 \label{eq:qp:a}\\
\subjto & L_f h_{m}(x) + L_g h_{m}(x) u_i \\
&\qquad\geq -\gamma(h_{m}(x)) - \delta_{im} \label{eq:qp:b} \\
&\Theta\delta_i + \Phi\alpha_{-,i}  \le \Psi \label{eq:qp:c}\\
&\|\delta_i\|_\infty \leq \delta_\text{max} \label{eq:qp:d}\\
&\hspace{2cm}\forall m \in \{1\ldots n_t\}.
\end{align}
\noeqref{eq:qp:a}\noeqref{eq:qp:b}\noeqref{eq:qp:c}\noeqref{eq:qp:d}
\end{subequations}
\hspace{-0.32cm} Depending on the coordinated nature of the tasks, the solution of \eqref{eq:executionalgorithm} can be obtained with or without communication between the robots. See \cite{notomista2019constraint} for a detailed discussion on how to achieve a coordinated control of multi-robot systems using this formulation.
Figure~\ref{fig:mixedcentralizeddecentralized} summarizes the described mixed centralized/decentralized architecture.

Notice that, if solving the centralized MIQP cannot be done at each time step, by following the mixed centralized/decentralized approach, each robot solves for its control input $u_i$ using an outdated value of its prioritization vector $\alpha_{-,i}$, which is calculated by the central unit using old values of the state $x_i$ of the robots. Depending on the time that the central computational unit takes to solve the MIQP, the difference between the input $u_i$ solution of \eqref{eq:executionalgorithm} and the one that would have been obtained by solving \eqref{eq:allocationalgorithmactual} might be different. In the following, we quantify the error that is introduced in the control input $u_i$ by adopting the mixed centralized/decentralized approach, rather than solving the centralized MIQP at each time step. 

For notational convenience, we introduce the following mappings. We denote by
\begin{equation}
\GammaMIQP \colon\R^{n_x n_r}\to\{0,1\}^{n_t\times n_r} \colon x \mapsto \alpha
\end{equation}
the natural projection of the solution map of \eqref{eq:allocationalgorithmactual}, and by
\begin{equation}
\GammaQP \colon\R^{n_x n_r}\times\{0,1\}^{n_t\times n_r}\to\R^{n_u n_r} \colon (x,\alpha) \mapsto u,
\end{equation}
the natural projection of the solution map of \eqref{eq:executionalgorithm} for all the robots. Moreover, we let $\Gamma(\cdot,\cdot) = \GammaQP(\cdot, \GammaMIQP(\cdot))$, and denote by $\GammabarMIQP$ the solution map of the QP relaxation of \eqref{eq:allocationalgorithmactual} projected onto the subspace of allocation vectors $\alpha$---where $\alpha\in[0,1]^{n_t\times n_r}\subset\R^{n_t\times n_r}$.

Assume that, at time $k \Deltat$, the central unit receives $\xk$ from the $n_r$ robots, and solves the MIQP \eqref{eq:allocationalgorithmactual} obtaining $\alphak=\GammaMIQP\left(\xk\right)$. This computation is assumed to take $n$ steps \footnote{If stopping criteria are not met, the algorithm times out after $n\Deltat$ seconds.}, or $n\Deltat$ seconds. At time $(k+n)\Deltat$, the central unit transmits the computed allocation values $\alphak$ to the robots, each of which solves \eqref{eq:executionalgorithm}, and the input to the robots can be expressed as $\ukn=\GammaQP\left(\xkn,\alphak\right)$. This is assumed to take 1 step, or $\Deltat$ seconds.
We are interested in quantifying the difference between the robot control inputs $\ukn$ evaluated by the robots with the old value $\alphak$ and the control input $\hatukn$ that would be evaluated with the current value $\alphakn$. This difference is given by \eqref{eq:difference}, and the different contributions are explicitly broken down in \eqref{eq:udiff} in Appendix~\ref{app:eq:udiff}, using the sensitivity results in \cite{granot1990some}. The notation $\Delta(A)$ in \eqref{eq:udiff} denotes the maximum of the absolute values of the determinants of the square submatrices of the matrix A. Moreover, $\mathscr A$ and $\mathscr B$ denote the matrix and the vector such that the inequality constraints in \eqref{eq:allocationalgorithmactual} can be written as
\begin{equation}
\mathscr A \begin{bmatrix}
u\\
\delta\\
\alpha
\end{bmatrix} \le \mathscr B,
\end{equation}
and, provided that the conditions of Theorem~5.2 in \cite{fiacco1990sensitivity} hold, $\LMIQP$, $\LQP$, and $L_{\dot x}$ are the Lipschitz constants of the mappings $\GammaMIQP$, $\GammaQP$, and the robot dynamics \eqref{eq:controlaffine}, respectively.

As expected, the bound on $\left\|\ukn-\hatukn\right\|$ in \eqref{eq:udiff} is a monotonically increasing function of the number of optimization variables, of the values $\LQP$ and $\LMIQP$---which, in turn, depend on the parameters of the optimization problem \cite{fiacco1990sensitivity}---, of $L_{\dot x}$, and of $n\Deltat$, i.e. the time required by the central computational unit to solve the MIQP. In particular, the bound in \eqref{eq:udiff} is comprised of two terms: the first one depends on the mixed-integer nature of the allocation algorithm \eqref{eq:allocationalgorithmactual}, while the second one is due to the computation time that the central unit takes in order to solve the allocation optimization. The effect of the mixed-integer programming is the most critical one, as it is proportional to $n_t^2 n_r^3$, and vanishes only when the solution of the MIQP \eqref{eq:allocationalgorithmactual} is equal to that of its QP relaxation. The term depending on the computational time, instead, vanishes if the MIQP can be solved at each time step.

\begin{remark}[Communication delays]
Notice that the time to communicate the allocation solution to all the robots, if not negligible, can be added to the quantity $n\Delta t$ to account for the effects of communication delays in the execution of the allocated tasks.
\end{remark}

We conclude this section by summarizing the mixed centralized/decentralized implementation of the proposed task allocation optimization problem in Algorithm~\ref{alg:4}, which is combined with Algorithms~\ref{alg:2} and \ref{alg:3} to obtain an \emph{efficient} implementation of the optimal allocation and execution algorithm \emph{resilient to endogenous and exogenous disturbances}. This combination will be showcased in the next section, where the implementation of the developed allocation and execution algorithm on a real multi-robot platform is presented.

\begin{algorithm}
	\caption{Mixed centralized/decentralized implementation of task allocation and execution}
	\label{alg:4}
 	\begin{algorithmic}[1]
	\Require
		\Statex Tasks $h_m,~m\in\{1,\ldots,n_t\}$
		\Statex Mappings $F$, $T$
		\Statex Parameters $n_{r,m,\text{min}}$, $n_{r,m,\text{max}}$, $\delta_\text{max}$, $C$, $l$
	\State Evaluate $S_i,~\forall i\in\{1,\ldots,n_r\}$ \Comment \eqref{eq:specialization}
	\Procedure{Central computational unit}{}
		\While{true}
			\State Get robots' state $x_i,~\forall i\in\{1,\ldots,n_r\}$
			\State Calculate allocation $\alpha$ \Comment \eqref{eq:allocationalgorithmactual}
			\State Send allocation $\alpha_{-,i},~\forall i\in\{1,\ldots,n_r\}$ to robots
			\State Update matrices $S$ and $F$ if required \Comment Algs.~\ref{alg:2}, \ref{alg:3}
		\EndWhile
    \EndProcedure
	\Procedure{Robot $i$}{}
		\While{true}
			\State Receive allocation $\alpha_{-,i}$ if ready
			\State Calculate input $u_i$ and execute \Comment \eqref{eq:executionalgorithmactual}
		\EndWhile
    \EndProcedure
 	\end{algorithmic}
\end{algorithm}

\section{Experiments} \label{sec:exp}

\begin{figure}
\centering
\includegraphics[trim={13cm 2.5cm 10.5cm 1.5cm},clip,width=0.3\textwidth]{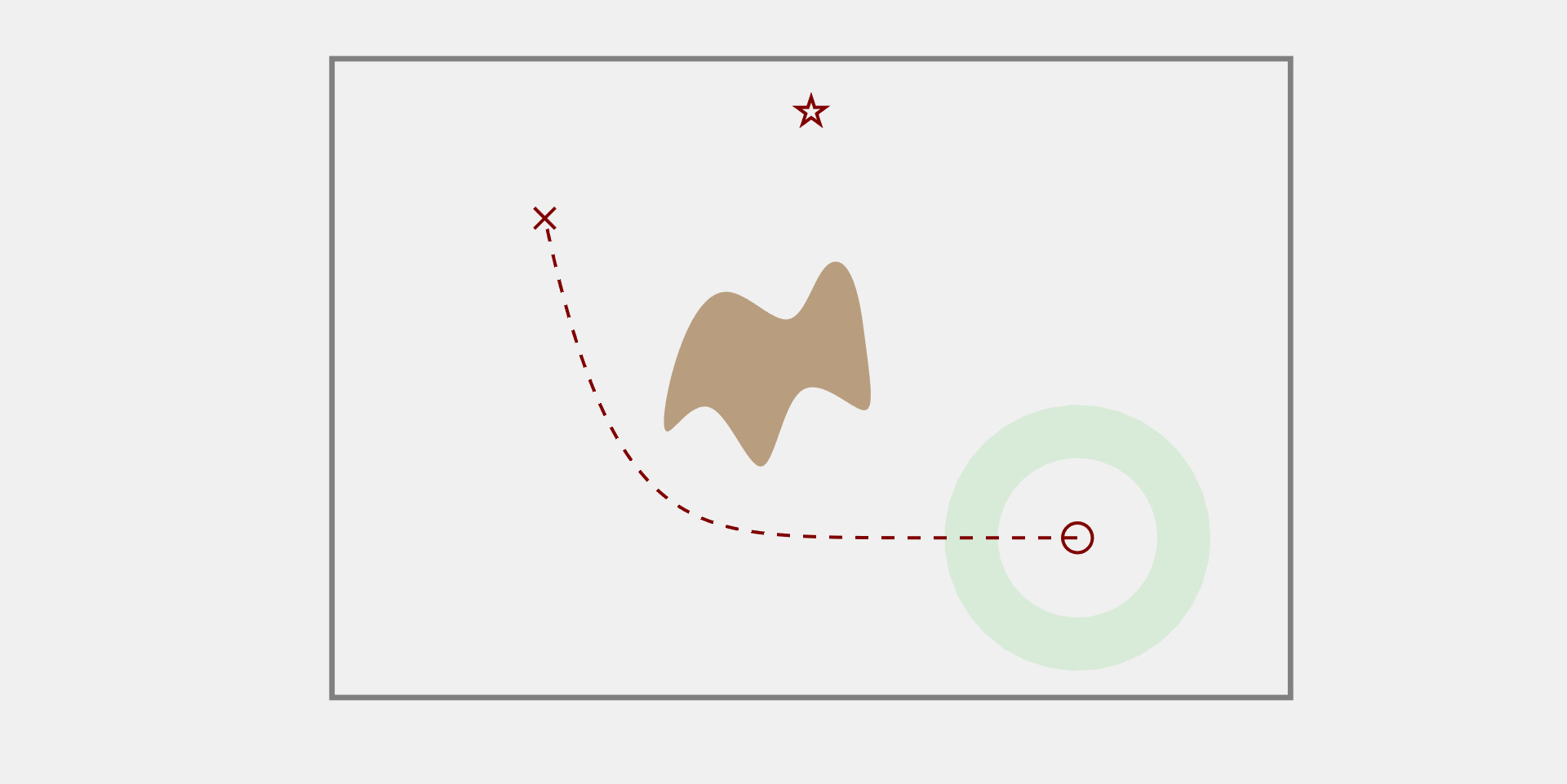}
\caption{Experimental scenario. The robots need to perform $2$ tasks: $1$ robot has to navigate in the environment to reach a goal point (red cross) following the dashed trajectory, while $3$ robots have to escort it by arranging themselves around it (on the green ring) while simultaneously monitoring a point of interest (red star). The brown blob in the middle of the rectangular environment represents a low-friction zone where the motion of ground robots is impeded.}
\label{fig:experiment:scenario}
\end{figure}

In order to illustrate the properties of the resilient task prioritization and execution framework developed and demonstrated in this paper, in this section we present the results of its implementation on a team of mobile robots in the Robotarium \cite{wilson2020robotarium}, a remotely accessible swarm robotics testbed. The scenario of the experiment is depicted in Fig.~\ref{fig:experiment:scenario}. A team of 5 mobile robots, each endowed with a simulated camera system, are deployed in a 3.6$\times$2.4 m rectangular domain and have to perform $2$ tasks: task $t_1$ consists of 1 robot moving along a desired trajectory navigating the environment from a starting point (red circle in Fig.~\ref{fig:experiment:scenario}) to a goal point (red cross in Fig.~\ref{fig:experiment:scenario}); to perform task $t_2$, $3$ robots need to escort the robot executing task $t_1$ by arranging themselves into a ring around it while simultaneously monitoring a point of interest with their cameras (red star in Fig.~\ref{fig:experiment:scenario}). The physical robots are differential drive robots. In the experiment, we model their motion as well as that of their cameras using the following single integrator dynamics:
\begin{equation}
\label{eq:robotcameramodel}
\dot x_{i,1} = u_1,\quad \dot x_{i,2} = u_2,\quad \dot x_{i,3} = u_3,
\end{equation}
where $p_i = [x_{i,1},x_{i,2}]\tr\in\R^2$ represents the position of robot $i$, and $x_{i,3}\in[0,2\pi]$ is the orientation of its camera. $u_1, u_2, u_3\in\R$ are the velocity inputs to the robot and to the camera.

Task $t_1$ is realized by tracking a predefined trajectory, while task $t_2$ is achieved by implementing a weighted coverage control \cite{cortes2004coverage,santos2019decentralized} in order to arrange the robots on the green ring in Fig.~\ref{fig:experiment:scenario}. The two tasks are encoded by the following two CBFs, respectively:
\begin{align}
&h_{1,i}(x,t) = - \| p_i - \hat p(t) \|^2\\
&h_{2,i}(x,t) = - \| p_i - G_i(x) \|^2 - \| x_{i,3} - \angle (p^*-p_i) \|^2,
\end{align}
where $\hat p : \R_{\ge0} \to \R^2$ is the desired trajectory (dashed line in Fig.~\ref{fig:experiment:scenario}), $p^*\in\R^2$ is the position of the point of interest to monitor (red star in Fig.~\ref{fig:experiment:scenario}), $\angle (p^*-p_i)$ denotes the angle formed by the vector $p^*-p_i$ with the horizontal coordinate axis, and $G_i(x)$ is the centroid of the Voronoi cell corresponding to robot $i$. In order to achieve the desired arrangement of robots performing task $t_2$ around the robot performing task $t_1$, the centroids $G_i$ have been evaluated as follows:
\begin{equation}
G_i(x) = \frac{\int_{\mc V_i} p_i \phi(p_i) \mathrm{d} p_i}{\int_{\mc V_i} \phi(p_i) \mathrm{d} p_i} \in\R^2
\end{equation}
where $\mc V_i$ is the Voronoi cell of robot $i$, $\phi(p_i)$ is the function
\begin{equation}
\phi(p_i) = e^{-k(\|p_i-\hat p(t)\|^2-r^2)^2},
\end{equation}
with $k \in \R_{>0}$ and $r$ being the radius of the green circle in Fig.~\ref{fig:experiment:scenario} (see \cite{cortes2004coverage} for details on coverage control). These two parameters have been set to $k=100$ and $r=0.4$.

\begin{figure}
\centering
\includegraphics[width=0.24\textwidth]{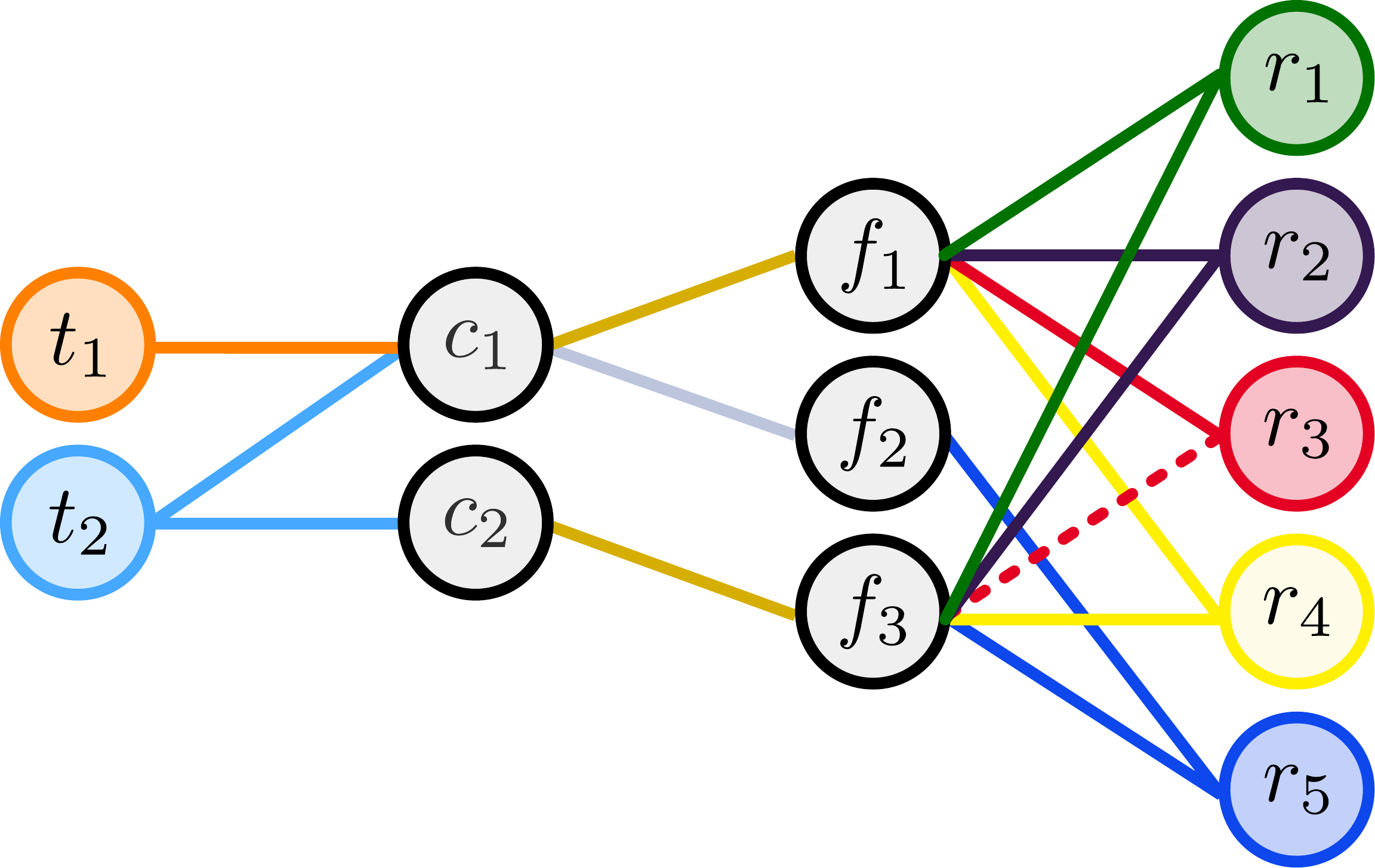}
\caption{Robots, features, capabilities and tasks mappings used for the experiment on the Robotarium. The features are wheels to locomote on the ground ($f_1$), propellers to locomote in the air ($f_2$), and a camera ($f_3$). The resulting capabilities are locomotion ($c_1$) and monitoring of a point of interest ($c_2$). Tasks consist of navigating the environment to reach a goal point ($t_1$), escorting the robot navigating the environment by arranging around it and monitoring a point of interest ($t_2$).}
\label{fig:experiment:map}
\end{figure}

Moreover, in order to be able to perform the prescribed tasks, the robots need certain features which allow them to exhibit the capabilities required by the two tasks. The mappings employed for the experiments are depicted in the bipartite graph in Fig.~\ref{fig:experiment:map}. The available features are wheels to locomote on the ground ($f_1$), set of propellers to fly ($f_2$), and a camera ($f_3$). The capabilities required to perform the given tasks are mobility ($c_1$) and monitoring ($c_2$). The former is supported by features $f_1$ or $f_2$, while the latter by $f_3$. To perform task $t_1$, only $c_2$ is required, while both capabilities are required for $t_2$. Finally, robots $r_1$ to $r_4$ are each endowed with wheels and a camera, while $r_5$ is able to fly---depicted in the Robotarium experiment by projecting down the shape of a quadcopter at its location---and possesses a camera.

Moreover, since 1 robot is required to be assigned to task $t_1$ and 3 robots to $t_2$ for all times, the following parameters have been set for the experiment:
\begin{equation}
T=\begin{bmatrix}
1&0\\
3&3
\end{bmatrix},\qquad\begin{gathered}
n_{r,1,\text{min}}=n_{r,1,\text{max}}=1,\\
n_{r,2,\text{min}}=n_{r,2,\text{max}}=3.
\end{gathered}
\end{equation}
Furthermore, the remaining parameters of \eqref{eq:allocationalgorithmactual} have been set to: $C = 10^6$, $l = 10^{-6}$, $\gamma \colon s \mapsto 5s$, $\kappa = 10^6$, $\delta_\text{max} = 10^3$. This choice was made considering the facts that (i) high values of $C$ result in the robot specialization to be respected as accurately as possible, and (ii) low values of $l$ result in the robots executing the assigned tasks as good as possible.

\begin{figure*}
\centering
\subfloat[][]{\label{subfig:exp:1}\includegraphics[trim={0cm 0cm 5cm 1cm},clip,width=0.245\textwidth]{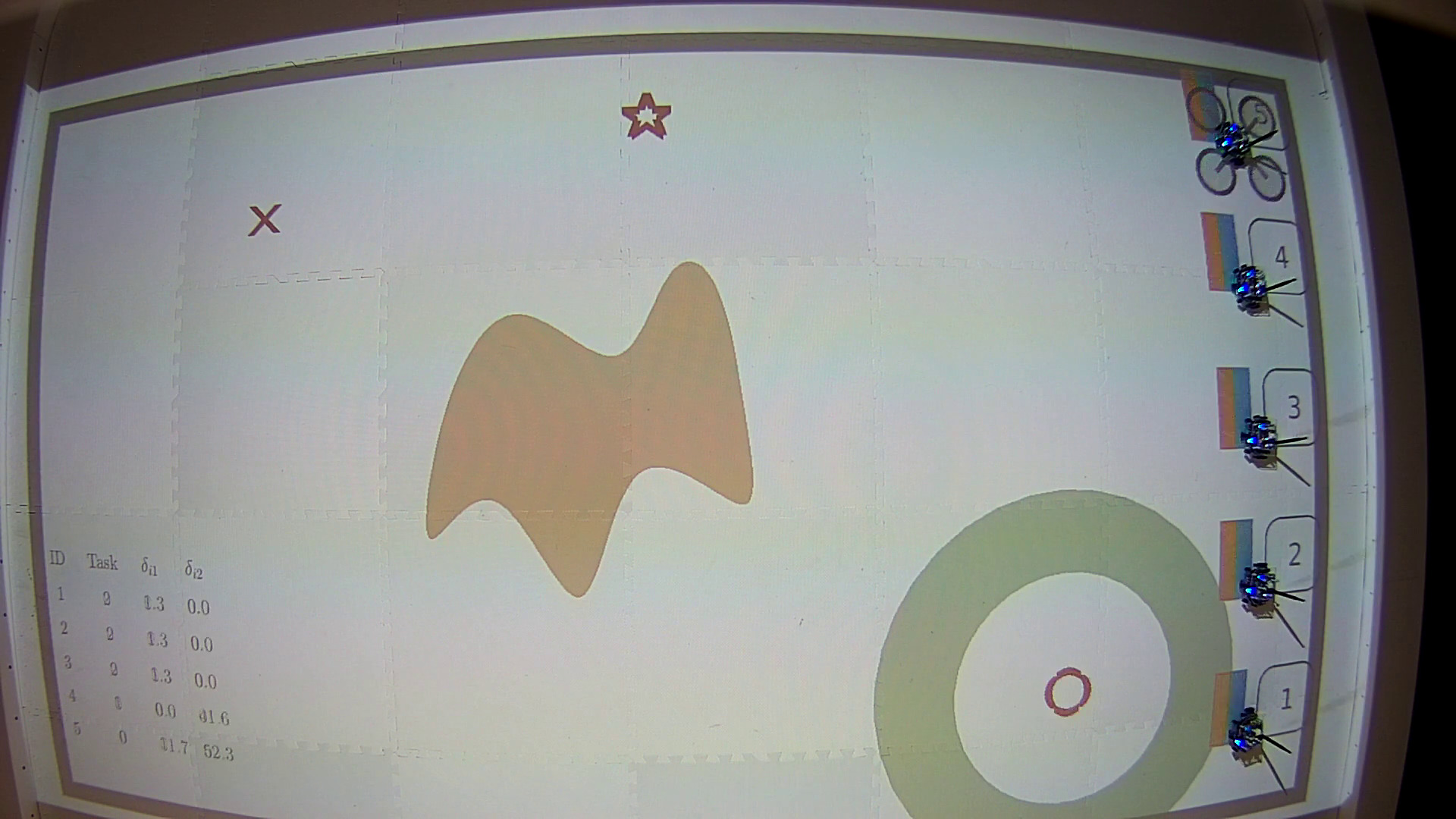}}\hfill
\subfloat[][]{\label{subfig:exp:2}\includegraphics[trim={0cm 0cm 5cm 1cm},clip,width=0.245\textwidth]{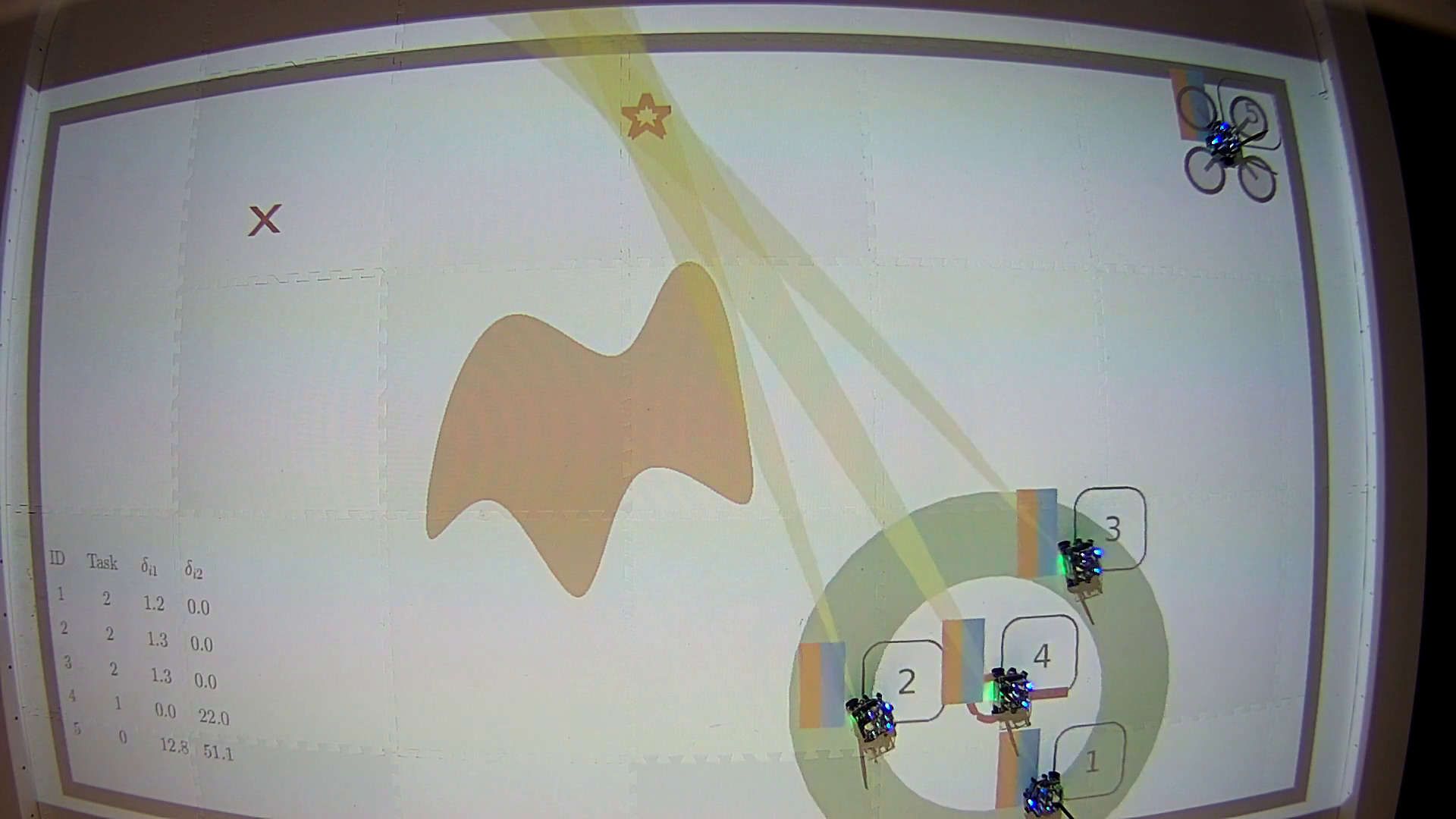}}\hfill
\subfloat[][]{\label{subfig:exp:3}\includegraphics[trim={0cm 0cm 5cm 1cm},clip,width=0.245\textwidth]{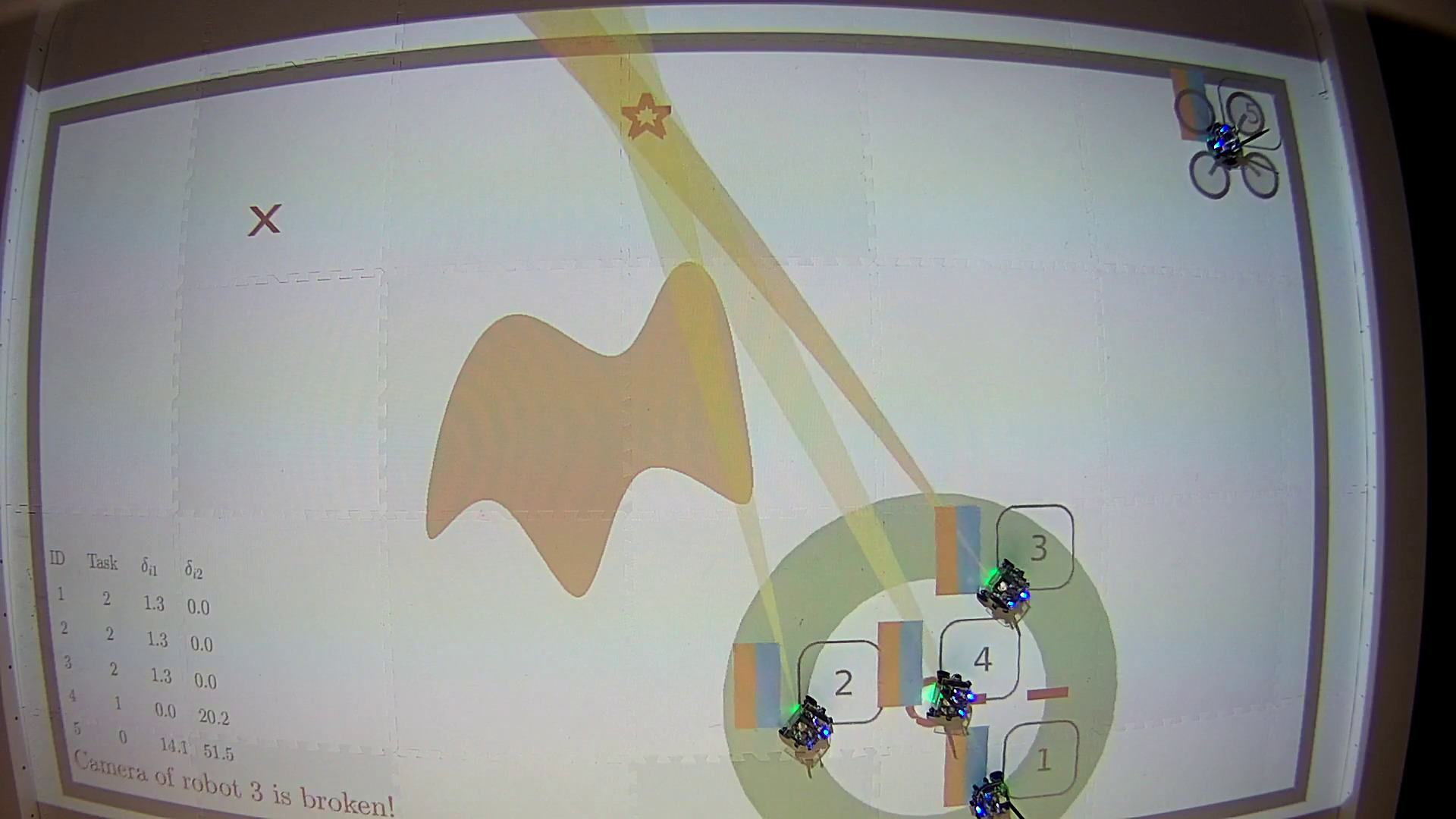}}\hfill
\subfloat[][]{\label{subfig:exp:4}\includegraphics[trim={0cm 0cm 5cm 1cm},clip,width=0.245\textwidth]{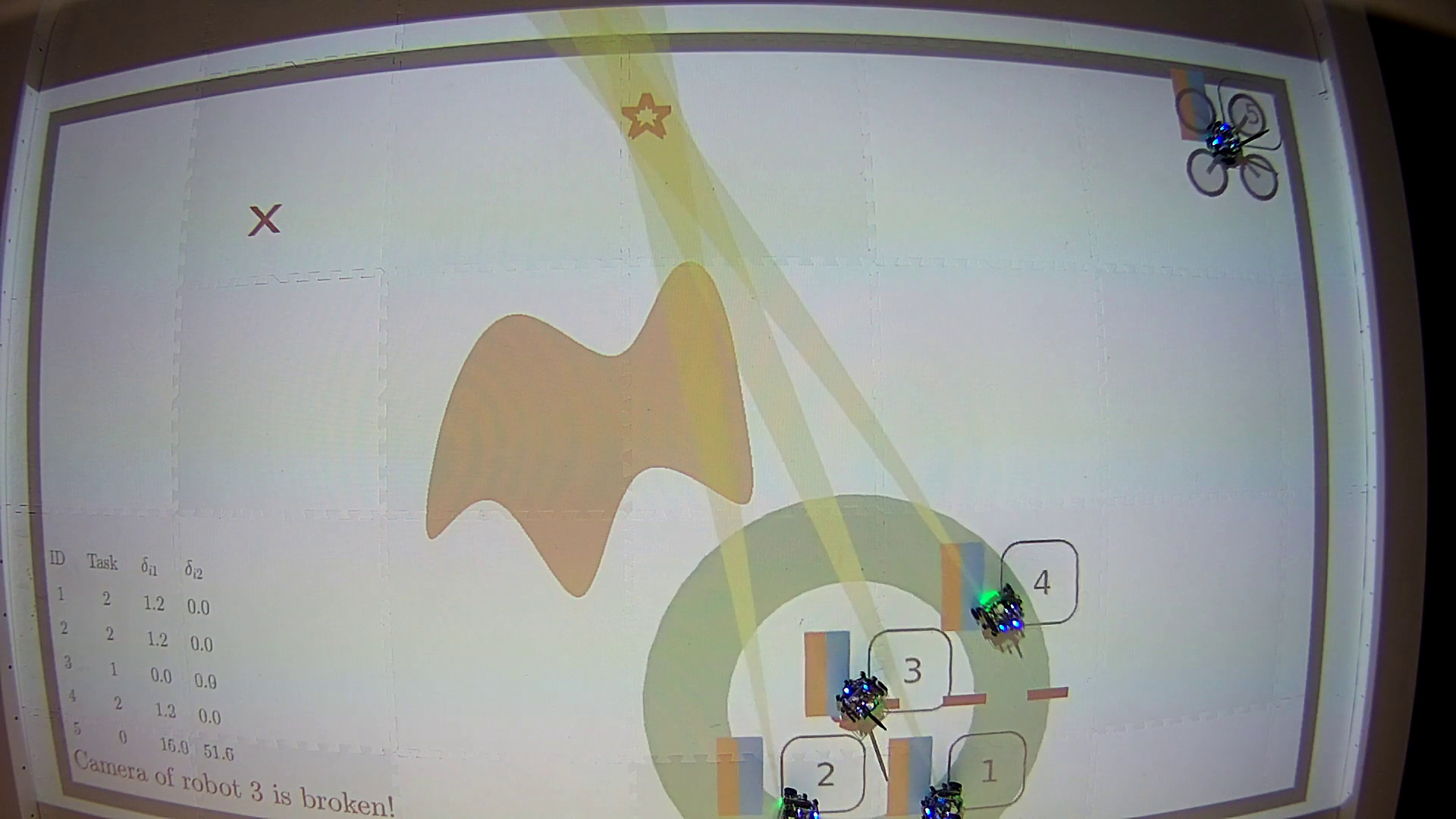}}\\
\subfloat[][]{\label{subfig:exp:5}\includegraphics[trim={0cm 0cm 5cm 1cm},clip,width=0.245\textwidth]{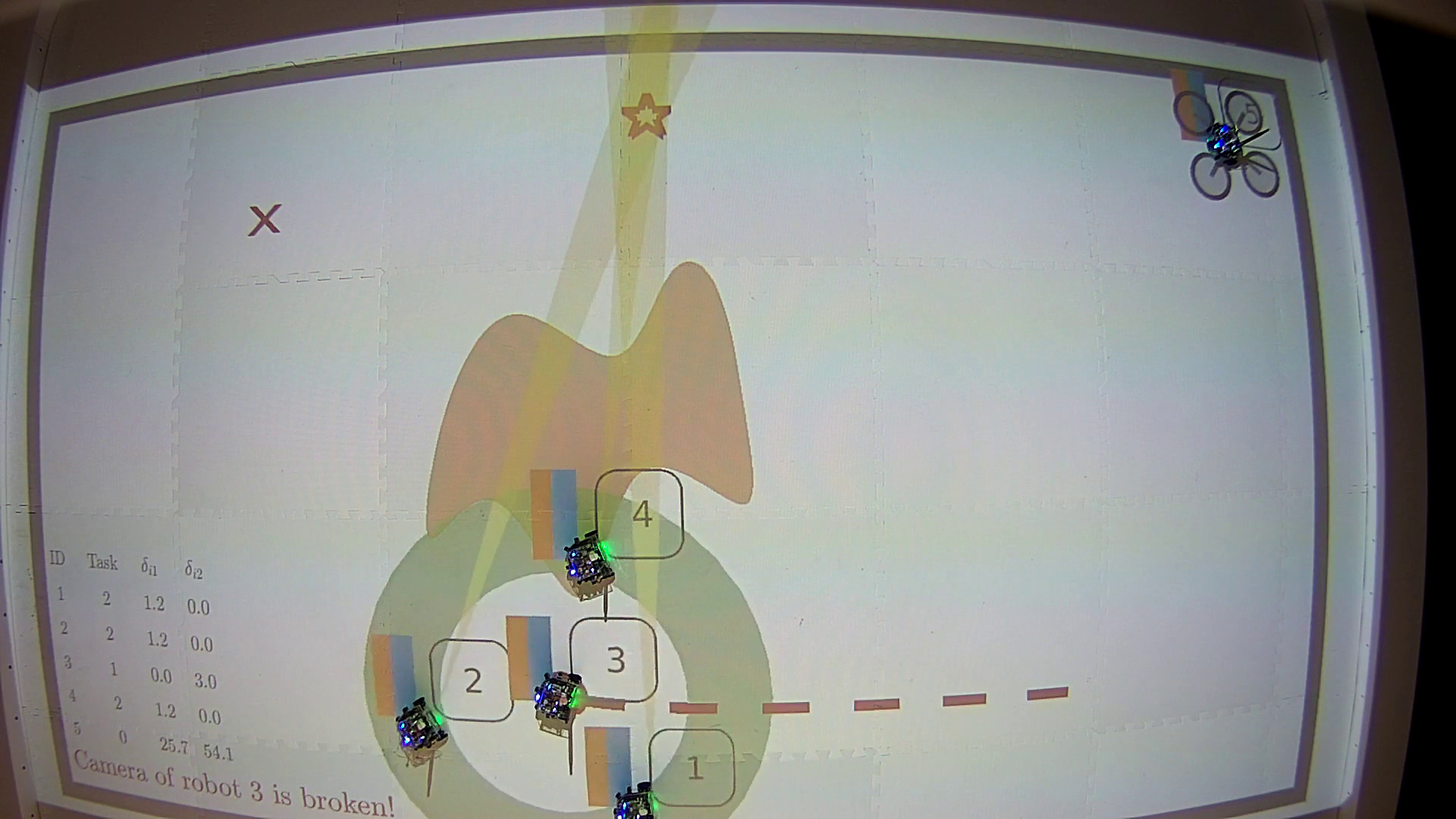}}\hfill
\subfloat[][]{\label{subfig:exp:6}\includegraphics[trim={0cm 0cm 5cm 1cm},clip,width=0.245\textwidth]{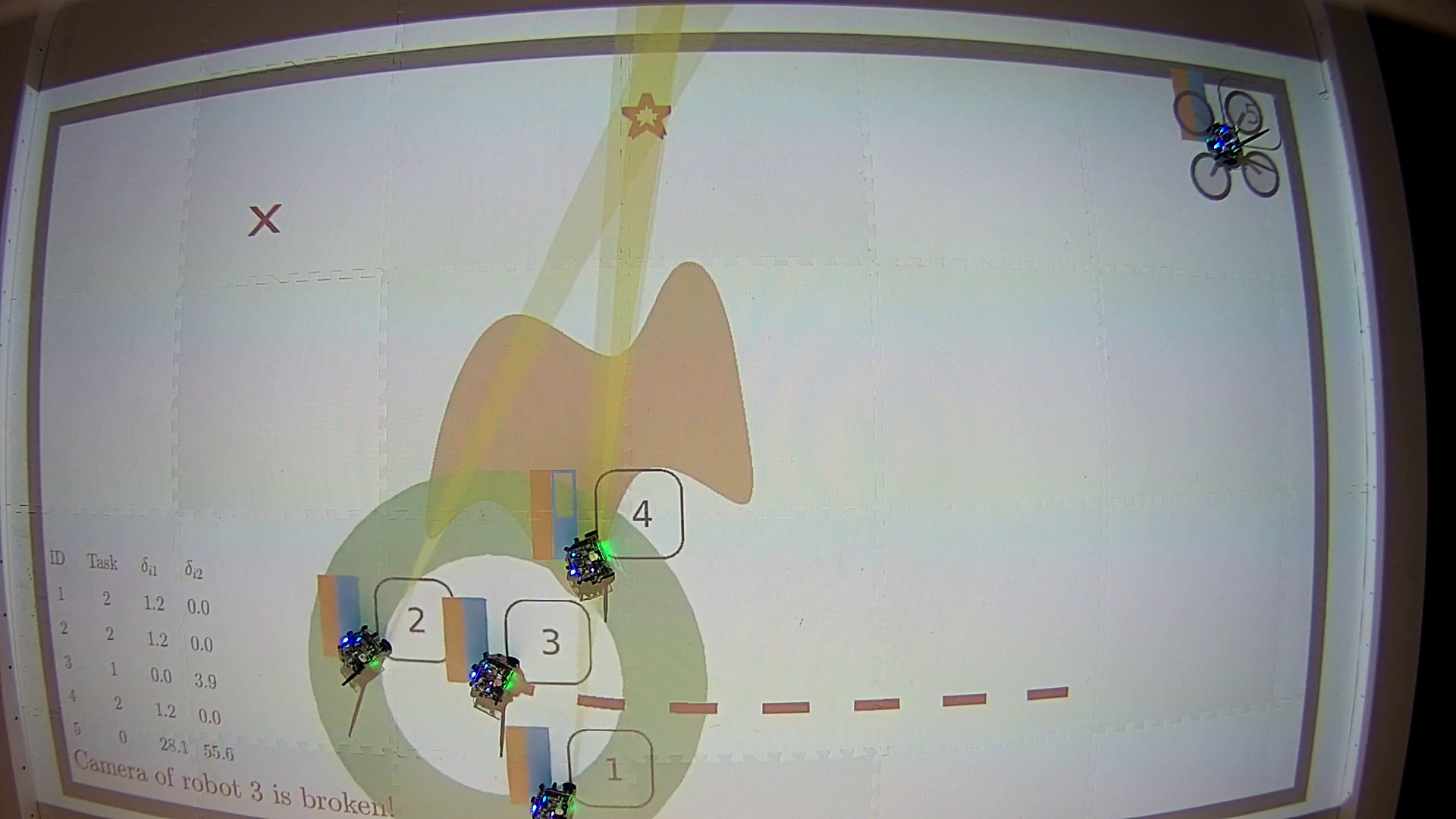}}\hfill
\subfloat[][]{\label{subfig:exp:7}\includegraphics[trim={0cm 0cm 5cm 1cm},clip,width=0.245\textwidth]{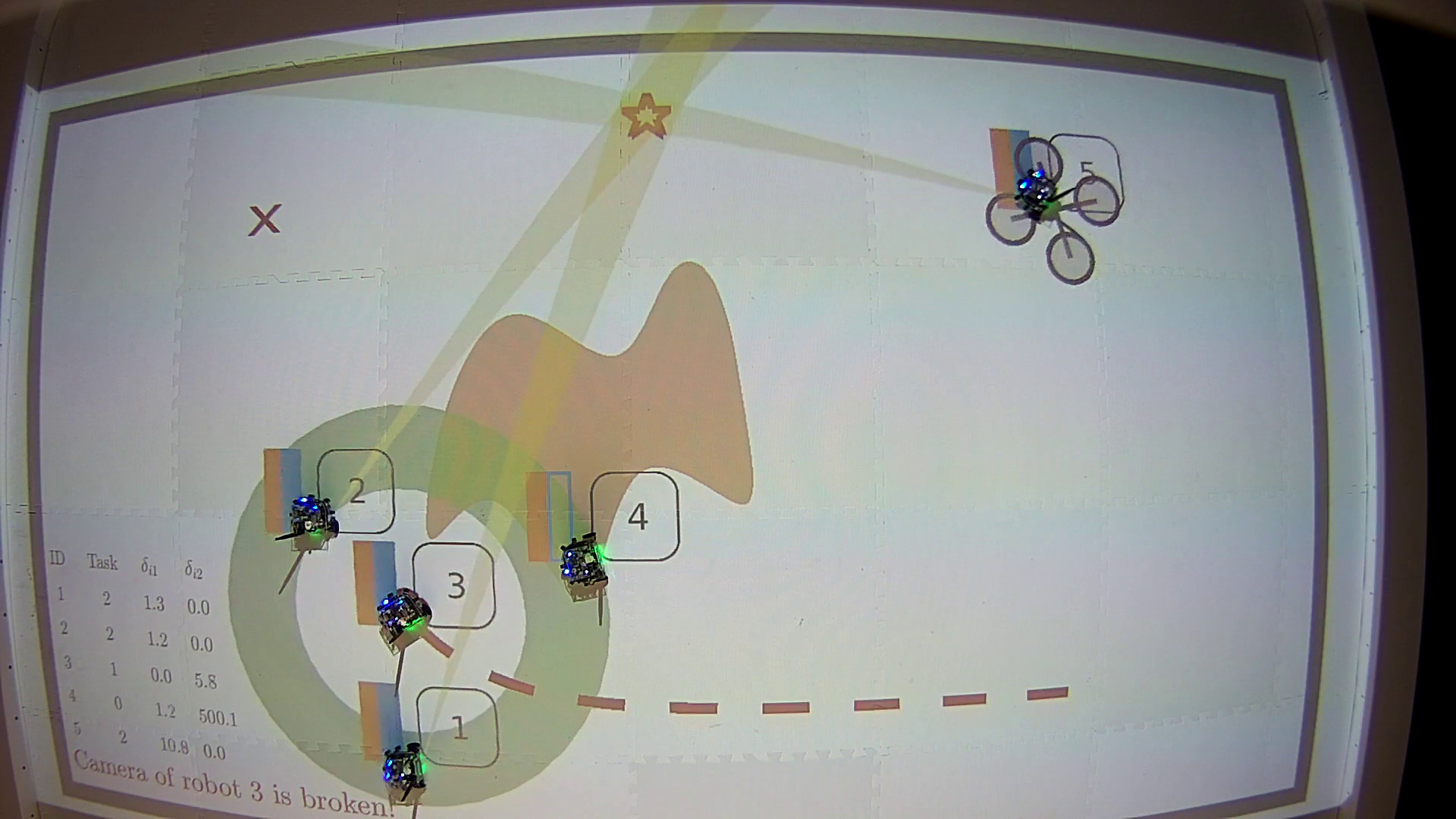}}\hfill
\subfloat[][]{\label{subfig:exp:8}\includegraphics[trim={0cm 0cm 5cm 1cm},clip,width=0.245\textwidth]{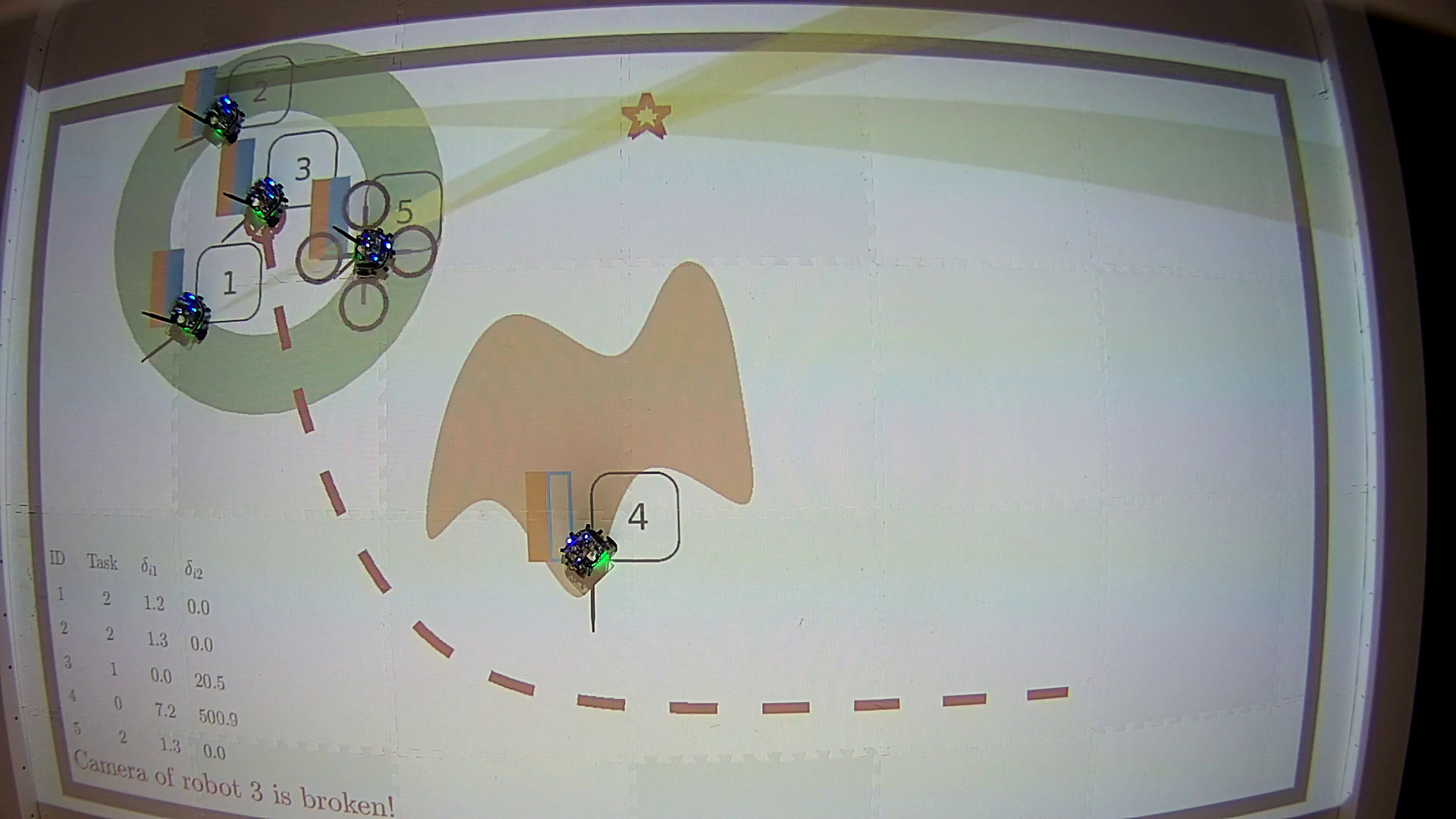}}
\caption{Snapshots recorded during the course of the experiment on the Robotarium \cite{wilson2020robotarium}. The scenario is the one depicted in Fig.~\ref{fig:experiment:scenario}. Five robots are initially arranged along the right side of the rectangular environment (Fig.~\ref{subfig:exp:1}). The robot ID is projected down onto the Robotarium testbed at the top right corner of each robot. The capabilities of each robot to perform task $t_1$ and $t_2$ are indicated by vertical progress bars at the top left corner of each robot, using the same color codes as in Fig.~\ref{fig:experiment:map}, i.e. orange for $t_1$ and blue for $t_2$. The height of the progress bars indicates the capability of the robots during the course of the experiment. Solving the task allocation optimization problem \eqref{eq:allocationalgorithmactual} results in the following initial allocation: $r_4$ is allocated to $t_1$, which entails navigating the environment to reach the red cross, and $r_1$, $r_2$ and $r_3$ are allocated to $t_3$, for which they need to escort $r_4$ during its mission. In Fig.~\protect\ref{subfig:exp:2}, $r_1$, $r_2$ and $r_3$ are arranged around $r_4$ and have pointed their cameras at the red star: the field of view of the cameras are depicted as thin yellow triangles. In Fig.~\protect\ref{subfig:exp:3}, the endogenous disturbance takes place: the camera of $r_3$ breaks---this event is represented by the the field of view of its camera becoming red. As $r_3$ is not able to perform the monitoring required for $t_2$, the constraints \eqref{eq:miqp:e} and \eqref{eq:miqp:f} result in $r_3$ swapping its allocation with $r_4$ (Fig.~\protect\ref{subfig:exp:4}). In Fig.~\protect\ref{subfig:exp:5}, the exogenous disturbance is encountered: $r_4$ is not able to move away from the simulated low-friction zone (brown area in the middle of the environment). Thanks to the update law \eqref{eq:spUpdate1}, the specialization of $r_4$ to perform $t_2$ drops to 0 (in Fig.~\protect\ref{subfig:exp:6}, the blue progress bar corresponding to task $t_2$ next to $r_4$ is emptying). The task allocation recruits $r_5$ to perform $t_2$, while $r_4$ is not assigned to any task (Fig.~\protect\ref{subfig:exp:7}). In Fig.~\protect\ref{subfig:exp:8}, the robot team has successfully completed both tasks as desired: 1 robot has reached the goal point (red cross) while being escorted at all times by 3 more robots. The full video of the experiment is available  online at \texttt{https://youtu.be/fdfYID7u72o}, where it is also possible to see, at the bottom left of the frames, a table containing current allocated task and values of the components of $\delta_i$ for each robot over the course of the experiment.}
\label{fig:experiments}
\end{figure*}

The total duration of the experiment is $80$ seconds. During this time span, the resilience of the allocation algorithm to both endogenous and exogenous disturbances is tested. At time $t=15$~s, the feature $f_3$ of robot $r_3$ is lost (endogenous disturbance), depicted by the dashed red edge on the hypergraph in Fig.~\ref{fig:experiment:map}. Moreover, in the middle of the environment, a region of low friction is present (brown blob in Fig.~\ref{fig:experiment:scenario}). This prevents the robots endowed with wheels from moving (exogenous disturbance).

In Fig.~\ref{fig:experiments}, snapshots recorded during the course of the experiment are shown. The robots start on the right of the rectangular environment (Fig.~\ref{subfig:exp:1}). The task prioritization and execution framework results in the following allocation: $r_4$ is allocated to $t_1$ and therefore has to navigate the environment to reach the red cross, while $r_1$, $r_2$ and $r_3$ are assigned to $t_3$ and thus need to escort $r_4$ during its mission. Using coverage control, they arrange themselves around $r_4$ and point their cameras---whose field of view is depicted through a yellow beam projected down onto the Robotarium testbed---at the red star (Fig.~\ref{subfig:exp:2}). At $t=15$~s, the camera of $r_3$ breaks (Fig.~\ref{subfig:exp:3}). Therefore, it cannot keep on executing $t_2$. The constraints \eqref{eq:miqp:e} and \eqref{eq:miqp:f} result in $r_3$ swapping its allocation with $r_4$ (Fig.~\ref{subfig:exp:4}). Around $t=50$~s, one of the robots, specifically $r_4$, encounters the low-friction zone, and, as a result, its motion is impeded (Fig.~\ref{subfig:exp:5}). The update law \eqref{eq:spUpdate1} makes the specialization of $r_4$ towards task $t_2$ drop (depicted as progress bars next to $r_4$ in Fig.~\ref{subfig:exp:6}). When the specialization of $r_4$ towards task $t_2$ reaches 0, the task allocation driven by the cost in \eqref{eq:allocationalgorithmactual} changes once again to adapt to the unexpected environmental conditions: $r_5$ is recruited to perform $t_2$ while $r_4$ is relieved of its duty (Fig.~\ref{subfig:exp:7}). The last snapshot (Fig.~\ref{subfig:exp:8}) shows the robot team successfully accomplishing both tasks as desired: $1$ robot has reached the goal point (red cross) while being escorted at all times by $3$ more robots.

\begin{figure}
\centering
\includegraphics[width=0.42\textwidth]{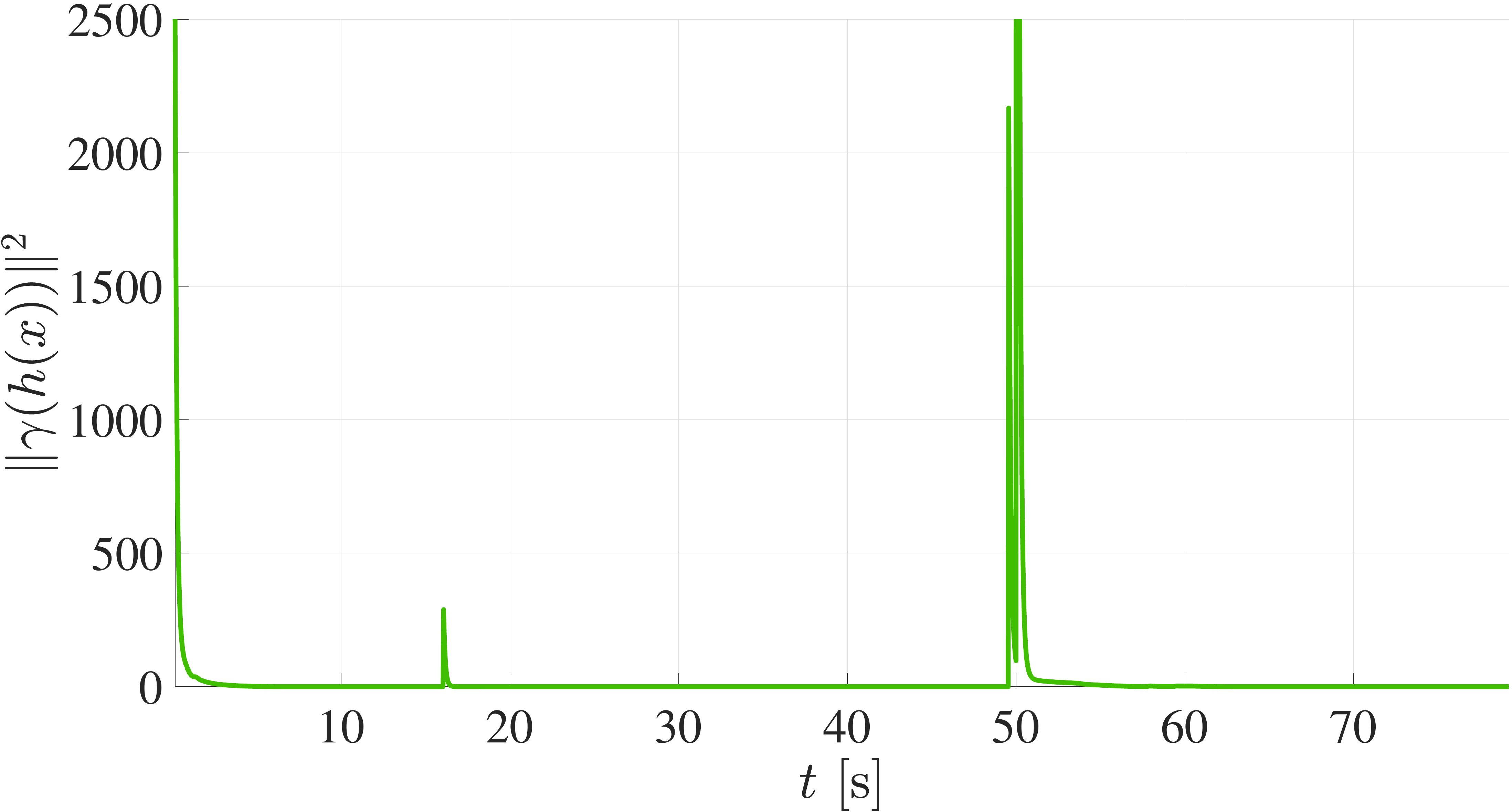}
\caption{Trajectory of the value of the Lyapunov function \eqref{eq:lyapunov} recorded over the course of the Robotarium experiments. At the beginning of the experiment, it decreases as the robots perform the assigned tasks. The endogenous and exogenous disturbances at $t=15$~s and $t=50$~s, respectively, make the value of the Lyapunov function jump to higher values, which are promptly decreased by the execution of the tasks by the robots, owing to the stability guarantees given in Proposition~\ref{prop:general}.}
\label{fig:lyapunov}
\end{figure}

The result of Proposition~\ref{prop:general} gives us another way of highlighting the resilience of the task allocation algorithm, by observing the trajectory of the Lyapunov function \eqref{eq:lyapunov}. Its value recorded over the course of the experiment is depicted in Fig.~\ref{fig:lyapunov}. At the beginning of the experiment, the value of the Lyapunov function $V$ decreases as the robots perform the assigned tasks. The endogenous disturbance at $t=15$~s makes the allocation swap: by means of the stability properties highlighted in Proposition~\ref{prop:general}, the allocation algorithm makes the robots perform forward progress towards the accomplishment of the tasks---which results in a decrease of the Lyapunov function for $t>15$~s. Towards the end of the experiment, a similar situation is observed where the exogenous disturbance of one of the robots incapable of moving anymore results in a change of the task allocation. Again, owing to the aforementioned stability properties, the execution of the tasks makes the Lyapunov function decrease again towards $0$ after a jump due to the allocation swap.

\begin{figure}
\centering
\includegraphics[width=0.42\textwidth]{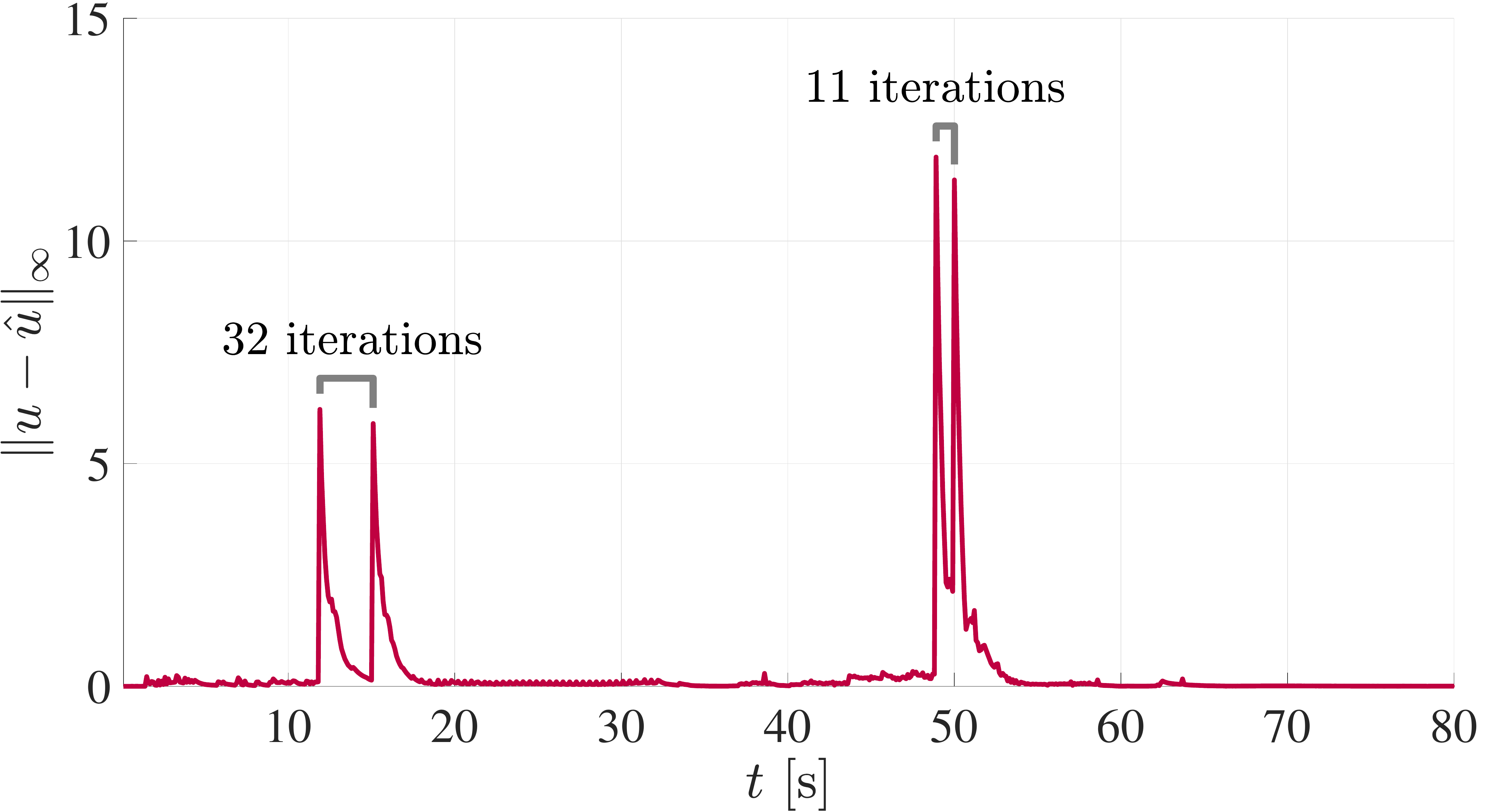}
\caption{Comparison, in terms of robot input difference, between simulations of mixed centralized/decentralized task allocation \eqref{eq:executionalgorithm} with up-to-date and outdated $\alpha$, respectively. The former has been obtained by solving the MIQP \eqref{eq:allocationalgorithmactual} at each time step and provide each robot with its allocation vector $\alpha_{-,i}$ in order to solve the QP \eqref{eq:executionalgorithm}. Without endogenous or exogenous disturbances, the difference between the inputs $\hat u$---obtained by solving the QP \eqref{eq:executionalgorithm} with $\alpha_{-,i}$ obtained by the MIQP \eqref{eq:allocationalgorithmactual} \textit{at each time step}---and $u$---synthesized using the QP \eqref{eq:executionalgorithm} with $\alpha_{-,i}$ obtained from the MIQP \eqref{eq:allocationalgorithmactual} \textit{whenever it is available}---is close to $0$. The difference peaks around the times of the endogenous and exogenous disturbances: this phenomenon is due to the delay introduced by the time required to solve the MIQP \eqref{eq:allocationalgorithmactual}. The allocation changes 34 and 11 iterations later in the case of endogenous and exogenous disturbances, respectively. This effect due to the computation time is highlighted in \eqref{eq:udiff}.}
\label{fig:mixed_comparison}
\end{figure}

To conclude, as observed in Section~\ref{sec:mixedc-dec}, the developed task prioritization and execution framework would not be realizable in realistic scenarios unless a mixed centralized/decentralized strategy is implemented. In the Robotarium experiment, two communicating processes have run in parallel: one responsible for solving the task allocation optimization problem \eqref{eq:allocationalgorithmactual}, and one with the objective of synthesizing the controller for the robots given the task allocation, using \eqref{eq:executionalgorithm}. To show the difference between the implementation of a purely centralized allocation strategy versus a mixed centralized/decentralized one, both have been simulated and the results in terms of difference between robot inputs are reported in Fig.~\ref{fig:mixed_comparison}. From the graph, it is clear that without the effect of disturbance, the difference between the inputs $\hat u$---obtained by solving the QP \eqref{eq:executionalgorithm} with $\alpha_{-,i}$ obtained by the MIQP \eqref{eq:allocationalgorithmactual} \textit{at each time step}---and $u$---synthesized using the QP \eqref{eq:executionalgorithm} with $\alpha_{-,i}$ obtained from the MIQP \eqref{eq:allocationalgorithmactual} \textit{whenever it is available}---is close to $0$. The peaks around the times of the endogenous and exogenous disturbances are due to the fact that in the mixed centralized/decentralized case, there is a delay of $n$ times steps (the effect of computation time in \eqref{eq:udiff}) in recomputing the task allocation. In fact, solving the MIQP \eqref{eq:allocationalgorithmactual} takes on average 100 steps required to solve the QP \eqref{eq:executionalgorithm}, using the MATLAB CVX library \cite{grant2009cvx} and the Gurobi solver \cite{optimization2014inc}.

\section{Conclusion} \label{sec:conc}

In this paper, we have presented an optimization-based task prioritization and execution framework that achieves a resilient and energy-aware task allocation strategy for heterogeneous multi-robot systems. The approach lies its foundations on a proposed decomposition of the ability of the robots at performing tasks into features, capabilities and specialization of the robots. Moreover, the approach builds up on the notion of set-based tasks, where each task executed by the robots is characterized by a set encoded using a control barrier function. These modeling choices allow us to prioritize tasks by considering the different specialization that different robots have at performing different tasks, effectively realizing a \emph{heterogeneous} task allocation. Furthermore, the optimization-based and pointwise-in-time nature of the task allocation algorithm contribute to foster its \emph{resilience} properties.

We showed ways to achieve resilience with respect to endogenous disturbances (failure of a robot caused by loss of features) as well as exogenous disturbances (caused by unmodeled phenomena in the environment) which leverage the reactive nature of the formulation. Moreover, we demonstrated how the formulation allows us to specify both the number of robots and the amount of capabilities required to perform a certain task. This way, thanks to the energy-awareness of the algorithm, robots which are not required to perform tasks are not utilized. Nevertheless, they can potentially be recruited at any point in time, achieving, this way, autonomy-on-demand in the context of task allocation.

The effectiveness of the proposed approach is showcased through a mixed centralized/decentralized implementation of the developed task allocation strategy on a team of 5 mobile robots, possessing 3 features and 2 capabilities to perform 2 tasks, under both endogenous and exogenous disturbances.

\onecolumn
\appendices

\section{Matrices Defined in Propositions~\ref{prop:general}}
\label{app:eq:F}

\begin{equation}
	\label{eq:F}
	\begin{gathered}
		B_0^{(k)} = \begin{bmatrix}
			cI & \frac{\mathrm{d}\gamma}{\mathrm{d}h}\frac{\mathrm{d}h}{\mathrm{d}x} g(x^{(k)}) & 0 & 0 & \frac{\mathrm{d}\gamma}{\mathrm{d}h}\frac{\mathrm{d}h}{\mathrm{d}x} f(x^{(k)})\\
			\frac{\mathrm{d}\gamma}{\mathrm{d}h}\frac{\mathrm{d}h}{\mathrm{d}x}g(x^{(k)})\tr & 0 & 0 & 0 & 0\\
			0 & 0 & 0 & 0 & 0\\
			0 & 0 & 0 & 0 & 0\\
			\frac{\mathrm{d}\gamma}{\mathrm{d}h}\frac{\mathrm{d}h}{\mathrm{d}x} f(x^{(k)})\tr & 0 & 0 & 0 & 0\\
		\end{bmatrix},\\
		B_1^{(k)} = \begin{bmatrix}
			0 & 0 & -\frac{1}{2}I & 0 & 0\\
			0 & 0 & -\frac{1}{2}L_gh(x^{(k)})\tr & 0 & 0\\
			-\frac{1}{2}I & -\frac{1}{2}L_gh(x^{(k)}) & -I & 0 & -\frac{1}{2}L_fh(x^{(k)})\\
			0 & 0 & 0 & 0 & 0\\
			0 & -\frac{1}{2}L_fh(x^{(k)})\tr & 0 & 0 & 0\\
		\end{bmatrix},\\
		B_2^{(k)} = \begin{bmatrix}
			0 & 0 & 0 & 0 & 0\\
			0 & 0 & 0 & 0 & 0\\
			0 & 0 & 0 & \frac{1}{2}\bar\Theta\bar\Phi & 0\\
			0 & 0 & \frac{1}{2}\bar\Phi\tr\bar\Theta\tr & \bar\Phi\tr\bar\Phi & \frac{1}{2}\bar\Phi\bar\Psi\\
			0 & 0 & 0 & \frac{1}{2}\bar\Psi\tr\bar\Phi\tr & 0\\
		\end{bmatrix},\qquad
		B_3^{(k)} = \begin{bmatrix}
			0 & 0 & 0 & 0 & 0\\
			0 & 0 & 0 & 0 & 0\\
			0 & 0 & A_\delta\tr A_\delta & 0 & \frac{1}{2}A_\delta\tr b_\delta\\
			0 & 0 & 0 & A_\alpha\tr A_\alpha & \frac{1}{2}A_\alpha\tr b_\alpha\\
			0 & 0 & \frac{1}{2}b_\delta\tr A_\delta & \frac{1}{2}b_\alpha\tr A_\alpha & 0
		\end{bmatrix}
	\end{gathered}
\end{equation}

\section{Bounds Employed in Subsection~\ref{subsec:mixed}}
\label{app:eq:udiff}

\begin{equation}
	\label{eq:difference}
	\begin{aligned}
		\left\|\ukn-\hatukn\right\|_\infty &= \left\|\Gamma\left(\xkn,\xk\right)-\Gamma\left(\xkn,\xkn\right)\right\|_\infty\\
		&= \left\|\GammaQP\left(\xkn,\GammaMIQP\left(\xk\right)\right)-\GammaQP\left(\xkn,\GammaMIQP\left(\xkn\right)\right)\right\|_\infty
	\end{aligned}
\end{equation}

\begin{equation}
	\label{eq:udiff}
	\begin{aligned}
		\left\|\ukn-\hatukn\right\|_\infty &= \left\|\GammaQP\left(\xkn,\GammaMIQP\left(\xk\right)\right)-\GammaQP\left(\xkn,\GammaMIQP\left(\xkn\right)\right)\right\|_\infty\\
		&\le \left\|\GammaQP\left(\xkn,\GammaMIQP\left(\xk\right)\right)-\GammaQP\left(\xkn,\GammabarMIQP\left(\xk\right)\right)\right\|_\infty\\
		&\quad+\left\|\GammaQP\left(\xkn,\GammabarMIQP\left(\xk\right)\right)-\GammaQP\left(\xkn,\GammabarMIQP\left(\xkn\right)\right)\right\|_\infty\\
		&\quad+\left\|\GammaQP\left(\xkn,\GammaMIQP\left(\xkn\right)\right)-\GammaQP\left(\xkn,\GammabarMIQP\left(\xkn\right)\right)\right\|_\infty\\
		&\le L_\text{QP}\Big(\left\|\left(\xkn,\GammaMIQP\left(\xk\right)\right)-\left(\xkn,\GammabarMIQP\left(\xk\right)\right)\right\|_\infty\\
		&\quad+\left\|\left(\xkn,\GammabarMIQP\left(\xk\right)\right)-\left(\xkn,\GammabarMIQP\left(\xkn\right)\right)\right\|_\infty\\
		&\quad+\left\|\left(\xkn,\GammaMIQP\left(\xkn\right)\right)-\left(\xkn,\GammabarMIQP\left(\xkn\right)\right)\right\|_\infty\Big)\\
		&\le L_\text{QP}\Big(n_t^2 n_r^3 m \Delta\left(\mathscr A\left(\xk\right)\Big|\mathscr B\left(\xk\right)\right)\\
		&\quad+L_\text{MIQP}\left\|\left(\xkn,\xk\right)-\left(\xkn,\xkn\right)\right\|_\infty\\
		&\quad+n_t^2 n_r^3 m \Delta\left(\mathscr A\left(\xkn\right)\Big|\mathscr B\left(\xkn\right)\right)\Big)\\
		&\le L_\text{QP}\Big(n_t^2 n_r^3 m \Delta\left(\mathscr A\left(\xk\right)\Big|\mathscr B\left(\xk\right)\right)\\
		&\quad+L_\text{MIQP}\left(\left\|\x{k}-\x{k+1}\right\|_\infty+\left\|\x{k+1}-\x{k+2}\right\|_\infty+\ldots+\left\|\x{k+n-1}-\x{k+n}\right\|_\infty\right)\\
		&\quad+n_t^2 n_r^3 m \Delta\left(\mathscr A\left(\xkn\right)\Big|\mathscr B\left(\xkn\right)\right)\Big)\\
		&\le \underbrace{L_\text{QP}n_t^2 n_r^3 m \left(\Delta\left(\mathscr A\left(\xk\right)\Big|\mathscr B\left(\xk\right)\right)+\Delta\left(\mathscr A\left(\xkn\right)\Big|\mathscr B\left(\xkn\right)\right)\right)}_\text{Effect of mixed-integer programming}+\underbrace{L_\text{QP}L_\text{MIQP} L_{\dot x} n \Deltat}_\text{Effect of computation time}.
	\end{aligned}
\end{equation}

\twocolumn

\bibliographystyle{IEEEtran}
\bibliography{bib/IEEEabrv,bib/references}

\end{document}